\newcommand{\eps}{\epsilon}
\newcommand{\Var}{\mathbf{Var}}
\def\A{{\bf A}}
\def\a{{\bf a}}
\def\B{{\bf B}}
\def\c{{\bf c}}
\def\E{{\bf E}}
\def\I{{\bf I}}
\def\R{{\bf R}}
\def\X{{\bf X}}
\def\Y{{\bf Y}}
\def\p{{\bf p}}
\def\S{{\bf S}}
\def\T{{\bf T}}
\def\x{{\bf x}}
\def\y{{\bf y}}
\def\V{{\bf V}}
\def\0{{\bf 0}}
\def\1{{\bf 1}}
\newtheorem{lemma}{Lemma}
\newtheorem{definition}{Definition}
\newtheorem{theorem}{Theorem}
\newtheorem{proposition}{Proposition}
\newtheorem{cor}{Corollary}
\newtheorem{fact}{Fact}
\def \fourthm{{\beta}}
\def\balpha{\bm{\alpha}}
\title{Spectrum Estimation from Samples}
\author{
Weihao Kong \\\emph{whkong@stanford.edu} \and Gregory Valiant \\ \emph{valiant@stanford.edu}
}
\begin{document}

\maketitle

\vspace{-1cm}\begin{abstract}We consider the problem of approximating the set of eigenvalues of the covariance matrix of a multivariate distribution (equivalently, the problem of approximating the ``population spectrum''), given access to samples drawn from the distribution.  The eigenvalues of the covariance of a distribution contain basic information about the distribution, including the presence or lack of structure in the distribution, the effective dimensionality of the distribution, and the applicability of higher-level machine learning and multivariate statistical tools.   We consider this fundamental recovery problem in the regime where the number of samples is comparable, or even sublinear in the dimensionality of the distribution in question.   First, we propose a theoretically optimal and computationally efficient algorithm for recovering the moments of the eigenvalues \iffalse ---the \emph{Schatten} $p$-norms \fi of the population covariance matrix.  We then leverage this accurate moment recovery, via a Wasserstein distance argument, to show that the vector of eigenvalues can be accurately recovered.  We provide finite--sample bounds on the expected error of the recovered eigenvalues, which imply that our estimator is asymptotically consistent as the dimensionality of the distribution and sample size tend towards infinity, even in the sublinear sample regime where the ratio of the sample size to the dimensionality tends to zero.   In addition to our theoretical results, we show that our approach performs well in practice for a broad range of distributions and sample sizes.
\end{abstract}

\section{Introduction}
One of the most insightful properties of a multivariate distribution (or dataset) is the vector of eigenvalues of the covariance of the distribution or dataset.  This vector of eigenvalues---the ``spectrum''---contains important information about the structure and geometry of the distribution.   Indeed, the first step in understanding many high-dimensional distributions is to compute the eigenvalues of the covariance of the data, often with the aim of understanding whether there exist lower dimensional subspaces that accurately capture the majority of the structure of the high-dimensional distribution (for example, as a first step in performing Principal Component Analysis).      \iffalse These eigenvalues also have a number of domain specific interpretations, including to the minimum risk of financial portfolios~\cite{elton2009modern},   [risk/finance, measurement noise, etc....] \fi

\iffalse ; in other settings the spectrum indicates the nature of measurement noise in the data.  \fi

Given independent samples drawn from a multivariate distribution over $\mathbb{R}^d$, when can this vector of eigenvalues of the (distribution/``population'') covariance be accurately computed?  In the regime in which the number of samples, $n$, is significantly larger than the dimension $d$, the empirical covariance matrix of the samples will be an accurate approximation of the true distribution covariance (assuming some modest moment bounds), and hence the empirical spectrum will accurately reflect the true population spectrum.   In the linear or sublinear regime in which $n$ is comparable to, or significantly smaller than $d$, the empirical covariance will be significantly different from the population covariance of the distribution.  Both the eigenvalues, and corresponding eigenvectors (principal components) of this empirical covariance matrix may be misleading.  (See Figure~\ref{fig1} for an illustration of this fact.)  The basic question we consider and answer affirmatively is: \emph{In this linear or sublinear sample regime in which the eigenvalues of the empirical covariance are misleading, is it possible to recover accurate estimates of the eigenvalues of the underlying population covariance?}  

\begin{figure}[h]
 \centering 
 \includegraphics[width=0.9\textwidth]{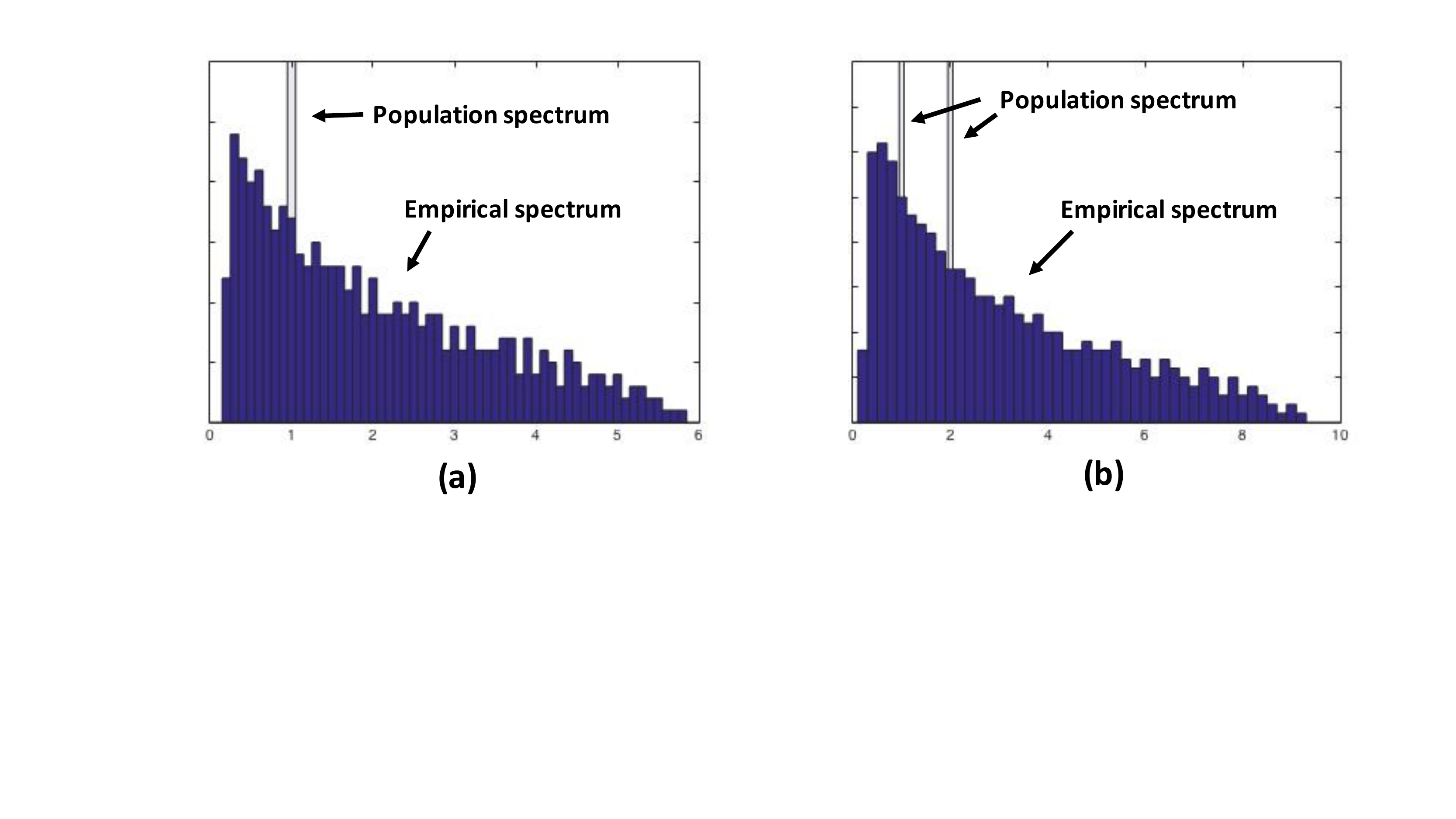}
\vspace{-3.3cm}\caption{\small{Plots of the eigenvalues of the empirical covariance, corresponding to $n=500$ samples from: (a) the $d=1000$ dimensional Gaussian with identity covariance, and (b) the $d=1000$ dimensional ``2-spike" Gaussian distribution whose covariance has $500$ eigenvalues with value 1, and $500$ eigenvalues of value 2.  Note that the empirical spectra are poor approximations of the true population spectra.   To what extent can the eigenvalues of the true distribution (the population spectrum) be accurately recovered from the samples, particularly in the ``sublinear'' data regime where $n \ll d$? \label{fig1}}}.
\end{figure}

This question of understanding the relationship between the empirical spectrum and the population spectrum of the underlying distribution has a long history of study, both from the perspective of characterizing the empirical spectrum, and with the goal of correcting for its biases.  In the former vein, the seminal work of Anderson~\cite{anderson1963asymptotic} considered the joint distribution of the empirical eigenvalues in the asymptotic regime as $n$ tends towards infinity, for fixed dimension, $d$.  The work of Marcenko and Pastur~\cite{marvcenko1967distribution,silverstein1995strong} and more recent advances in random matrix theory have enabled analysis of the empirical spectrum, particularly in the asymptotic regime where the dimension and sample size scale linearly with each other (see e.g. Bai and Silverstein's recent book~\cite{bai2010spectral}).  We provide a more detailed discussion of the relationship between this characterization of the empirical spectrum and the problem of recovering the population spectrum in Section~\ref{sec:relatedwork}.

In the latter vein, works have considered both the end objective of recovering the population spectrum, as well as the objective of estimating the population covariance matrix. 
%Due to the fact that the empirical spectrum is more spread out than the population spectrum, one can hope to shrink the larger sample eigenvalues and increase the smaller ones to obtain a better estimate of the population covariance. 
In their seminal work~\cite{james1961estimation}, James and Stein's proposed a shrinkage estimator for covariance estimation which uses the empirical eigenvectors, but ``shrinks'' the empirical eigenvalues to reduce the overall error due to the differences between the empirical and population spectra.  Takemura~\cite{takemura1984orthogonally},  and Dey and Srinivasan~\cite{dey1985estimation} extended this work of James and Stein, obtaining orthogonally invariant minimax covariance estimators.  There are many other approaches to eigenvalue shrinkage in this early line of work, e.g.~\cite{efron1976multivariate,stein1975estimation,stein1977lectures,haff1980empirical, haff1979identity}. 
More recently, there has been a significant effort to develop optimal covariance estimators in the asymptotic regime where both $n$ and $d$ tend to infinity.  This includes work of Ledoit and Wolf~\cite{ledoit2004well,ledoit2011nonlinear,ledoit2011eigenvectors}, Sch{\"a}fer and Strimmer~\cite{schafer2005shrinkage}, and more recent work of Donoho et al.~\cite{donoho2013optimal} who considered shrinkage estimators in the spiked covariance model (when all population eigenvalues take a constant number of different values). % More recent work of Ledoit and Wolf~\cite{ledoit2011nonlinear,ledoit2011eigenvectors} sheds some light on the connections between the problems of developing optimal shrinkage estimators and recovering the population spectrum:  they showed that if one fixes the set of eigenvectors of the estimated covariance matrix, the set of eigenvalues to scale these vectors by to achieve optimal covariance estimation with respect to the Frobenius norm is \emph{not} the true population spectrum, but the variances of the true distribution in the directions of these basis vectors.   

While both the problems of covariance estimation and spectrum estimation face the common challenge that the empirical spectrum might differ significantly from the population spectrum, the problems are different.  It is not clear whether optimal estimators for one of the problems can be leveraged to yield optimal or near-optimal estimators for the other problem.   After formally defining the specific problem that we tackle---estimating the population spectrum in the linear and sublinear data regimes---in Section~\ref{sec:relatedwork}, we provide a technical discussion of the more modern related work on spectrum estimation, beginning with the seminal work of Karoui~\cite{karoui2008} and Burda~\cite{burda2004signal,burda2005spectral}.

%In the latter vein, with the goal of estimating a covariance matrix, the early independent work of , and work of Efron and Morris~\cite{efron1976multivariate} proposed an estimator for the covariance matrix that uses the empirical eigenvectors, but shrinks the empirical eigenvalues to reduce the overall error due to the differences between the empirical and population spectra.  These works led to a many more recent variants of these shrinkage based covariance estimators, tailored both to specific parameter regimes of $n$ and $d$, as well as to special subclasses of population spectra, including $k$-spike models (see e.g.~\cite{schafer2005shrinkage,donoho2013optimal}).   

\subsection{Setup and Definition}
We focus on the general setting where the multivariate distribution over $\mathbb{R}^d$ in question is defined by a real-valued distribution $X$, with zero mean, variance 1, and fourth moment $\fourthm$, and a real $d \times d$ matrix $\S$.   A sample of $n$ vectors, viewed as a $n \times d$ data matrix $\Y$ consisting of $n$ vectors drawn independently from the distribution corresponding to the pair $(X,\S)$ is given by $\Y = \X \S$ where $\X \in \mathbb{R}^{n \times d}$ has i.i.d entries drawn according to distribution $X$.   Note that this setting encompasses the case where the data is drawn from the uniform distribution over a $d$-dimensional unit cube, and the case of a multivariate Gaussian (corresponding to the distribution $X$ being the standard Gaussian and the covariance of the corresponding multivariate Gaussian given by $\S^T \S$).

Throughout, we denote the corresponding \emph{population covariance} matrix $\Sigma = \S^T \S$, and its eigenvalues by $\bm{\lambda} = \lambda_1,\ldots,\lambda_d,$  with $\lambda_1 \le \lambda_2 \le \ldots \le \lambda_d.$   Our objective will be to recover an accurate approximation to this sorted vector of eigenvalues, $\bm{\lambda},$ given a data matrix $\Y$ as defined above.  It is also convenient to regard the vector of eigenvalues as a distribution over $\mathbb{R}$, consisting of $d$ equally-weighted point masses at locations $\lambda_1,\ldots,\lambda_d$; we refer to this distribution as the \emph{population spectral distribution} $D_{\Sigma}$.   We note that the task of learning the sorted vector $\bm{\lambda}$ in $L_1$ distance is closely related to the task of learning the spectral distribution in \emph{Wasserstein} distance (i.e. ``earthmover distance''):  the $L_1$ distance between two sorted vectors of length $d$ is exactly $d$ times the Wasserstein-1 distance between the corresponding point-mass distributions.  Similarly, given a distribution, $Q$, that is close to the true spectral distribution $D_{\Sigma}$ in Wasserstein distance, the length $d$ vector whose $i$th element is given by the $i$th $(d+1)$--quantile\footnote{For $i =1,\ldots,d,$ the $i$th $(d+1)$--quantile of a distribution $Q$ is defined to be the minimum value, $x$, with the property that the cumulative distribution function of $Q$ at $x$ is at least $i/(d+1)$.} of $Q$ will be close, in $L_1$ distance, to the sorted vector of population eigenvalues.

\iffalse
Formally, the following theorem characterizes this asymptotic behavior of the empirical spectral distribution:
\begin{theorem}[Marcenko-Pastur Law]
Fix a limiting population spectral distribution $D_{\infty},$ and a family of $d\times d$ matrices, $S_d$ indexed by $d =1,2,\ldots$ such that the spectral distribution of $S_d^T S_d$ weakly converges to $D_{\infty}$ as $d\rightarrow \infty.$

When $n,d \rightarrow \infty$, with $d/n \righarrow \gamma$ for some fixed $\gamma \in (0,\infty),$ the empirical spectral distribution, $Q_d$ of an $n \times d$ data matrix $\Y = \X \S_d$ where $\X \in \mathbb{R}^{n \times d}$ consists of entries drawn independently from distribution $v$ of mean 0, variance 1, and bounded fourth moment satisfies the following:
\begin{itemize}
\item $Q_d(z) \rightarrow Q_{\infty}(z)$ almost surely, for a fixed limiting empirical distribution $Q_{\infty}$.
\item The limiting empirical distribution, $Q_{\infty}$ satisfies 
\end{itemize}
\end{theorem}

derived from an $n \times d$ matrix  set of $n = \gamma n$ vectors 
\fi

\subsection{Related Work}\label{sec:relatedwork}

Before formally stating our main result of accurate spectrum estimation in the sublinear data regime, we discuss the context of our results and the connections to existing related work on spectrum estimation.\\

\noindent \textbf{Population and Sample Spectra: the Marcenko-Pastur Law.}
Given the setting described above, where we observe an $n \times d$ data matrix $\Y= \X \S$ with population covariance given by $\S^T \S$, in the regime in which the dimensionality of the data, $d$, is linear in the number of samples, $n$, much is known about the mapping from the population spectral distribution $D_{\Sigma}$, to the empirical spectral distribution of the samples.  Specifically, provided the ratio of the number of samples, $n$, to the dimensionality of the samples, $d$, is bounded below by some constant $\gamma > 0$, for sufficiently large $n,d$, the \emph{expected} empirical spectral distribution will be well approximated by a deterministic function of the population spectral distribution.  This deterministic function characterizing the correspondence between the empirical and population spectra is known as the Marcenko-Pastur Law, which is defined in terms of the Stieltjes transform (also referred to as the \emph{Cauchy transform}) of the spectral distribution~\cite{marvcenko1967distribution, silverstein1995strong}.  At least in the linear regime in which $n$ and $d$ scale together, it is not hard to show that the empirical spectral distribution will be close to the \emph{expected} empirical spectral distribution, and hence, asymptotically, the Marcenko-Pastur law will give an accurate characterization of the empirical spectral distribution.  We refer the reader to chapter 3 of Bai and Silverstein's book~\cite{bai2010spectral} for a thorough treatment of the Stieltjes transform and Marcenko-Pastur law. \\ 

\noindent \textbf{Inverting the Marcenko-Pastur Law.}
Perhaps the most natural approach to recovering the population spectrum from the data matrix $\Y$ is to attempt to invert the mapping between population spectrum and expected empirical spectrum given by the Marcenko-Pastur law.  The seminal work of Karoui~\cite{karoui2008} shows that this inversion can be represented via a linear program, and that, in the linear regime where $n/d \rightarrow \gamma \in (0,\infty)$, the reconstruction will be asymptotically consistent.  This work also demonstrates the practical viability of this approach on a series of synthetic data, for the setting $d=100,n=500.$   Building off the work of Karoui, Li et al.~\cite{li13} considered applying this approach to a parametric  model where the population spectral distribution has a constant (finite) support.   They also suggested extending the Marcenko-Pastur law to the real line, allowing the optimization to be conducted over the reals, which makes the  optimization procedure both easier to implement and more computationally efficient. Another approach to invert the Marcenko-Pastur law directly, proposed by Ledoit and Wolf~\cite{ledoit2011nonlinear,ledoit2013spectrum}, exploits the natural discreteness of the population spectrum in finite dimensions, and optimized over the Marcenko-Pastur law on the real line.  Simulations demonstrated that this approach yields significant improvements in the accuracy of the recovered spectrum, versus the earlier approach of Karoui~\cite{karoui2008}.

In a similar spirit, Mestre~\cite{Mestre08} considered the task of recovering the population spectrum in the setting where the population spectral distribution has a constant support (say of size $r$) and where the weights (but not the values) of each point mass are known a priori.   Mestre proposed an algorithm for recovering the values of the support of the population spectrum via inverting the Marcenko Pastur law, which is successful provided the empirical spectral distribution consists of $r$ clusters of values, corresponding to the $r$ point masses of the population spectrum.   Provided sufficient separation between the point masses of the population spectrum, in the linear regime where $n$ and $d$ have constant ratio, the requirements of the algorithm are provably satisfied.

There seem to be two limitations to this general approach of ``inverting'' the Marcenko-Pastur law.  The first is that the Marcenko-Pastur law, in general, is poorly equipped to deal with the sublinear-sample regime where $n \ll d$.  In this sublinear sample-size regime, for example, with $n = d^{2/3}$, even if the population spectrum has a specific limiting distribution, the expected empirical spectral distribution may not converge.  The second drawback is the difficulty of obtaining theoretical bounds on the accuracy of the recovered spectral distribution.  This seems mainly due to the difficulty of analyzing the robustness of inverting the Marcenko-Pastur law.  Specifically, given a dimension and sample size, it seems difficult to characterize the set of population spectral distributions that map, via the Marcenko-Pastur law, to a given neighborhood of a specific empirical spectral distribution.  This analysis is further complicated in the sublinear sample regime by the potential lack of concentration of the empirical spectrum.  \\

\noindent \textbf{Method of Moments.}
There have been several works that approach the spectrum recovery problem via the method of moments~\cite{Rao08,Bai10,Li14}.  Rao et al.~\cite{Rao08} observed the fact that the moments of the empirical spectral distribution have a limiting Gaussian distribution whose mean and variance are functions of the population spectrum. Given these moments distributions, they proposed a maximum likelihood approach to recover the parameters of the population spectrum in the setting where the spectrum consists of a constant number of point masses.   In a similar fashion, Bai et al~\cite{Bai10} directly estimate the moments of the population spectrum from the empirical moments, via a system of polynomial equations that is derived from the Marcenko-Pastur Law.  In the linear sample-size setting, Bai et al. show that their recovery is consistent.   We note that an immediate consequence of our accurate moment estimation (Theorem~\ref{thm:estmom}), together with the fact that a distribution supported on at most $r$ values is robustly determined by its first $2r-1$ moments (see e.g.~\cite{Bai10}), yields the fact that for such spectral distributions, consistent estimation is possible in the sublinear data regime as long as $\frac{n}{d^{1-\frac{2}{2r-1}}} \rightarrow \infty.$

The recent work of Li and Yao~\cite{Li14} essentially interpolates between the approach of Mestre~\cite{Mestre08} and Bai et al.~\cite{Bai10} to tackle the setting where the spectrum consists of a constant, $r$, number of point masses, but where the empirical spectrum can not be partitioned into $r$ corresponding clusters.  For these ``mixed'' clusters, they employ the moment-based approach of Bai et al., and show consistency in the linear sample-size regime.

Finally, the work of  Burda et al.~\cite{burda2004signal} from the physics community employs a method of moment approach to recovering specific classes of population spectra---for example the $3$-spike case.  This work is essentially a method-of-moments approach to inverting the Marcenko-Pastur law in specific cases, although this work seems to be unaware of the Marcenko-Pastur law and the related literature relating the empirical and population spectra.\\

\noindent \textbf{Sketching Bi-Linear Forms.}
In a recent work~\cite{li2014sketching}, Li et al. consider a seemingly unrelated problem, the problem of \emph{sketching} matrix norms.  Namely, suppose one wishes to approximate the $k$th moment of the spectrum of a $d\times d$ matrix, $\Sigma$, $||\Sigma||^k_k = \sum_{i=1}^d \lambda_i^k$,  but rather than working directly with the matrix $\Sigma,$ one only has access to a much smaller matrix that is a bilinear \emph{sketch} of $\Sigma$.  The question is how to design this sketch: for some $r,s \ll d$, can one design distributions $\mathcal{A}$ and $\mathcal{B}$ over $r \times d$ and $d \times s$ matrices, respectively, such that for any $\Sigma$, with high probability, given matrices $\A$ and $\B$ drawn respectively from $\mathcal{A}$ and $\mathcal{B}$, $||\Sigma||_k$ can be approximated based on the $r\times s$ matrix $\A \Sigma \B$?   The authors consider setting $\mathcal{A}$ and $\mathcal{B}$ to have i.i.d Gaussian entries, and show that such sketches are information theoretically optimal, to constant factors. 

The connection between sketching matrix norms and recovering moments of the population covariance is that the matrix $\Y \Y^T = \X \S \S^T \X^T$ can be viewed as a bilinear sketch of the matrix $\S \S^T$.   While $\S \S^T$ is not the population covariance matrix, it has the same eigenvalues (and hence same moments) as the population covariance.

The main difference between our work and the work of Li et al. is the conceptual difference in focus: we are focussed on recovering the population spectrum from limited data, they are focussed on defining small sketching matrices for matrix norms.  The approach to moment recovery of~\cite{li2014sketching} and our work both leverage a simple unbiased moment estimator (see Fact~\ref{fact:unbiased}), though our techniques differ in two ways: first, ~\cite{li2014sketching} is concerned with establishing the minimum sketch size from an information theoretic perspective, and the proposed algorithm is not computationally efficient; second, from a technical perspective, the proof of correctness in~\cite{li2014sketching} focusses on the Gaussian setting, and it seems difficult to extend their analysis techniques to the more general setting that we consider.  In particular, to prove the variance bound of the moment estimator in the more general setting,  we take a rather different route and employ a variant of the approach of Yin and Krishnaiah~\cite{yin1983limit}. \\ 

\noindent \textbf{Other Works on Spectrum Reconstruction.}
\iffalse
burda2005spectral does not even consider recovering the spectrum 

ledoit2004well   only considers linear regime, proposes estimating the population eigenvalues by linearly shrinking the sample eigenvalues.   as a heuristic that is superior to simply using the empirical eigenvalues on their own.
\fi
There are also several other works on the population spectrum recovery problem for specific classes of population covariance.  These include the paper of Bickel and Levina~\cite{bickel2008regularized} who obtain accurate reconstruction in the sublinear-sample setting for the class of population covariance matrices whose off-diagonal entries decrease quickly with their distance to the diagonal (for example, as in the class of Toeplitz matrices).  \iffalse \\

\noindent \textbf{Applications of Spectrum Reconstruction.}
Because the eigenvalues of a matrix express basic properties of the matrix, it is unsurprising that an understanding of the relationship between the empirical spectrum and the population spectrum has been leveraged for a variety of more general estimation tasks.  To give one example, there is a very long series of works on estimating the population covariance matrix itself by computing the empirical eigenvectors of the data, but ``shrinking'' the empirical eigenvalues  (see, e.g.~\cite{james1961estimation,haff1980empirical,schafer2005shrinkage,donoho2013optimal}). One exciting direction of future work would be to leverage accurate estimates of the true population spectrum to perform a more refined and robust shrinkage. \fi

\subsection{Summary of Approach and Results}

Our approach to recovering the population spectral distribution from a given data matrix is via the method of moments, and is motivated by the observation (also leveraged in~\cite{li2014sketching}) that there is a natural unbiased estimator for the $k$th moment of the population spectral distribution:

\begin{fact}\label{fact:unbiased}
Fix a list of $k$ distinct integers $\sigma = (\sigma_1,\ldots,\sigma_k)$ with $\sigma_i \in \{1,\ldots,n\}$.   Let $\Y = \X \S$ where $\X \in \mathbb{R}^{n \times d}$ consists of entries drawn i.i.d from a distribution of zero mean and variance 1, and $\S \in \R^{d \times d}$.  Letting $\A = \Y \Y^T$, we have $$\E \left[ \prod_{i=1}^k \A_{\sigma_i,\sigma_{(i \text{ mod } k) + 1}} \right] = \sum_{i=1}^d \lambda_i^k,$$ where the expectation is over the randomness of the entries of $\X$, and $\lambda_i$ is the $i$th eigenvalue of the population covariance matrix $\S^T \S.$
\end{fact}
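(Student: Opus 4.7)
\textbf{Proof plan for Fact~\ref{fact:unbiased}.}
The plan is to expand the cyclic product into a polynomial in the entries of $\X$, use independence plus mean zero/variance one to kill most index configurations, and recognize what survives as a trace.

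First, write $\A_{a,b} = (\Y\Y^T)_{a,b} = \sum_{j} Y_{a,j} Y_{b,j}$, and use $Y_{a,j}=\sum_l X_{a,l} S_{l,j}$ to obtain the quadratic form
\[
\A_{a,b} \;=\; \sum_{l,m} X_{a,l}\, X_{b,m}\, M_{l,m}, \qquad M := \S\S^T.
\]
Plugging this into $\prod_{i=1}^k \A_{\sigma_i,\sigma_{i+1}}$ (indices mod $k$) yields a sum over $2k$ column indices $l_1,m_1,\dots,l_k,m_k$ of products of $2k$ entries of $\X$ times $\prod_i M_{l_i,m_i}$.

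Next I would exploit that $\sigma_1,\dots,\sigma_k$ are \emph{distinct}: the only entries of $\X$ with row index $\sigma_i$ that appear in the product come from the two adjacent factors $\A_{\sigma_{i-1},\sigma_i}$ and $\A_{\sigma_i,\sigma_{i+1}}$, contributing $X_{\sigma_i,m_{i-1}}$ and $X_{\sigma_i,l_i}$. Since rows of $\X$ are independent and have mean zero, the expectation factors over rows as
\[
\EB\Big[\prod_{i=1}^k X_{\sigma_i,m_{i-1}} X_{\sigma_i,l_i}\Big] \;=\; \prod_{i=1}^k \EB[X_{\sigma_i,m_{i-1}} X_{\sigma_i,l_i}] \;=\; \prod_{i=1}^k \delta_{m_{i-1},l_i},
\]
using mean zero to kill the case $m_{i-1}\neq l_i$ and variance one in the equal case.

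These $k$ Kronecker deltas collapse the $2k$ sums to a single sum over $t_1,\dots,t_k$ (with $t_i := l_i = m_{i-1}$), giving
\[
\EB\Big[\prod_{i=1}^k \A_{\sigma_i,\sigma_{i+1}}\Big] \;=\; \sum_{t_1,\dots,t_k} \prod_{i=1}^k M_{t_i,t_{i+1}} \;=\; \tr(M^k) \;=\; \tr\big((\S\S^T)^k\big) \;=\; \tr\big((\S^T\S)^k\big) \;=\; \sum_{i=1}^d \lambda_i^k,
\]
using that $\S\S^T$ and $\S^T\S$ share the same nonzero eigenvalues. There is no real obstacle here; the only subtlety worth being careful about is the bookkeeping that each row $\sigma_i$ contributes exactly two entries (which relies on distinctness of the $\sigma_i$) and that the surviving index pattern wraps around into a trace. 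In particular, if the $\sigma_i$ were allowed to repeat, higher-order moments of $X$ would enter, and the cleanness of the identity would break.
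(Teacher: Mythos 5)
Your proposal is correct and follows essentially the same route as the paper's proof of Fact~\ref{thm:incyc}: expand the cyclic product in the entries of $\X$, use distinctness of the $\sigma_i$ together with mean zero and unit variance so that only configurations where each entry of $\X$ appears exactly twice survive, and collapse the resulting Kronecker deltas into $\tr\big((\S\S^T)^k\big)=\sum_i\lambda_i^k$. Your row-by-row factorization of the expectation is just a slightly more explicit bookkeeping of the same argument.
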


The above fact, whose simple proof is given in Section~\ref{sec:estmom}, suggests that a good algorithm for estimating the $k$th moment of the spectral distribution would be to compute the sum of the above quantity over \emph{all} sets $\sigma$ of distinct indices.  The naive algorithm for computing such a sum would take time $O(n^k)$ to evaluate, and it seems unlikely that a significantly faster algorithm exists.\footnote{The ability to efficiently compute this sum would imply an efficient algorithm for counting the number of $k$-cycles in a graph, which is NP-hard, for general $k$~\cite{alon1997finding}.}  Fortunately, as we show, there is a simple algorithm that computes the sum over all \emph{increasing} lists of $k$ indices;  additionally, such a sum results in an estimator with comparable variance to the computationally intractable estimator corresponding to the sum over all lists.  This algorithm, together with a careful analysis of the variance of the corresponding estimator, yields the following theorem:

\begin{theorem}[Efficient Moment Estimation]\label{thm:estmom}
%Suppose the data matrix $\Y\in R^{n\times p}$ can be written as $\X\S$, where $\X\in R^{n\times p}$ has i.i.d entries with mean $0$, variance $1$ and bounded fourth moment, $\S$ is a real $p$ by $p$ matrix. 
There is an algorithm that takes $\Y=\X \S$ and an integer $k \ge 1$ as input, runs in time $poly(n,d,k)$ and with probability at least $1-\delta$, outputs an estimate of $\frac{1}{d}\|\S^T\S\|_k^k$ (i.e. $\frac{1}{d}\sum_i \lambda_i^k$) with multiplicative error at most
$$
\frac{f(k)}{\sqrt{\delta}} \max( \frac{d^{k/2-1}}{n^{k/2}}, \frac{d^{\frac{1}{4}-\frac{1}{2k}}}{\sqrt{n}},\frac{1}{\sqrt{n}}),
$$
where the function $f(k)=2^{6k}k^{3k}\fourthm^{k/2}$.
\end{theorem}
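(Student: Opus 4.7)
The plan is to use the method of moments driven by Fact~\ref{fact:unbiased}. Setting $\A = \Y\Y^T$, I would output
$$
\hat m_k \;=\; \frac{1}{d}\binom{n}{k}^{-1}\sum_{1\le \sigma_1 < \sigma_2 < \cdots < \sigma_k \le n}\;\prod_{i=1}^{k} \A_{\sigma_i,\,\sigma_{(i \bmod k)+1}}.
$$
By Fact~\ref{fact:unbiased} each summand has expectation $\sum_i \lambda_i^k$, so $\hat m_k$ is unbiased for $\frac{1}{d}\|\S^T\S\|_k^k$. The key design choice is restricting to \emph{increasing} $k$-tuples rather than all $n!/(n-k)!$ ordered distinct ones: counting $k$-cycles precludes efficient summation over all distinct ordered tuples, but the cyclic symmetry of the product means the increasing restriction collapses each unordered $k$-set from $k$ terms to one, losing only a factor of $k$ in variance.

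To compute $\hat m_k$ in $\mathrm{poly}(n,d,k)$ time I would evaluate the inner sum by dynamic programming along the total order. Define
$$
T_\ell[a,b] \;=\; \sum_{a=\sigma_1<\sigma_2<\cdots<\sigma_\ell=b}\;\prod_{i=1}^{\ell-1}\A_{\sigma_i,\sigma_{i+1}},
$$
initialized $T_1[a,a]=1$ and updated by the recurrence $T_\ell[a,b]=\sum_{a<c<b}T_{\ell-1}[a,c]\,\A_{c,b}$. The inner sum of the estimator is then $\sum_{a<b}T_k[a,b]\,\A_{b,a}$, yielding an $O(kn^3)$ algorithm.

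The main obstacle, and the technical heart of the proof, is bounding $\Var(\hat m_k)$. Expanding the square produces a sum over pairs of increasing $k$-tuples $(\sigma,\tau)$, and substituting $\A_{i,j} = \sum_{p,q}X_{i,p}(\S\S^T)_{p,q}X_{j,q}$ turns each covariance into a sum over labelings of a pair of $k$-cycle diagrams by row/column indices of $\X$. Independence of the entries of $\X$ together with $\E[X]=0$ kills every labeling in which some $\X$-entry appears with multiplicity one, and each entry appearing with multiplicity four pays a factor of $\fourthm$. Following the Wishart-style diagrammatic moment combinatorics of Yin and Krishnaiah~\cite{yin1983limit}, I would classify surviving labelings by (i) the number $r$ of rows shared between $\sigma$ and $\tau$ and (ii) the pairing pattern of $\X$-column indices within each cycle. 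Each class contracts to a product of traces $\tr\bigl((\S^T\S)^{j}\bigr)$, which I can rebound in terms of $\|\S^T\S\|_k^k$ via H\"older/power-mean inequalities on the nonnegative eigenvalues to obtain a multiplicative bound.

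Summing these contributions across $r$, the dominant overlap structures give three qualitatively different standard-deviation scalings that together match the three terms in the stated maximum. Absorbing all combinatorial prefactors—bounded by the number of cycle labelings times the $\fourthm^{k/2}$ fourth-moment cost—into $f(k)=2^{6k}k^{3k}\fourthm^{k/2}$ yields exactly the claimed variance bound, and a Chebyshev inequality then converts it into the $1/\sqrt{\delta}$ probability guarantee. The delicate step, most prone to loose constants, is the intermediate-overlap regime producing the exponent $d^{1/4-1/(2k)}/\sqrt{n}$: obtaining this sharp rate rather than a crude $1/\sqrt n$ or $(d/n)^{k/2-1}$ bound requires the careful trace/Hölder bookkeeping above together with exploiting the polynomial structure of the increasing-tuple count.
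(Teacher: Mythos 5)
Your proposal follows essentially the same route as the paper: the same unbiased increasing-cycle estimator, the same dynamic-programming recursion for computing it (the paper packages your $T_\ell[a,b]$ recurrence as powers of the strictly upper-triangular part of $\Y\Y^T$), the same Yin--Krishnaiah-style classification of surviving index labelings contracted to trace products and bounded via H\"older, and the same Chebyshev finish. The variance bound you outline is exactly the content of the paper's Proposition~\ref{variancebound} (proved in Appendix~\ref{ap:varb}), so the only caveat is that the combinatorial heart of the argument remains a plan rather than a completed proof in your write-up.
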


Restated slightly, the above theorem shows that the $k$th moment of the population spectrum can be accurately computed in the sublinear data regime, provided $n \ge c_k d^{1-\frac{2}{k}},$ for some constant $c_k$ dependent on $k$.  In the asymptotic regime as $d \rightarrow \infty$, this theorem implies that the multiplicative error of the estimate of the $k$th moment goes to zero provided $\frac{n}{d^{1-\frac{2}{k}}} \rightarrow \infty$.   This moment recovery is useful in its own right, as these moments of the spectral distribution (also referred to as the \emph{Schatten} matrix norms of the population covariance) provide insights into the population distribution (see, e.g.~\cite{hardt2012simple} and the survey~\cite{mahoney2011randomized}).

The recovery guarantees of Theorem~\ref{thm:estmom} are optimal to constant factors: to accurately estimate the $k$th moment of the population spectrum to within a constant multiplicative error, the sample size $n$ must scale at least as $d^{1-2/k}$, as is formalized by the following lower bound, which is a corollary to the lower bound in~\cite{li2014sketching}.  

\begin{cor}\label{cor:lb}
\iffalse For every constant integer $k>1$, and sufficiently large $d$, there is a distribution $\mathcal{S}$ over $d \times d$ matrices, such that for a matrix $S$ drawn from $\mathcal{S}$, it is information theoretically impossible to approximate $ \fi
Fix a constant integer $k$, and suppose there exists an algorithm that, for any $d \times d$ matrix $\S$, when given an $n \times d$ data matrix $\Y = \X\S$ with entries of $\X$ chosen i.i.d. as above, outputs an estimate $y$ satisfying the following with probability at least $3/4$: $0.9\|\S\S^T\|^k_k\leq y\leq 1.1\|\S\S^T\|^k_k$; then $n \ge c d^{1-2/k},$ for an absolute constant $c$ independent of $n,d,$ and $k$.
\end{cor}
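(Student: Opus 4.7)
The plan is to prove the corollary by reducing it to the bilinear sketching lower bound of Li et al.~\cite{li2014sketching}. The key structural observation is that $\Y \Y^T = \X (\S \S^T) \X^T = \X \Sigma \X^T$, where $\Sigma := \S \S^T$, is exactly an $n \times n$ bilinear Gaussian sketch of the PSD matrix $\Sigma$ in the sense of Li et al., with both left and right sketching matrices equal to $\X$; moreover $\|\Sigma\|_k^k = \|\S\S^T\|_k^k$ since the two matrices share their nonzero eigenvalues. If the corollary's algorithm depended on $\Y$ only through $\Y\Y^T$, the bound $n \geq c\, d^{1-2/k}$ would follow immediately from Li et al.'s $r s = \Omega(d^{2-4/k})$ lower bound applied with $r = s = n$.

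To reduce the general case (where the algorithm sees all of $\Y$) to this one, I would use a symmetrization / rotational-invariance argument. Given any algorithm $\mathcal{A}(\Y)$, define the randomized algorithm $\tilde{\mathcal{A}}(\Y) := \mathcal{A}(U \Y R^T)$, where $U \in O(n)$ and $R \in O(d)$ are independent Haar-random orthogonal matrices drawn as internal randomness. By the $O(n)$-invariance of the Gaussian distribution of $\X$, and by taking the hard instance in Li et al.'s lower bound to be $O(d)$-conjugation-invariant in distribution (which is without loss of generality, since conjugating a hard $\Sigma'$ by a Haar-random $Q$ preserves $\|\Sigma'\|_k^k$ and hence gives a hard instance of equal difficulty), the joint law of $(\Sigma, U \Y R^T)$ matches that of $(\Sigma, \Y)$, so $\tilde{\mathcal{A}}$ inherits the success probability of $\mathcal{A}$. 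Moreover, for any fixed $\Y$, the distribution of $U \Y R^T$ over $(U, R)$ depends on $\Y$ only through its singular values---equivalently, the spectrum of $\Y \Y^T$---since left and right multiplications by Haar-random orthogonal matrices preserve the singular values while randomizing the singular vectors.

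Consequently $\tilde{\mathcal{A}}$ is (in distribution) a function of $\Y \Y^T$ alone, and Li et al.'s bilinear sketching lower bound applies to it directly, yielding $n^2 \geq c' d^{2 - 4/k}$ and hence $n \geq c\, d^{1 - 2/k}$ for an absolute constant $c$. The main obstacle is the symmetrization step: one must verify that Li et al.'s hard distribution can be $O(d)$-averaged without weakening the lower bound (which follows from the orthogonal invariance of the Schatten-$k$ norm together with convexity of the ``failure probability'' over the averaging measure), and that replacing $\mathcal{A}$ by $\tilde{\mathcal{A}}$ preserves the $(0.9, 1.1)$-multiplicative guarantee with probability at least $3/4$ (which holds because $\tilde{\mathcal{A}}$'s success on the symmetrized hard instance is exactly the $\mathcal{A}$-success probability on the original instance, under a measure-preserving group action). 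Both are essentially bookkeeping; the rest of the argument is the observation that the natural bilinear-sketch interpretation of $\Y \Y^T$ makes the sample-complexity lower bound a direct consequence of the sketch-size lower bound.
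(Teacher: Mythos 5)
Your overall strategy---reducing to the sketching lower bound of Li et al.---is the same as the paper's, but your instantiation is genuinely different and considerably more elaborate than what the paper does. The paper (see the remark following Proposition~\ref{prop:conerr}) simply observes that the data matrix $\Y = \X\S = \X\cdot\S\cdot \I_d$ \emph{is already} a bilinear sketch of the matrix $\S$ itself, with left sketching matrix $\X$ of $r=n$ rows and right sketching matrix $\I_d$ with $s=d$ columns, and that $\|\S\|_{2k}^{2k} = \|\S\S^T\|_k^k$. Applying Li et al.'s Theorem 3.2 with $p=2k$ then gives $ns = nd \ge c\, d^{2-4/(2k)}$, i.e.\ $n \ge c\, d^{1-2/k}$, with no further work: since the algorithm's input is literally the sketch, there is nothing to symmetrize, and since $\S$ ranges over all $d\times d$ matrices, the hard instances of Li et al.\ apply verbatim.

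Your route---treating $\Y\Y^T = \X(\S\S^T)\X^T$ as a two-sided sketch of $\Sigma=\S\S^T$ with $r=s=n$---introduces two obligations beyond the symmetrization step you already identify. First, in your sketch the left and right sketching matrices are $\X$ and $\X^T$, i.e.\ perfectly correlated rather than independent; you must check that Li et al.'s Theorem 3.2 is stated for arbitrary (jointly distributed) sketching matrices rather than for independent Gaussian ones, or the bound $rs \ge c\,d^{2-4/k}$ does not transfer. Second, your reduction only ever presents the algorithm with PSD matrices of the form $\S\S^T$, so you need Li et al.'s hard distribution for the Schatten-$k$ problem to be supported on (or reducible to) PSD matrices; this is an additional verification that the paper's choice of $A=\S$ (unrestricted) sidesteps entirely. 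Your symmetrization argument itself is sound for Gaussian $\X$ and an orthogonally invariant hard distribution, and it is a nice observation that one may assume WLOG the estimator depends only on the singular values of $\Y$---but it is unnecessary here, and the two items above are real gaps to close before your version of the reduction is complete.
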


Given accurate estimates of the low-order moments of the population spectral distribution, an accurate approximation of the list of population eigenvalues can be recovered by first solving the moment inverse problem---namely finding a distribution $D$ whose moments are close to the recovered moments, for example, via linear programming---and then returning the vector of length $d$ whose $i$th element is given by the $i$th $(d+1)$-quantile of distribution $D$.  Altogether, this yields a practically viable polynomial-time algorithm with the following theoretical guarantees for recovering the population spectrum:

\begin{theorem}[Main Theorem]\label{thm:main}
Consider an $n \times d$ data matrix $\Y = \X\S$, where $\X\in R^{n\times d}$ has i.i.d entries with mean $0$, variance $1$, and fourth moment $\fourthm$, and $\S$ is a real $d \times d$ matrix s.t. the eigenvalues of the population covariance $\Sigma= \S^T \S$, $\bm{\lambda} = \lambda_1,\ldots, \lambda_d,$ are upper bounded by a constant $b$. There is an algorithm that takes $\Y$ as input and for any integer $k\geq 1$ runs in time $poly(n,d,k)$ and outputs $\bm{\hat{\lambda}} = \hat{\lambda}_1,\ldots,\hat{\lambda}_d$ with expected $L_1$ error satisfying: 
$$
\E\left [\sum_{i=1}^d |\lambda_i-\hat{\lambda_i}|\right ] \le b d\left (f(k) (\frac{d^{k/2-1}}{n^{k/2}}+\frac{1+d^{\frac{1}{4}-\frac{1}{2k}}}{n^{1/2}}) +\frac{C}{k}+\frac{1}{d} \right ),
$$
 where $C$ is an absolute constant and $f(k)=C' (6 k)^{3k+1}\fourthm^{k/2}$ for an absolute constant $C'.$   
\end{theorem}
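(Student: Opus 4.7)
The plan is to reduce spectrum recovery to four linked sub-problems: (i) estimate the first $k$ moments of the population spectral distribution $D_{\Sigma}$ via Theorem~\ref{thm:estmom}; (ii) solve a linear program to produce a candidate distribution $Q$ on $[0,b]$ whose first $k$ moments match the estimates; (iii) convert moment-closeness into Wasserstein-1 closeness between $Q$ and $D_{\Sigma}$ using polynomial approximation of $1$-Lipschitz functions on $[0,b]$; and (iv) return the $(d+1)$-quantiles of $Q$ as $\hat\lambda_1,\ldots,\hat\lambda_d$, converting $W_1$ to $L_1$ on the sorted vector as in Section~1.2.

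For step~(i), let $m_j = \frac{1}{d}\sum_i \lambda_i^j$ and let $\hat m_j$ be the estimator of Theorem~\ref{thm:estmom} for $j=1,\ldots,k$. Because that theorem's tail bound is obtained from a variance estimate via Chebyshev, the same variance bound yields the $L^1$ bound $\EB|\hat m_j - m_j| \le b^j \, f(j) \, R_j$ with $R_j = \max\{d^{j/2-1}/n^{j/2},\, d^{1/4-1/(2j)}/\sqrt n,\, 1/\sqrt n\}$ and $f(j)=2^{6j}j^{3j}\fourthm^{j/2}$. For step~(ii), discretize $[0,b]$ on a fine grid and solve the LP over nonnegative atomic weights summing to $1$ that minimizes $\max_{j\le k}|q_j - \hat m_j|/b^j$; feasibility of $D_{\Sigma}$ itself guarantees that the optimum is controlled by the total moment-estimation error from step~(i), up to a discretization slack of $O(1/d)$.

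For step~(iii), use the Kantorovich--Rubinstein duality $W_1(Q,D_{\Sigma}) = \sup_{f\,1\text{-Lip}} |\EB_Q f - \EB_{D_\Sigma} f|$. By Jackson's theorem, every $1$-Lipschitz $f:[0,b]\to\RB$ admits a degree-$k$ polynomial $p(x)=\sum_{j=0}^k c_j x^j$ with $\|f-p\|_\infty \le C b/k$. Transporting the classical coefficient bound for Chebyshev polynomials from $[-1,1]$ to $[0,b]$ yields $|c_j| \le b\cdot (C'/b)^j\cdot\mathrm{poly}(k)$, so that
\[
|\EB_Q f - \EB_{D_\Sigma} f| \;\le\; \frac{2Cb}{k} + \sum_{j=1}^k |c_j|\,|q_j-m_j|,
\]
and expanding $|q_j-m_j|\le |q_j-\hat m_j|+|\hat m_j-m_j|$ together with step~(i) produces, in expectation, $\EB[W_1(Q,D_{\Sigma})] \le C b/k + b\,f(k)\bigl(\tfrac{d^{k/2-1}}{n^{k/2}}+\tfrac{1+d^{1/4-1/(2k)}}{\sqrt n}\bigr)$, where the polynomial-in-$k$ prefactors have been absorbed into a redefinition of $f(k)$ of the form $C'(6k)^{3k+1}\fourthm^{k/2}$. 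Finally, step~(iv) invokes the quantile-to-sorted-vector passage sketched in Section~1.2: the $L_1$ distance between $\bm\lambda$ and the vector of $(d+1)$-quantiles of $Q$ is at most $d\cdot W_1(Q,D_{\Sigma}) + O(b)$, the $O(b)$ accounting for the boundary offset and producing the final additive $bd/d = b$ term (i.e.\ the $1/d$ summand in the theorem).

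The main obstacle is the polynomial-approximation step~(iii): one has to keep a sharp, explicit bound on the monomial coefficients $c_j$ of the Jackson approximant, because these coefficients multiply the already large moment-estimator constants $f(j)$, and a loose bound would blow up the aggregate $f(k)$ past the stated $(6k)^{3k+1}\fourthm^{k/2}$. A secondary delicate point, already exposed by the theorem's free parameter $k$, is that $k$ must be chosen to balance the shrinking Jackson bias $b/k$ against the $k$-exponential coefficient blow-up multiplied by the moment-error rate $R_k$; the statement is written so that a user can tune $k$ to the sample-to-dimension ratio in a given sublinear regime.
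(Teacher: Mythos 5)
Your decomposition is exactly the paper's: moment estimation via Theorem~\ref{thm:estmom} (with the tail bound converted back to an $L^1$ bound through the underlying variance estimate, as in the paper's use of Proposition~\ref{variancebound}), a linear program over a fine grid whose optimum is controlled by exhibiting the rounded true spectral distribution as a feasible point, a moments-to-Wasserstein conversion, and the $(d+1)$-quantile rounding paying the additive $b/d$ per coordinate via Facts~\ref{fact:emdvec} and~\ref{fact:emdquant}. The one place you take a genuinely different route is the moments-to-Wasserstein step, i.e.\ Proposition~\ref{thm:redb}: you invoke Jackson's theorem as a black box and then bound the monomial coefficients of the approximant by the classical Chebyshev coefficient bound for polynomials that are uniformly bounded on an interval, whereas the paper \emph{constructs} the approximant explicitly --- convolving $f$ with a scaled Fourier transform of a bump function to control the $(k{+}1)$st derivative, interpolating at Chebyshev nodes, and bounding the coefficient vector through the singular values of the inverse Chebyshev--Vandermonde matrix. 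Your route is legitimate and shorter (the approximant is bounded by $\|f\|_\infty+O(b/k)$, so its coefficients after rescaling to $[-1,1]$ are at most $O(b)\cdot 3^k$, matching the paper's $g(k)=C'3^k$), at the cost of importing Jackson's theorem with the sharp $O(1/k)$ rate rather than deriving it; the paper's construction is self-contained and is what lets the authors pinpoint why a Gaussian mollifier would only give $O(1/\sqrt{k})$. The remaining cosmetic difference --- your LP uses a weighted $\ell_\infty$ moment objective where the paper uses $\ell_1$ --- does not affect the argument.
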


This theorem implies that our population spectrum estimator is asymptotically consistent in terms of Wasserstein distance, even in the \emph{sublinear} sample-size regime where $\frac{d}{n} \rightarrow \infty$:

\begin{cor}[Consistent Sublinear Sample-Size Estimation]\label{cor:consistency}
Fix a limiting spectral distribution $p_\infty$ that is absolutely bounded by a constant, and a sequence of absolutely bounded population spectral distributions, $p_1,p_2,\ldots$ and corresponding population covariance matrices $\Sigma_1,\Sigma_2,\ldots,$ such that $p_d$ is the spectral distribution of $\Sigma_d$, and $p_d$ converges weakly to $p_\infty$ as $d\rightarrow \infty$.  Given a sequence of data matrices, with the $d$th matrix $\Y_d = \X_d\S_d$ being $n_d \times d$ with $\S_d^T\S_d = \Sigma_d$ and entries of $\X_d$ chosen i.i.d with zero mean, variance 1, and bounded fourth moment, then our algorithm outputs a distribution $q_d$ on input $\Y_d$ such that  $q_d$ converges weakly to $p_{\infty}$, provided $\frac{n_d}{d^{1-\eps}} \rightarrow \infty$ for every positive constant $\epsilon.$  (For example, taking $n_d = \frac{d}{\log d}$ yields asymptotically consistent sublinear sample spectrum estimation.)
\iffalse
$\lim_{d \rightarrow \infty} p_d \rightarrow p_\infty$, weakly.  with $p_d$ consisting of a $d$ equally weighted point masses at (non-negative values).    
Consider an $n \times d$ data matrix $\Y_d = \X_d\S_d$, where $\X_d\in R^{n\times d}$ has i.i.d entries with mean $0$, variance $1$, and fourth moment $\fourthm$, and $\S_d$ is a real $d \times d$ matrix s.t. the eigenvalues of the population covariance $\Sigma_d= \S_d^T \S_d$, $\bm{\lambda} = \lambda_1,\ldots, \lambda_d,$ are upper bounded by a constant $b$. Assume the spectrum distribution of $\Sigma_d$ converge weakly to a limit $p_\infty$. When $d\to \infty$, $n=\frac{d}{\log d}$, there is an algorithm that takes $\Y_d$ as input and runs in time $poly(n,d)$, whose output is consistent to the spectrum distribution of $p_\infty$ under Wasserstein metric.
\fi
\end{cor}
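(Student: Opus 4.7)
The plan is to invoke Theorem~\ref{thm:main} with $k=k_d$ chosen to grow slowly with $d$, translate the resulting $L_1$ guarantee on sorted eigenvalues into a Wasserstein bound between distributions, and then combine with the assumed weak convergence $p_d\to p_\infty$ via the triangle inequality for $W_1$. Concretely, the algorithm of Theorem~\ref{thm:main} returns sorted estimates $\hat\lambda_1\le\cdots\le\hat\lambda_d$ (as the $(d+1)$--quantiles of the recovered distribution), and I define $q_d$ to be the uniform atomic measure on these $d$ points. Since $p_d$ and $q_d$ are both uniform measures on $d$ reals, the Wasserstein-1 distance equals $\tfrac{1}{d}$ times the $L_1$ distance between their sorted support vectors:
$$ W_1(q_d,p_d)\;=\;\frac{1}{d}\sum_{i=1}^d|\lambda_i-\hat\lambda_i|. $$

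To show $\E[W_1(q_d,p_d)]\to 0$, I rewrite the hypothesis ``$n_d/d^{1-\epsilon}\to\infty$ for every $\epsilon>0$'' as $n_d=d^{1-\omega_d}$ for some sequence $\omega_d\to 0$, and set
$k_d:=\min\bigl(\lfloor\log\log d\rfloor,\lfloor \omega_d^{-1/2}\rfloor\bigr)$,
so $k_d\to\infty$. Dividing the bound of Theorem~\ref{thm:main} by $d$ gives
$$ \E[W_1(q_d,p_d)]\;\le\; b\Bigl( f(k_d)\bigl(\tfrac{d^{k_d/2-1}}{n_d^{k_d/2}}+\tfrac{1+d^{1/4-1/(2k_d)}}{n_d^{1/2}}\bigr)+\tfrac{C}{k_d}+\tfrac{1}{d}\Bigr). $$
I would then verify each term vanishes. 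Because $k_d\le\log\log d$, the super-exponential factor satisfies $\log f(k_d)=O(k_d\log k_d)=o(\log d)$, so $f(k_d)=d^{o(1)}$. Because $k_d\le\omega_d^{-1/2}$, we have $k_d\omega_d\le\sqrt{\omega_d}\to 0$, so the leading term obeys $f(k_d)\cdot d^{k_d/2-1}/n_d^{k_d/2}=d^{-1+k_d\omega_d/2+o(1)}\to 0$. The two remaining data-dependent terms $f(k_d)\cdot d^{1/4-1/(2k_d)}/n_d^{1/2}$ and $f(k_d)/n_d^{1/2}$ both decay at least like $d^{-1/4+o(1)}$, while $C/k_d$ and $1/d$ trivially tend to $0$. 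Thus $\E[W_1(q_d,p_d)]\to 0$, and by Markov, $W_1(q_d,p_d)\to 0$ in probability.

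For the final step, since $p_\infty$ is supported on the compact interval $[0,b]$ and $p_d\to p_\infty$ weakly, a standard fact gives $W_1(p_d,p_\infty)\to 0$ (weak convergence on a bounded interval is equivalent to $W_1$ convergence). The triangle inequality $W_1(q_d,p_\infty)\le W_1(q_d,p_d)+W_1(p_d,p_\infty)$ then yields $W_1(q_d,p_\infty)\to 0$ in probability, which in turn implies $q_d\to p_\infty$ weakly in probability. The main obstacle is the balancing act in choosing $k_d$: one must take $k_d\to\infty$ to drive the moment-inversion bias $C/k_d$ to zero, yet keep both the super-exponential factor $f(k_d)=C'(6k_d)^{3k_d+1}\fourthm^{k_d/2}$ and the power $d^{k_d\omega_d/2}$ sub-polynomial in $d$, given only the sub-polynomial sample-size margin $n_d=d^{1-o(1)}$. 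The joint choice $k_d=\min(\log\log d,\omega_d^{-1/2})$ threads this needle by tying $k_d$ simultaneously to the growth rate of $d$ and to the unknown convergence rate $\omega_d=1-\log n_d/\log d$.
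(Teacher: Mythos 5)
Your proposal is correct and follows essentially the same route as the paper: apply Theorem~\ref{thm:main} with a slowly growing $k_d$ (the paper takes $k_d=\lfloor 1/\alpha(d)\rfloor$ with $\alpha(d)\ge 1/\log\log d$, which is the same balancing act as your $k_d=\min(\log\log d,\omega_d^{-1/2})$), check term by term that the Wasserstein bound vanishes, and conclude via Markov and the equivalence of weak and $W_1$ convergence on a compact interval. The only cosmetic caveat is that $\omega_d^{-1/2}$ should be interpreted as $+\infty$ when $\omega_d\le 0$ (i.e.\ when $n_d\ge d$), which your $\min$ with $\log\log d$ already handles.
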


The proof of Theorem~\ref{thm:main} follows from combining Theorem~\ref{thm:estmom} with the following proposition that bounds the Wasserstein distance between two distributions in terms of their discrepancies in low-order moments:
\begin{proposition}\label{thm:redb}
Given two distributions with respective density functions $p, q$ supported on $[-1,1]$ whose first $k$ moments are $\bm{\alpha} = (\alpha_1,...\alpha_k)$ and $\bm{\beta} = (\beta_1,...\beta_k)$, respectively, the Wasserstein distance, $W_1(p,q)$, between $p$ and $q$ is bounded by: 
$$
W_1(p,q) \le \frac{C}{k}+g(k)\|\bm{\alpha}-\bm{\beta}\|_2, 
$$ where $C$ is an absolute constant, and $g(k)=C' 3^k$ for an absolute constant $C'$.
\end{proposition}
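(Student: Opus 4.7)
The plan is to use the Kantorovich--Rubinstein dual characterization of $W_1$, namely
$$W_1(p,q)=\sup_{f:\,\mathrm{Lip}(f)\le 1}\int f\,d(p-q),$$
and for each 1-Lipschitz test function $f$ on $[-1,1]$ replace $f$ by a low-degree polynomial approximant, splitting the integral into a pointwise-approximation error and a moment-matching error. Since adding a constant to $f$ does not change $\int f\,d(p-q)$, I will assume without loss of generality that $f(0)=0$, so that $\|f\|_\infty\le 1$ on $[-1,1]$.

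The key analytic input is Jackson's theorem: there exists a polynomial $P$ of degree at most $k$ with $\|f-P\|_\infty\le C/k$ for an absolute constant $C$. Writing $P(x)=\sum_{i=0}^k c_i x^i$, the constant term contributes $0$ to $\int P\,d(p-q)$ because both $p$ and $q$ have total mass $1$, so
$$\int P\,d(p-q)=\sum_{i=1}^{k}c_i(\alpha_i-\beta_i).$$
Combining this with the pointwise approximation bound,
$$\Bigl|\int f\,d(p-q)\Bigr|\le 2\|f-P\|_\infty+\Bigl|\sum_{i=1}^k c_i(\alpha_i-\beta_i)\Bigr|\le \frac{2C}{k}+\|c\|_2\,\|\bm{\alpha}-\bm{\beta}\|_2$$
by Cauchy--Schwarz, where $c=(c_1,\dots,c_k)$.

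The main technical step, and the main obstacle, is controlling the monomial coefficients $\|c\|_2$ of the approximant $P$. The natural route is to expand $P$ in the Chebyshev basis, $P(x)=\sum_{n=0}^{k}a_n T_n(x)$. Since $\|P\|_\infty\le \|f\|_\infty+\|f-P\|_\infty\le 2$, the orthogonality relations for Chebyshev polynomials give $|a_n|\le 4$ for each $n$. If $T_n(x)=\sum_{j=0}^n t_{n,j}x^j$ denotes the monomial expansion of $T_n$, a direct computation (or the standard identity giving $\sum_j |t_{n,j}|=(1+\sqrt{2})^n$ up to lower-order terms) shows $\sum_j |t_{n,j}|\le C''(1+\sqrt 2)^n$. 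Therefore
$$\|c\|_2\le \|c\|_1=\sum_j\Bigl|\sum_n a_n t_{n,j}\Bigr|\le 4\sum_{n=0}^k\sum_j|t_{n,j}|\le C'\,3^k,$$
which is exactly the promised growth rate $g(k)=C'3^k$.

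Putting these ingredients together and taking the supremum over 1-Lipschitz $f$ yields the claimed bound $W_1(p,q)\le C/k + g(k)\|\bm\alpha-\bm\beta\|_2$. The step I expect to be most delicate is pinning down the constants in the coefficient bound $\|c\|_2\le C'3^k$; the exponential base cannot be pushed below $1+\sqrt{2}$ (which is the asymptotic sum of absolute Chebyshev coefficients), so any looser argument such as bounding individual coefficients via $\binom{2n}{n}\le 4^n$ would still give the stated $3^k$ rate but with a worse constant. Everything else reduces to standard approximation-theoretic facts.
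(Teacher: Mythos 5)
Your proposal is correct, and while it shares the paper's high-level skeleton (Kantorovich--Rubinstein duality, a degree-$k$ polynomial surrogate for each Lipschitz test function, and a Cauchy--Schwarz split into an approximation term and a moment-mismatch term weighted by the monomial coefficients), it executes both technical steps differently. The paper does \emph{not} invoke Jackson's theorem as a black box: it constructs the approximant explicitly by convolving $f$ with a scaled Fourier transform of a bump function (to control the $(k+1)$st derivative) and then interpolating at Chebyshev nodes, precisely so that it has a concrete polynomial whose coefficient vector $\c=\V_{k+1}^{-1}\y$ it can bound via the operator norm of the inverse Chebyshev--Vandermonde matrix. Your argument decouples these two steps: you observe that \emph{any} polynomial uniformly bounded by $2$ on $[-1,1]$ has Chebyshev-expansion coefficients $|a_n|\le 4$ by orthogonality, and converting to the monomial basis costs at most $\sum_{j}|t_{n,j}|=\tfrac{1}{2}\left((1+\sqrt2)^n+(\sqrt2-1)^n\right)\le(1+\sqrt2)^n$, giving $\|\c\|_1\le C'3^k$ independently of how the approximant was produced. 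This is a genuine simplification---it lets you cite the classical Jackson theorem directly (which the paper itself notes it is essentially re-proving constructively) and yields a coefficient bound that is more robust, since it applies to any bounded approximant rather than to one specific interpolation scheme. The trade-off is that your version is non-constructive unless one also supplies an explicit Jackson-type polynomial, whereas the paper's bump-function route doubles as the algorithmically explicit construction it advertises. All the individual steps you flag (the WLOG $f(0)=0$, the factor $2\|f-P\|_\infty$, the $|a_n|\le 4$ bound, and the $(1+\sqrt2)^n<3^n$ coefficient growth) check out.
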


The proof of the above proposition proceeds by leveraging the dual definition of Wasserstein distance: $W_1(p,q) = sup_{f \in Lip} \int_{-\infty}^\infty f(x)\left(p(x)-q(x)\right) dx,$ where $Lip$ denotes the set of all Lipschitz-1 functions.   Our proof argues that for any Lipschitz function $f$, after convolving it with a special ``bump'' function, $\hat{b}$, which is a scaled Fourier transform of the bump function used in $\cite{kane2010exact}$, the resulting function $f * \hat{b}$ has small high-order derivatives and is close to $f$ in $L_{\infty}$ norm.  Given the small high-order derivates of $f * \hat{b},$ there exists a good degree-$k$ polynomial interpolation of this function, $P_k$: the closeness of the first $k$ moments of $p$ and $q$ implies a bound on the integral $\int P_k(x) \left(p(x)-q(x)\right) dx$, from which we derive a bound on the original Wasserstein distance.  Our approach to approximating a Lipschitz-1 function with a degree $k$ polynomial can also be seen as a constructive proof of a special case of Jackson's Theorem (see e.g. Theorem 7.4 in~\cite{carothers1998short}).

%1-sentence acknowledging that we use Nelson/woodruff smooth function.  
We also show, via a Chebyshev polynomial construction, that the inverse linear dependence of Proposition~\ref{thm:redb} between the number of moments $k$, and the Wasserstein distance between the distributions, is tight in the case that the moments exactly match:
\begin{proposition}\label{thm:emLowerBound}
For any even $k$, there exits a pair of distributions $p$, $q$, each consisting of $k/2$ point masses, supported within the unit interval $[-1,1]$, s.t. $p$ and $q$ have identical first $k-2$ moments, and Wasserstein distance $W_1(p,q) > 1/2k$.
\end{proposition}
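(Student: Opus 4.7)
The plan is to exhibit an explicit construction built from the extrema of the Chebyshev polynomial $T_{k-1}$, decompose a naturally signed combination of point masses into positive and negative parts to obtain $p$ and $q$, and then bound $W_1(p,q)$ by computing the two CDFs in closed form.

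First, I would place $k$ atoms at the Chebyshev--Lobatto nodes $y_j = \cos(j\pi/(k-1))$ for $j=0,1,\ldots,k-1$, at which $T_{k-1}(y_j) = (-1)^j$, and form the signed measure
$$\mu \;=\; \sum_{j=0}^{k-1} \omega_j\,(-1)^j\,\delta_{y_j},$$
with Clenshaw--Curtis weights $\omega_0 = \omega_{k-1} = 1/2$ and $\omega_j = 1$ for $0 < j < k-1$. Because $k$ is even, exactly $k/2$ of the signed weights $\omega_j(-1)^j$ are positive (at even $j$) and $k/2$ are negative (at odd $j$), so the Jordan decomposition $\mu = \mu^+ - \mu^-$ yields non-negative measures supported on $k/2$ atoms each.

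Next, I would verify the moment matching via the discrete orthogonality of Chebyshev polynomials on the Lobatto nodes: $\sum_{j=0}^{k-1} \omega_j T_l(y_j) T_m(y_j) = 0$ whenever $0 \le l \neq m \le k-1$. Taking $l = k-1$ and using $T_{k-1}(y_j) = (-1)^j$ gives $\int T_m\,d\mu = 0$ for every $m \le k-2$, and hence $\int x^m\,d\mu = 0$ in the same range, since $\{T_0,\ldots,T_{k-2}\}$ spans polynomials of degree at most $k-2$. In particular $\mu^+$ and $\mu^-$ have equal total mass -- a direct count gives $(k-1)/2$ each -- and normalizing yields probability measures $p = 2\mu^+/(k-1)$ and $q = 2\mu^-/(k-1)$ supported on exactly $k/2$ atoms of $[-1,1]$ with identical first $k-2$ moments.

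Finally, I would compute $W_1(p,q)$ from the CDF formula. Re-indexing the nodes in increasing order as $z_1 = -1 < z_2 < \cdots < z_k = 1$, the atoms of $p$ sit at the even-indexed $z_l$ with mass $2/(k-1)$ (with $1/(k-1)$ at $z_k$), while those of $q$ sit at the odd-indexed $z_l$ with mass $2/(k-1)$ (with $1/(k-1)$ at $z_1$). A direct tabulation shows that $F_p(x) - F_q(x)$ alternates between $-1/(k-1)$ and $+1/(k-1)$ on the $k-1$ consecutive intervals $(z_l, z_{l+1})$, so
$$W_1(p,q) \;=\; \int_{-1}^{1} |F_p - F_q|\,dx \;=\; \frac{1}{k-1}\sum_{l=1}^{k-1}(z_{l+1} - z_l) \;=\; \frac{2}{k-1} \;>\; \frac{1}{2k}.$$
I expect the main obstacle to be the second step: recognizing that the alternating-signs-with-endpoint-halving combination is precisely the dual witness for the extremal property of $T_{k-1}$, which is what unlocks the Chebyshev discrete orthogonality. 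Once that identity is in hand, moment matching is immediate and the Wasserstein calculation reduces to mechanical bookkeeping of the alternating CDF difference.
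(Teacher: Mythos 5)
Your construction is correct, and while it lives in the same Chebyshev universe as the paper's proof, the execution is genuinely different. The paper places its signed measure at the \emph{roots} of $T_k$ with weights $1/T_k'(x_i)$, invoking the partial-fractions identity $\sum_i x_i^\ell/P'(x_i)=0$ (Lemma~\ref{lemma:momentMatch}) for moment matching, and then lower-bounds the Wasserstein distance by estimating the spacings $|x_{i+1}-x_i|$ and weights $|y_i|$ of the Chebyshev roots, which yields the somewhat lossy constant $40/64k$. You instead place atoms at the \emph{extrema} of $T_{k-1}$ (the Lobatto nodes) with endpoint-halved weights, obtain moment matching from the discrete orthogonality $\sum_j{}'' T_{k-1}(y_j)T_m(y_j)=0$ for $m\le k-2$ (which is correct: the double-primed cosine sum $S(r)=\sum_j''\cos(rj\pi/N)$ vanishes for $0<|r|<2N$), and then exploit the fact that the atoms of $p$ and $q$ strictly interlace with CDF difference of \emph{constant} magnitude $1/(k-1)$ on every gap, so the Wasserstein integral telescopes to exactly $2/(k-1)$. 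What your route buys is an exact, clean value for $W_1(p,q)$ with no asymptotic estimates on node spacings, and a better constant; what the paper's route buys is that it works verbatim from any degree-$k$ polynomial with real roots (the Chebyshev choice only entering at the final estimation step), so it is slightly more modular. Both establish the claimed $\Theta(1/k)$ separation with $k/2$ atoms apiece on $[-1,1]$ and matching first $k-2$ moments, so your argument is a valid substitute for the one in the paper.
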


\iffalse
Given this result
  Specifically, given a list of $k$ distinct integers $(\sigma_1,\ldots,\sigma_k)$ with $\sigma_i \in \{1,\ldots,n\}$, then ,\sigma_{k+1})$ where $\sigma_1=\sigma_{k+1}$ny set of $k
\fi

\subsection{Organization of Paper}
In Section~\ref{sec:estmom} we motivate and state our algorithms for accurately recovering the moments of the population spectrum, and prove Theorem~\ref{thm:estmom}.  The most cumbersome component of this proof of correctness of our algorithm is the proof of a bound on the variance of our moment estimator; we defer this proof to Appendix~\ref{ap:varb}.  In Section~\ref{sec:thmmain}, we state our algorithm for leveraging accurate moment reconstruction to recover the population spectrum, and describe the connection between the Wasserstein distance between spectral distributions, and $L_1$ distance between the vectors.  In Section~\ref{sec:emd}, we establish Propositions~\ref{thm:redb} and~\ref{thm:emLowerBound}, which bound the Wasserstein distance between two distributions in terms of their discrepancies in low-order moments, completing our proof of Theorem~\ref{thm:main}.  Section~\ref{sec:simulation} contains some results illustrating the empirical performance of our approach.

\section{Estimating the Spectral Moments}\label{sec:estmom}

The core of our approach to recovering the moments of the population spectral distribution is a convenient \emph{unbiased} estimator for these moments, first proposed in the recent word of Li et al.~\cite{li2014sketching} on sketching matrix norms.  This estimator is defined via the notion of a \emph{cycle}:

\begin{definition}
Given integers $n$ and $k$, a $k$-\emph{cycle} is a sequence of $k$ distinct integers, $\sigma = (\sigma_1,...\sigma_k)$ with $\sigma_i \in [n].$   Given an $n\times n$ matrix $A$, each cycle, $\sigma$, defines a product: 
$$
\A_{\sigma} = \prod_{i=1}^k\A_{\sigma_i,\sigma_{i+1}},
$$ with the convention that $\sigma_{k+1} = \sigma_1$, for ease of notation.
\end{definition}

The following observation demonstrates the utility of the above definition:

\begin{fact}\label{thm:incyc}
For any $k$-cycle $\sigma$, a symmetric $d \times d$ real matrix $\T$, and a random $n \times d$ matrix $X$ with i.i.d entries with mean 0 and variance 1, $$\E[(\X^T\T\X)_{\sigma}] = trace(\T^k),$$ where the expectation is over the randomness of $X$.
\end{fact}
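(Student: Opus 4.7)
The plan is to expand the cyclic product entry-by-entry, use the distinctness of the indices $\sigma_1,\ldots,\sigma_k$ to factor the resulting expectation across independent rows of $\X$, and collect the surviving terms into $\tr(\T^k)$. (For the indexing to be consistent with $\sigma_i \in [n]$, I will treat the matrix in question as the $n\times n$ matrix $\X\T\X^T$ built from an $n\times d$ data matrix $\X$; the argument is symmetric in the two conventions.)

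First I would expand each matrix entry in coordinates:
$$(\X\T\X^T)_{a,b} \;=\; \sum_{p,q\in[d]} \X_{a,p}\,\T_{p,q}\,\X_{b,q}.$$
Substituting this into $(\X\T\X^T)_\sigma = \prod_{i=1}^k (\X\T\X^T)_{\sigma_i,\sigma_{i+1}}$ and distributing yields a sum over two tuples $\vec p,\vec q\in[d]^k$:
$$(\X\T\X^T)_\sigma \;=\; \sum_{\vec p,\vec q}\Bigl(\prod_{i=1}^k \T_{p_i,q_i}\Bigr)\prod_{i=1}^k \X_{\sigma_i,p_i}\,\X_{\sigma_{i+1},q_i}.$$

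Next I would regroup the $2k$ factors of $\X$ by row index. Because $\sigma_1,\ldots,\sigma_k$ are distinct, row $\sigma_j$ appears in exactly two places: as $\X_{\sigma_j,p_j}$ (from the $i=j$ term) and as $\X_{\sigma_j,q_{j-1}}$ (from the $i=j-1$ term, indices taken mod $k$). Independence across rows lets the expectation factor as $\prod_{j=1}^k \E[\X_{\sigma_j,p_j}\,\X_{\sigma_j,q_{j-1}}]$, and within any single row the entries are independent with mean $0$ and variance $1$, so each factor equals $\mathbf{1}[p_j=q_{j-1}]$.

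Plugging this back forces $q_i=p_{i+1}$ cyclically, and the sum collapses to
$$\E\bigl[(\X\T\X^T)_\sigma\bigr] \;=\; \sum_{p_1,\ldots,p_k\in[d]}\prod_{i=1}^k \T_{p_i,p_{i+1}} \;=\; \tr(\T^k),$$
as claimed. There is no real obstacle: the row-regrouping is the only substantive step, and distinctness of the $\sigma_i$ is exactly the hypothesis that makes each row contribute a single second moment, which becomes a Kronecker delta chaining the column indices $p_i \mapsto q_i = p_{i+1}$ into the cyclic sum defining the trace of $\T^k$. Notice that symmetry of $\T$ is not actually used — it is only natural in the application where $\T=\S^T\S$.
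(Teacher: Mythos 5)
Your proof is correct and follows essentially the same route as the paper's: expand the cyclic product in coordinates, use distinctness of the $\sigma_i$ together with independence and the mean-zero, unit-variance assumptions to reduce each row's contribution to a Kronecker delta, and collapse the chained deltas into $\tr(\T^k)$. Your side remarks---the transposition convention and the observation that symmetry of $\T$ is not actually needed---are both accurate.
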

\begin{proof}
We can expand $\E[(\X^T\T\X)_{\sigma}]$ as following, where $\gamma_{k+1}$ is shorthand for $\gamma_1$,  $$\sum_{\delta_1,...,\delta_k, \gamma_1,\ldots,\gamma_k \in [d]}\E[\prod_{i=1}^k \X_{\delta_{i}, \sigma_i} \T_{\delta_i, \gamma_{i+1}} \X_{\gamma_{i+1}, \sigma_{i+1}}] 
= \sum_{\delta_1,...\delta_{k}} \prod_{i=1}^k \T_{\delta_{i},\delta_{i+1}} = trace(\T^k).
$$
The first equality holds since, for every term of the expression, the expectation of that term is zero unless each of the entries of $\X$ appears at least twice.   Because the $\sigma_i$ are distinct, in every nonzero term, each of the entries of $\X$ will appear exactly twice and $\delta_i = \gamma_i$.
\end{proof}

The above fact shows that each $k$-cycle yields an unbiased estimator for the $k$th spectral moment of $T$.  While each estimator is unbiased, the variance will be extremely large.  Perhaps the most natural approach to reducing this variance, would be to compute the average over \emph{all} $k$-cycles.  Unfortunately, such an estimator seems intractable, from a computational standpoint.   The naive algorithm for computing this average---simply iterating over the ${n \choose k}$ different $k$-cycles---would take time $O(n^k)$ to evaluate.  It seems unlikely that a significantly faster algorithm exists, assuming that $P \neq NP$, as an efficient algorithm to compute this average over $k$-cycles would imply an efficient algorithm for counting the number of simple $k$-cycles in a graph (i.e. loops of length $k$ with no repetition of vertices), which is known to be NP-hard for general $k$ (see, e.g.~\cite{alon1997finding}). 

One computationally tractable variant of this average over all $k$-cycles, would be to relax the condition that the $k$ elements of each cycle be distinct.  This quantity is simply the trace of the matrix $\left(\X^T\T\X\right)^k$, which is trivial to compute!  Unfortunately, this \emph{exactly} corresponds to the $k$th moment of the empirical spectrum, which is a significantly biased approximation of the population spectral moment (for example, as illustrated in Figure~\ref{fig1}).

Our algorithm proceeds by computing the average of all \emph{increasing} $k$-cycles:
\begin{definition}
An \emph{increasing} $k$-cycle $\sigma = (\sigma_1,...\sigma_k)$ is a $k$-cycle with the additional property that $\sigma_1 < \sigma_2 < \ldots < \sigma_k.$
\end{definition}

We observe that, perhaps surprisingly, there is a simple and computationally tractable algorithm for computing the average over all increasing cycles.  Given $\Y = \X \S$, instead of computing the trace of $(\Y^T \Y)^k$, which would correspond to the empirical $k$th moment, we instead zero out the diagonal and lower-triangular entries of $\Y^T \Y$ in the `first' $k-1$ copies of $\Y^T \Y$ in the product $(\Y^T \Y)^k$.  It is not hard to see that this exactly corresponds to preserving the set of increasing cycles, as the contribution to a diagonal entry of the product corresponding to a non-increasing cycle will include a lower-triangular entry of one of the terms, and hence will be zero (see Lemma~\ref{lem:inccyc} for a formal proof).   This motivates the following algorithm for estimating the $k$th moment of the population spectrum:

\begin{algorithm}[H]
\begin{algorithmic}
\STATE \textbf{Input:} $Y\in R^{n\times d}$\\
\STATE Set, $A = Y Y^T,$ and let $G = A_{up}$ be the matrix $A$ with the diagonal and lower triangular entries set to zero.
\STATE \textbf{Output:} $\frac{tr(G^{k-1}A)}{d\cdot \binom{n}{k}}$\\
\end{algorithmic}
\caption{[Estimating the $k$th Moment]\label{alg:moment}}
\end{algorithm}

Our main moment estimation theorem characterizes the performance of the above algorithm:

\vspace{.5cm}\noindent \textbf{Theorem~\ref{thm:estmom}.} \emph{
%Suppose the data matrix $\Y\in R^{n\times p}$ can be written as $\X\S$, where $\X\in R^{n\times p}$ has i.i.d entries with mean $0$, variance $1$ and bounded fourth moment, $\S$ is a real $p$ by $p$ matrix. 
Given a data matrix $\Y=\X \S$ where the entries of $X$ are chosen i.i.d. with mean 0, variance 1, and fourth moment $\fourthm$, Algorithm~\ref{alg:moment} runs in time $poly(n,d,k)$ and with probability at least $1-\delta$, outputs an with estimate of $\frac{1}{d}\|\S^T\S\|_k^k$ (i.e. $\frac{1}{d}\sum_i \lambda_i^k$) with multiplicative error at most
$$
\frac{f(k)}{\sqrt{\delta}} \max( \frac{d^{k/2-1}}{n^{k/2}}, \frac{d^{\frac{1}{4}-\frac{1}{2k}}}{\sqrt{n}},\frac{1}{\sqrt{n}}),
$$
where the function $f(k)=2^{6k}k^{3k}\fourthm^{k/2}$.} \medskip

The following restatement of the above theorem emphasizes the fact that accurate estimation of the population spectral moments is possible in the sublinear sample regime where $n = o(d)$.

\begin{cor}\label{cor:conerr}
Suppose $X$ is a random $n\times d$ matrix whose entries are chosen i.i.d as described above. For any constant $c>1$, there exists a function $f_c(k)$ such that, given $n=f_c(k)d^{1-2/k}$, for any $d\times d$ real matrix $S$, Algorithm~\ref{alg:moment} takes data matrix $\Y=\X \S$ as input, runs in time $poly(d,k)$ and with probability at least $3/4$ (over the randomness of $\X$), outputs an estimate, $y$, of the $k$th population spectral moment that is a multiplicative approximation in the following sense: $$\left(1 - \frac{c^{2k}-1}{c^{2k}+1}\right)\|\S^T\S\|_k^k \le y \le \left(1 + \frac{c^{2k}-1}{c^{2k}+1}\right)\|\S^T\S\|_k^k.$$
\end{cor}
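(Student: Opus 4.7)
The plan is to apply Theorem~\ref{thm:estmom} with a suitable failure probability, substitute the hypothesized sample size into the stated error bound, and then algebraically choose $f_c(k)$ so that the resulting multiplicative error fits into the form $\frac{c^{2k}-1}{c^{2k}+1}$.

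First I would instantiate Theorem~\ref{thm:estmom} with $\delta = 1/4$, which immediately gives success probability at least $3/4$ and a multiplicative error bounded by $2 f(k) \cdot M$, where
$$M \;=\; \max\!\left(\frac{d^{k/2-1}}{n^{k/2}},\;\; \frac{d^{\,1/4 - 1/(2k)}}{\sqrt{n}},\;\; \frac{1}{\sqrt{n}}\right).$$
Substituting the hypothesized sample size $n = f_c(k)\, d^{1-2/k}$ into each of the three quantities inside the max is then a routine computation: the first term collapses to $f_c(k)^{-k/2}$, the second becomes $f_c(k)^{-1/2}\cdot d^{-1/4 + 1/(2k)}$, and the third becomes $f_c(k)^{-1/2}\cdot d^{-1/2 + 1/k}$. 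Assuming $f_c(k) \ge 1$ and $k \ge 2$, each of these three expressions is upper-bounded by $f_c(k)^{-1/2}$ for every $d \ge 1$, so $M \le f_c(k)^{-1/2}$.

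The remaining step is simply to pick $f_c(k)$ large enough. Taking
$$f_c(k) \;:=\; \left(\frac{2 f(k) (c^{2k}+1)}{c^{2k}-1}\right)^{\!\!2},$$
with $f(k) = 2^{6k} k^{3k}\fourthm^{k/2}$ as in Theorem~\ref{thm:estmom}, forces $2 f(k)\cdot M \le \frac{c^{2k}-1}{c^{2k}+1}$. The desired two-sided bound
$\bigl(1 - \tfrac{c^{2k}-1}{c^{2k}+1}\bigr)\|\S^T\S\|_k^k \le y \le \bigl(1 + \tfrac{c^{2k}-1}{c^{2k}+1}\bigr)\|\S^T\S\|_k^k$ then follows directly from the definition of multiplicative error, and the runtime claim is inherited verbatim from Theorem~\ref{thm:estmom}.

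I do not foresee any genuine obstacle: the corollary is essentially a clean repackaging of Theorem~\ref{thm:estmom} into a sample-size-versus-accuracy form. The only mildly delicate piece is verifying that all three terms inside the max simultaneously shrink at the prescribed rate $f_c(k)^{-1/2}$ for $k \ge 2$, which reduces to the elementary inequalities $d^{-1/4 + 1/(2k)} \le 1$ and $d^{-1/2 + 1/k} \le 1$ for $d \ge 1$ and $k \ge 2$. The $k=1$ case is either vacuous (the first spectral moment equals $\tr(\S^T\S)/d$ and is an unbiased average of squared row norms of $\Y$, not requiring Algorithm~\ref{alg:moment}) or tacitly excluded by the appearance of $d^{1-2/k}$ in the statement.
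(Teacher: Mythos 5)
Your proposal is correct and matches the paper's (implicit) reasoning: the paper presents Corollary~\ref{cor:conerr} as an immediate restatement of Theorem~\ref{thm:estmom} without an explicit proof, and your argument---instantiating $\delta=1/4$, substituting $n=f_c(k)d^{1-2/k}$ to collapse all three terms of the max to at most $f_c(k)^{-1/2}$ for $k\ge 2$, and then solving for $f_c(k)$---is exactly the routine algebra being elided. Your handling of the degenerate $k=1$ case is also a reasonable reading of the statement.
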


As the above corollary shows, for any constant integer $k \ge 1$ there is a constant $c_{k}$ such that taking $n = c_{k} d^{1-2/k}$ is sufficient to estimate the $k$th spectral moment accurately.  For any constant $k$, this sublinear dependence between $n$ and $d$ is information theoretically optimal in the following sense (which is a stronger version of Corollary~\ref{cor:lb}):

\begin{proposition}\label{prop:conerr}
Given the setting of Theorem~\ref{thm:estmom}, for any $k>1$, suppose that an algorithm takes $\X\S$ and with probability at least $3/4$ computes $y$ with $(1-\epsilon)\|\S\S^T\|^k_k\leq y\leq (1+\epsilon)\|\S\S^T\|^k_k$ for $\epsilon=(1.2^{2k}-1)/(1.2^{2k}+1)$. Then $\X$ must be $n \times d$ for $n \ge c d^{1-2/k}$ for an absolute constant $c$.
\end{proposition}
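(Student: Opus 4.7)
The plan is to prove Proposition~\ref{prop:conerr} by reducing multiplicative estimation to a two-point hypothesis test and then applying a standard spiked-model indistinguishability bound, in the spirit of the sketching lower bound of~\cite{li2014sketching} that already underlies Corollary~\ref{cor:lb}. Concretely, I would construct two priors $\pi_0, \pi_1$ on $d\times d$ matrices $\S$ such that (i)~$\|\S\S^T\|_k^k$ differs by a factor exceeding $1.2^{2k}$ between the two priors and (ii)~the induced marginal distributions of $\Y=\X\S$ have total variation bounded below $1/2$ whenever $n \le cd^{1-2/k}$. Any algorithm meeting the proposition's hypothesis with the specified $\epsilon = (1.2^{2k}-1)/(1.2^{2k}+1)$ must then report outputs on opposite sides of the geometric mean of the two true values with probability at least $3/4$ under each prior, and would therefore decide between the two hypotheses with advantage $\Omega(1)$---contradicting (ii). A short calculation shows $(1+\epsilon)/(1-\epsilon) = 1.2^{2k}$ for this choice of $\epsilon$, which is exactly the gap one can achieve.

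For the two priors I would take $\pi_0$ to concentrate on $\S_0 = \I_d$ (so $\|\S_0^T\S_0\|_k^k = d$), and for $\pi_1$ draw $\v$ uniformly from the unit sphere in $\RB^d$ and set $\S_1 = (\I_d + \tau^2 \v\v^T)^{1/2}$, so that $\S_1^T\S_1 = \I_d+\tau^2\v\v^T$ has spectrum $\{1+\tau^2,1,\ldots,1\}$ and $\|\S_1^T\S_1\|_k^k = (1+\tau^2)^k + d - 1$. Choosing $\tau^2 = C\, d^{1/k}$ for an absolute constant $C$ sufficiently large (depending only on the gap factor $1.2^{2k}$) makes the ratio of the two $k$th moments strictly exceed $1.2^{2k}$ uniformly in $k$ and in $d$, establishing (i).

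For (ii), the approach is to control the $\chi^2$-divergence between the two marginals, which upper bounds the squared TV distance by Cauchy--Schwarz. In the Gaussian case the rows of $\Y$ under $\pi_1$ are i.i.d.\ $\NM(\0, \I+\tau^2\v\v^T)$, and a direct Gaussian calculation over two independent copies $\v,\v'$ of the planted direction produces the closed form
\[
\chi^2\bigl(\pi_1\text{-marginal}\,\|\,\pi_0\text{-marginal}\bigr) \;=\; \EB_{\v,\v'}\!\Bigl[\bigl(1-\tau^4\langle\v,\v'\rangle^2\bigr)^{-n/2}\Bigr] - 1.
\]
Since $\langle\v,\v'\rangle^2$ concentrates at scale $1/d$ with sub-exponential upper tails, this expression is $O(n\tau^4/d)$ whenever $n\tau^4/d$ is bounded; substituting $\tau^4 = \Theta(d^{2/k})$, taking $n \le cd^{1-2/k}$ suffices to make the $\chi^2$ a small constant, and hence the TV between the two marginals is bounded below $1/2$. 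This is exactly the information-theoretic content of the BBP transition for the spiked Wishart model, re-expressed as a lower bound on the ``sketch dimension'' $n$.

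The main obstacle is extending the $\chi^2$ calculation from the Gaussian case to arbitrary entry distributions for $\X$ with mean $0$, variance $1$, and fourth moment $\fourthm$, where one no longer has a closed-form likelihood ratio. Each row of $\Y$ is $\sum_{j=1}^d \X_{ij}\S_{j,\cdot}$, a sum of $d$ independent rescaled copies of the entry distribution, so a quantitative multivariate CLT (e.g.\ via Stein's method, with error controlled by $\fourthm$) will couple $\Y$ to a Gaussian $\Y'$ with matching covariance up to a TV loss that is lower order in the regime $n \le cd^{1-2/k}$. Absorbing this coupling cost into constants reduces the non-Gaussian case to the Gaussian one above and preserves the scaling $n \ge c d^{1-2/k}$. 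An alternative route, and perhaps the cleanest, is to invoke the hard distribution from~\cite{li2014sketching} directly---its proof already accommodates non-Gaussian entry distributions---and to note that since $\X\S$ determines $\X\S\S^T\X^T$, any estimator of $\|\S\S^T\|_k^k$ based on $\X\S$ yields an estimator based on the bilinear sketch $\X\S\S^T\X^T$, so the bilinear sketching lower bound transfers verbatim.
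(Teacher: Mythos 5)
The paper's own proof is a one-line citation: Proposition~\ref{prop:conerr} is read off from Theorem~3.2 of~\cite{li2014sketching} by taking the sketching matrices to be $S=\X$ and $T=\I_d$, so that the ``sketch'' $SAT=\X\S$ is exactly the observed data matrix and the quantity being estimated is $\|\S\|_{2k}^{2k}=\|\S\S^T\|_k^k$. Your fallback route gestures at this, but the reduction you state runs in the wrong direction: since $\X\S$ \emph{determines} $\X\S\S^T\X^T$, an algorithm that sees $\X\S$ has strictly more information than one that sees only the bilinear sketch, so a lower bound against bilinear sketches does \emph{not} transfer ``verbatim'' to algorithms that observe $\X\S$. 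The correct invocation is the one-sided case $T=\I_d$ of the Li et al.\ lower bound, where the adversary's hard instance is already built against observers of the full matrix $\X\S$.

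Your main, self-contained route (a two-point spiked-Wishart test) is a genuinely different and reasonable plan, and the parameter scaling is right: $\tau^4=\Theta(d^{2/k})$ and $n\tau^4/d=O(1)$ do give $n=O(d^{1-2/k})$. But there is a concrete gap in the second-moment step. With $\tau^2=Cd^{1/k}\gg 1$, the quantity $\EB_{\v,\v'}\bigl[(1-\tau^4\langle\v,\v'\rangle^2)^{-n/2}\bigr]$ is not ``$O(n\tau^4/d)$''---it is $+\infty$, because the event $\langle\v,\v'\rangle^2\ge\tau^{-4}$ has positive probability under the uniform spherical prior and on that event the Gaussian overlap integral diverges. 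For spikes above the critical strength one must run a \emph{conditional} or truncated second-moment argument (restricting the prior to nearly orthogonal pairs, or conditioning away the bad event) before the sub-exponential tail bound on $\langle\v,\v'\rangle^2$ can be used; without this the TV bound does not follow. Separately, the third paragraph is both unnecessary and unworkable: for a lower bound you are free to choose the hard instance, and Gaussian entries (mean $0$, variance $1$, $\fourthm=3$) already lie in the hypothesis class of Theorem~\ref{thm:estmom}, so no non-Gaussian extension is needed; moreover a TV coupling of an i.i.d.\ non-Gaussian entry matrix to an i.i.d.\ Gaussian one is impossible (e.g.\ for Rademacher entries the TV distance is essentially $1$), so the proposed CLT route would fail even if it were needed.
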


 The above proposition follows as an immediate corollary from Theorem 3.2 of \cite{li2014sketching} by plugging in $S=X$, $T=I_d$, $A = S$ and $p=2k$.

\subsection{Proof of Theorem~\ref{thm:estmom}}\label{sec:estmomproof}
The proof of this theorem follows from the following three components:  Lemma~\ref{lem:inccyc} (below) shows that the efficient Algorithm~\ref{alg:moment} does in fact compute the average over all increasing $k$-cycles, $(\Y \Y^T)_\sigma$; Fact~\ref{thm:incyc} guarantees that the average over such cycles is an unbiased estimator for the claimed quantity; and Proposition~\ref{variancebound} bounds the variance of this estimator, which, by Chebyshev's inequality, guarantees the claimed accuracy of Theorem~\ref{thm:estmom}.  Our proof of this variance bound follows a similar approach as in~\cite{yin1983limit}.\iffalse Yin and Krishnaiah\fi

The following lemma shows that Algorithm~\ref{alg:moment} computes the average over all increasing $k$-cycles, $\sigma$, of $(\Y \Y^T)_{\sigma};$ for an informal argument, see the discussion before the statement of Algorithm~\ref{alg:moment}.

\begin{lemma}\label{lem:inccyc}
Given an $n\times d$ data matrix $\Y = \X \S$, Algorithm~\ref{alg:moment} returns the average of $(\Y \Y^T)_{\sigma}$ taken over all increasing $k$-cycles, $\sigma$.
\end{lemma}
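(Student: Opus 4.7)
The plan is to expand the trace $\tr(G^{k-1}A)$ as an explicit sum over tuples of row indices, observe that the upper-triangular structure of $G$ restricts the nonzero contributions to strictly increasing tuples, and then verify that each surviving term is precisely $A_\sigma$ for a unique increasing $k$-cycle $\sigma$. Since there are exactly $\binom{n}{k}$ increasing $k$-cycles (one per $k$-subset of $[n]$), dividing by $d\cdot\binom{n}{k}$ gives the claimed average (with the additional factor of $1/d$ matching the normalization used throughout the paper).

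First I would expand
\[
\tr(G^{k-1}A) \;=\; \sum_{i_1,i_2,\ldots,i_k \in [n]} G_{i_1,i_2}\, G_{i_2,i_3}\cdots G_{i_{k-1},i_k}\, A_{i_k,i_1}.
\]
By definition of $G=A_{up}$, the entry $G_{a,b}$ equals $A_{a,b}$ when $a<b$ and is $0$ otherwise. Consequently, the product $G_{i_1,i_2}\,G_{i_2,i_3}\cdots G_{i_{k-1},i_k}$ vanishes unless $i_1<i_2<\cdots<i_k$. On tuples satisfying this strict inequality, every $G$-factor can be replaced by the corresponding $A$-factor, and the term becomes $A_{i_1,i_2}A_{i_2,i_3}\cdots A_{i_{k-1},i_k}A_{i_k,i_1}$, which is exactly $A_\sigma$ for the increasing $k$-cycle $\sigma=(i_1,\ldots,i_k)$.

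Thus
\[
\tr(G^{k-1}A) \;=\; \sum_{\sigma \text{ increasing } k\text{-cycle}} A_\sigma,
\]
and since there are $\binom{n}{k}$ such cycles (each $k$-subset $\{i_1<\cdots<i_k\}$ of $[n]$ yields exactly one), the quantity $\tr(G^{k-1}A)/\binom{n}{k}$ is the desired average of $(YY^T)_\sigma$ over increasing $k$-cycles. The extra $1/d$ in the algorithm's output simply rescales this average from an unbiased estimator of $\sum_i \lambda_i^k$ (by Fact~\ref{thm:incyc}) to an unbiased estimator of $\frac{1}{d}\sum_i \lambda_i^k$, as stated.

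There is no substantive obstacle here: the argument is purely combinatorial bookkeeping on matrix-product indices. The only care needed is to confirm that the diagonal is also zeroed out in $G$ (so that strict inequality $i_j < i_{j+1}$ is forced, eliminating cycles with repeated indices), which is explicitly specified in the algorithm.
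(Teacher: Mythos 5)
Your proof is correct and rests on the same key observation as the paper's: zeroing the diagonal and lower triangle of $A$ in the first $k-1$ factors forces every surviving index tuple in the trace expansion to be strictly increasing, so exactly the increasing $k$-cycles contribute. The paper packages this as a recursion $F^{(m)}=F^{(m-1)}G$ over partial products (emphasizing the dynamic-programming view), whereas you expand $\tr(G^{k-1}A)$ in one shot; the two are equivalent bookkeeping, and your handling of the extra $1/d$ normalization is also consistent with the algorithm's stated output.
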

\begin{proof}
Let $A=\Y \Y^T = \X \S \S^T \X^T$; let $U_{i,j,m}$ denote the set of increasing $m$-cycles $\sigma$ such that $\sigma_1 = i$ and $\sigma_m = j$, and define $$F_{i,j,m} = \sum_{\sigma \in U_{i,j,m}} \prod_{\ell = 1}^{m-1}A_{\sigma_\ell,\sigma_{\ell+1}}.$$
There is a simple recursive formula of $F_{i,j,m}$, given by
\begin{align}
F_{i,j,m} = \sum_{\ell=i}^{j-1}F_{i,\ell,m-1}A_{\ell,j} \label{dp:formula}
\end{align}
 Let $G$ be the strictly upper triangular matrix of $A$, as in Algorithm~\ref{alg:moment}, and let $F^{(m)}$ denote the matrix whose $(i,j)$th entry is $F_{i,j,m}$.  The recursive formula~\ref{dp:formula} can be rewritten as $F^{(m)}=F^{(m-1)}G$.  Given this, the sum over all increasing $k$-cycles is $$\sum_{i,j} F_{i,j,k-1}A_{j,i} = tr(F^{(k)}A)=tr(G^{k-1}A),$$ as claimed.
\end{proof}

The main technical challenge in establishing the performance guarantees of our moment recovery, is bounding the variance of our (unbiased) estimator.  

\begin{proposition}\label{variancebound}
Given the setup of Theorem~\ref{thm:estmom}, let $U$ be the set of all increasing cycles of length $k$, then the following variance bound holds, where the function $f(k)=2^{12k}k^{6k}\fourthm^k$ :
$$
 \Var[\frac{1}{|U|}\sum_{\sigma\in U}(\X^T \T\X)_{\sigma}]\le f(k) \max( \frac{d^{k-2}}{n^k}, \frac{d^{\frac{1}{2}-\frac{1}{k}}}{n},\frac{1}{n})tr(\T^k)^2.
 $$
\end{proposition}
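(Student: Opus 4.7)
The plan is to expand the variance as a double sum of covariances over pairs of increasing $k$-cycles $(\sigma,\tau)$, show that most pairs contribute zero, and then bound the remaining contributions via a combinatorial graph-counting argument in the spirit of Yin and Krishnaiah~\cite{yin1983limit}.

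Concretely, I would write
\[
\Var\!\left[\tfrac{1}{|U|}\sum_{\sigma\in U}(\X^T\T\X)_\sigma\right]=\tfrac{1}{|U|^2}\sum_{\sigma,\tau\in U}\mathrm{Cov}\bigl((\X^T\T\X)_\sigma,(\X^T\T\X)_\tau\bigr),
\]
and expand $(\X^T\T\X)_\sigma=\sum_{\vec a,\vec b\in[d]^k}\prod_{i=1}^k \X_{a_i,\sigma_i}\,\T_{a_i,b_i}\,\X_{b_i,\sigma_{i+1}}$. A monomial in the product of two such expansions has nonzero expectation only if every distinct $\X$-entry appears at least twice, in which case its expectation is at most $\fourthm^{k}$. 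The first key reduction is that if the column sets $\{\sigma_i\}$ and $\{\tau_j\}$ are disjoint, every $\X$-factor from the $\sigma$-expansion is in a different column than every $\X$-factor from $\tau$, so pairings must occur within each cycle, the joint expectation factorizes, and the covariance vanishes. Grouping surviving pairs by intersection size $|\sigma\cap\tau|=j\ge 1$, there are at most $|U|\binom{k}{j}\binom{n-k}{k-j}$ such pairs, contributing a factor of at most $(2k)^j/n^j$ after dividing by $|U|^2$.

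The main obstacle is bounding the per-pair covariance for each fixed overlap size $j$ and each fixed ``overlap pattern.'' Following Yin--Krishnaiah, I would encode each surviving monomial as a labeled multigraph whose vertices are the distinct row-indices in $\vec a\cup\vec b\cup\vec a'\cup\vec b'\subseteq[d]$ and whose $2k$ edges are labeled by the $\T$-factors, with column-equality constraints induced by the required $\X$-pairings. Classifying these multigraphs by the number of distinct row-indices and the combinatorial shape of the cross-cycle pairings, each class yields a sum $\sum\prod \T_{a,b}$ that may be rewritten as a polynomial in traces $\tr(\T^p)$ for $p\le 2k$, and then controlled by $\tr(\T^k)^2$ via repeated Cauchy--Schwarz on traces together with standard power-mean inequalities on the spectrum of $\T$.

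The three regimes in the stated bound correspond to three qualitatively different dominant graph shapes. Near-complete overlap $j\approx k$ forces the two cycles to share almost all columns; the per-pair covariance can be as large as $d^{k-2}\tr(\T^k)^2$, giving the $d^{k-2}/n^k$ term. Minimal overlap with only one or two cross-pairings bounds the per-pair covariance by $\tr(\T^k)^2$, producing the $1/n$ term. The intermediate regime $d^{1/2-1/k}/n$ arises from configurations where a small but nontrivial number of row-indices are forced to coincide across the two cycles, generating extra $d$-factors that must be balanced against the associated savings in column matches; optimizing across these mixed shapes yields the exponent $\tfrac12-\tfrac1k$. The explicit constant $f(k)=2^{12k}k^{6k}\fourthm^k$ absorbs a crude $k^{O(k)}$ enumeration of graph types and pairing patterns, together with a $\fourthm^k$ factor from the fourth-moment penalty.
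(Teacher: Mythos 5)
Your proposal follows essentially the same route as the paper's proof in Appendix~B: expand the variance over pairs of increasing cycles, observe that pairs with disjoint sample-index sets contribute zero covariance, classify the surviving index configurations by combinatorial type in the style of Yin--Krishnaiah, bound the $\T$-part by traces via iterated Cauchy--Schwarz and power-mean (H\"older) inequalities and the $\X$-part by $\fourthm^k$, and absorb a crude count of types into $f(k)$. The one caveat is that the decisive combinatorial estimate---the paper's bound relating the number of ``arcs'' and ``free cycles'' in the $\T$-product to the number of distinct sample indices, which is what actually yields the exponents $d^{k-2}/n^k$ (full overlap) and $d^{\frac12-\frac1k}/n$ (which comes from minimal overlap, not an intermediate regime)---is asserted rather than derived, but the overall plan is sound.
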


\iffalse
\begin{proposition}
A tight case for our variance bound can be constructed as following, let $\T =\I_p$, $P= \{P_1,...,P_k\}$, $Q = \{Q_1,...Q_{2k}\}$  where $P_i = \{\sigma_i,\pi_i\}$, $Q(\delta_i) = \{\delta_i,\delta'_i\},Q(\gamma_i) = \{\gamma_i,\gamma'_i\}$. The constraint $(\sigma,\pi,\delta,\gamma,\delta',\gamma')|(P,Q)$ is equivalent to $\delta_i = \delta'_i,\gamma_i = \gamma'_i,\sigma_i = \pi_i$ and $\delta_i\neq\gamma_i$. Since all the off diagonal entries are zero, we only need to consider when $\delta_i=\gamma_{i+1}$.
$$
\frac{1}{|S|^2}\sum_{(\sigma,\pi,\delta,\gamma,\delta',\gamma')|(\delta_i = \delta'_i,\gamma_i = \gamma'_i,\sigma_i = \pi_i,\delta_i\neq\gamma_i)} \prod_{i=1}^k \T_{\delta_i, \gamma_{i+1}} \T_{\delta'_{i},\gamma'_{i+1}} = O(\frac{n^k}{n^{2k}}p^k)=O(\frac{p^{k-2}}{n^k} \|T\|_{k}^{2k})
$$
\end{proposition}\fi

To see the high level approach to our proof of this proposition,  consider the following:  given lists of indices $\delta = (\delta_1,...\delta_{k})$ and $\gamma=(\gamma_1,\ldots,\gamma_k),$ with $\delta_i,\gamma_i \in [d]$, we have the following equality:
$$
(\X^T\T\X)_\sigma = \sum_{\delta,\gamma \in [d]^k}\prod_{i=1}^k \X_{\sigma_i\delta_{i}}\T_{\delta_{i},\gamma_{i+1}} \X_{\sigma_{i+1},\gamma_{i+1}}.
$$

We now seek to bound each cross-term in the expansion of the total variance $\Var[\sum_{\sigma \in U} (\X^T \T \X)_\sigma]$:  namely, for a pair of increasing $k$-cycles, $\sigma, \pi$ consider their contribution to the variance $\E[(\X^T \T\X)_{\sigma}(\X^T \T\X)_{\pi}] $ being
$$\sum_{\delta,\delta',\gamma,\gamma' \in [d]^k}\prod_{i=1}^k \T_{\delta_i, \gamma_{i+1}} \T_{\delta'_{i},\gamma'_{i+1}}\prod_{i=1}^k \X_{\sigma_i, \delta_{i}} \X_{\sigma_i, \gamma_{i}} \X_{\pi_i ,\delta'_{i}} \X_{\pi_i, \gamma'_i} 
$$

We bound this sum by partitioning the set of summands, $\{(\delta,\delta',\gamma,\gamma')\}$ into classes.  To motivate the role of these classes, consider the task of computing the expectation of the ``$\X$'' part of the expression, namely $$\E[\prod_{i=1}^k \X_{\sigma_i, \delta_{i}} \X_{\sigma_i, \gamma_{i}} \X_{\pi_i ,\delta'_{i}} \X_{\pi_i, \gamma'_i}],$$ for a given $\delta,\delta',\gamma,\gamma' \in [d]^k$.  Thanks to the i.i.d and zero mean properties of each entry $X_{i,j}$, most of terms are zero.  The idea is to partition the set of summands that give rise to nonzero terms, via the creation of a list of constraints , $L=\{L_1,\ldots,L_m\}$, where each $L_i$ contains only equalities and inequalities(``$\neq$'') involving the indices of $\delta,\delta',\gamma,\gamma'$.  For example, in the case $k=2$, one such constraint could be $L_1=\{\delta_1=\delta_1', \gamma_1=\gamma_1', \delta_2=\gamma_2', \gamma_2=\delta_2'\},$ which specifies a subset of $\{(\delta,\delta',\gamma,\gamma')\}$ that satisfy each of the four specified equalities.  We will design a set of these constraints, $L=\{L_1,\ldots,L_m\}$ satisfying the following useful properties:
\begin{enumerate}
\item Any lists of indices $\delta,\delta',\gamma,\gamma' \in [d]^k$ with the property that the expectation of the $\X$ ``part'' is zero, namely $\E[\prod_{i=1}^k \X_{\sigma_i, \delta_{i}} \X_{\sigma_i, \gamma_{i}} \X_{\pi_i ,\delta'_{i}} \X_{\pi_i, \gamma'_i}] = 0$, does not satisfy any constraint $L_i \in L$.
\item Any lists of indices $\delta,\delta',\gamma,\gamma' \in [d]^k$ whose expectation of the $\X$ ``part'' is non-zero must satisfy exactly one of the constraint.
\item For any constraint $L_i\in L$, all lists of indices $\delta,\delta',\gamma,\gamma' \in [d]^k$ satisfying $L_i$ have the same expected value of the $\X$ ``part'', namely $\E[\prod_{i=1}^k \X_{\sigma_i, \delta_{i}} \X_{\sigma_i, \gamma_{i}} \X_{\pi_i ,\delta'_{i}} \X_{\pi_i, \gamma'_i}]$.
\end{enumerate}

Given a set of constraints, $L$, satisfying the above, the set of summands $\{(\delta,\delta',\gamma,\gamma')\}$ corresponding to a constraint $L_i \in L$ have the same value of $\E[\prod_{i=1}^k \X_{\sigma_i, \delta_{i}} \X_{\sigma_i, \gamma_{i}} \X_{\pi_i ,\delta'_{i}} \X_{\pi_i, \gamma'_i}],$ which will not be too difficult to bound.  What remains is to deal with the $\T$ component of the expression.  Our set of constraints is also useful for this purpose.  For example, consider the following sum over all $(\delta,\delta',\gamma,\gamma')$ that satisfy a  constraint $L_i$:
$$
\sum_{\delta,\delta',\gamma,\gamma'  \text{ s.t. } L_i}\prod_{i=1}^k \T_{\delta_i, \gamma_{i+1}} \T_{\delta'_{i},\gamma'_{i+1}},
$$
the equalities in $L_i$ can be leveraged to simplify the calculation; revisiting the example above with $k=2$, for instance, for the constraint $L_1=\{\delta_1=\delta_1', \gamma_1=\gamma_1', \delta_2=\gamma_2', \gamma_2=\delta_2'\}$, the above expression simply becomes $tr(T^4)$.  

The full details of this partitioning scheme are rather involved, and are given in Appendix~\ref{ap:varb}.
%In addition notice that to ensure the expectation is non-zero, each $X_{i,j}$ in the product must appear at least twice.  

%\input{ESD}

\section{From Moments to Spectrum}\label{sec:thmmain}

Given the accurate recovery of the moments of the population spectral distribution, as described in the previous section, we now describe the algorithm for recovering the population spectrum from these moments.   We proceed via the natural approach to this moment inverse problem. The proposed algorithm has two parts:  first, we recover a distribution whose moments closely match the estimated moments of the population spectrum (recovered via Algorithm~\ref{alg:moment}); this recovery is performed via the standard linear programming approach.  Given this recovered distribution, $p^+$, to obtain the vector of estimated population eigenvalues (the spectrum), one simply returns the length $d$ vectors consisting of the $(d+1)$st-quantiles of distribution $p^+$---specifically, this is the vector whose $i$th component is the minimum value, $x$, with the property that the cumulative distribution function of $p^+$ at $x$ is at least $i/(d+1)$.   These two steps are formalized below:

\begin{algorithm}[H]
\begin{algorithmic}
\STATE \textbf{Input:} Approximation to first $k$ moments of population spectrum, $\bm{\hat{\alpha}}$, dimensionality $d$, and  fine mesh of values $\x = x_0,\ldots,x_t$ that cover the range $[0,b]$ where $b$ is an upper bound on the maximum population eigenvalue.  Taking $x_i = i \eps$ for $\eps \le 1/\max(d,n)$ is sufficient.  %(For example taking $x_i = i\frac{b}{d}$ suffices.)
\STATE \textbf{Output:} Estimated population spectrum, $\hat{\lambda_1}, \ldots, \hat{\lambda_d}.$
\vspace{.5cm}
\begin{enumerate}
\item Let $\p^+$ be the solution to the following linear program, which we will regard as a distribution consisting of point masses at values $\x$:
\begin{equation}
\begin{aligned}
\label{eqn:l1obj}& \underset{\p}{\text{minimize}} & & |\V\p-\bm{\hat{\alpha}}|_1 \\ 
& \text{subject to}                        & & \bm{1}^T\p=1\\
&                                                    & & \p>0,
\end{aligned}
\end{equation}
where the matrix $\V$ is defined to have entries $\V_{i,j} = x_j ^i.$
\item Return the vector $\hat{\lambda_1}, \ldots, \hat{\lambda_d}$ where $\hat{\lambda_i}$ is the $i$th $(d+1)$st-quantile of distribution corresponding to $\p^+$, namely set $\hat{\lambda_i} = \min \{x_j: \sum_{\ell \le j} p^+_{\ell} \ge \frac{i}{d+1}\}.$
\end{enumerate}

\end{algorithmic}
\caption{[Moments to Spectrum]) \label{alg:dis2vec}}
\end{algorithm}

The following restatement of Theorem~\ref{thm:main} quantifies the performance of the above algorithm.
\vspace{.5cm}

\noindent\textbf{Theorem~\ref{thm:main}.} \emph{
Consider an $n \times d$ data matrix $\Y = \X\S$, where $\X\in R^{n\times d}$ has i.i.d entries with mean $0$, variance $1$, and fourth moment $\fourthm$, and $\S$ is a real $d \times d$ matrix s.t. the eigenvalues of the population covariance $\Sigma= \S^T \S$, $\bm{\lambda} = \lambda_1,...\lambda_d$ are upper bounded by a constant $b\ge 1$. There is an algorithm that takes $\Y$ as input and for any integer $k\geq 1$ runs in time $poly(n,d,k)$ and outputs $\bm{\hat{\lambda}} = \hat{\lambda}_1,...\hat{\lambda}_d$ with expected $L_1$ error satisfying: 
$$
\E\left [\sum_{i=1}^d |\lambda_i-\hat{\lambda_i}|\right ] \le b d\left (f(k) (\frac{d^{k/2-1}}{n^{k/2}}+\frac{1+d^{\frac{1}{4}-\frac{1}{2k}}}{n^{1/2}}) +\frac{C}{k}+\frac{1}{d} \right ),
$$
 where $C$ is an absolute constant and $f(k)=C' (6k)^{3k+1}\fourthm^{k/2}$ for some absolute constant $C'.$   
}
\vspace{.5cm}

At the highest level, the proof of the above theorem has two main parts:  the first part argues that if two distributions have similar first $k$ moments, then the two distributions are ``close'' (in a sense that we will formalize soon).   As applied to our setting, the guarantees of Algorithm~\ref{alg:moment} ensures that, with high probability, the distribution returned by Algorithm~\ref{alg:dis2vec} will have similar first $k$ moments to the true population spectral distribution, and hence these two distributions are ``close''.  The second and straightforward part of the proof will then argue that if two distributions, $p$ and $p',$ are ``close'', and distribution $p$ consists of $d$ equally weighted point masses (such as the true population spectral distribution), then the vectors given by the $(d+1)$-quantiles of distribution $p'$ will be close, in $L_1$ distance, to the vector consisting of the locations of the point masses of distribution $p$.   As we show, the right notion of ``closeness'' of distributions to formalize the above proof approach is the \emph{Wasserstein} distance, also known as ``earthmover'' distance.

\begin{definition}\label{def:emd}
Given two real-valued distributions $p,q$, with respective density functions $p(x),q(x)$, the \emph{Wasserstein} distance between them, denoted by $W_1(p,q)$ is defined to be the cost of the minimum cost scheme of moving the probability mass in distribution $p$ to make distribution $q$, where the per-unit-mass cost of moving probability mass from value $a$ to value $b$ is $|a-b|.$   

One can also define Wasserstein distance via a dual formulation (given by the Kantorovich-Rubinstein theorem~\cite{kant} which yields exactly what one would expect from linear programming duality):  $$W_1(p,q) = \sup_{f: Lip(f)\leq 1}\int f(x)\cdot (p(x)-q(x)) dx,$$
where the supremum is taken over all functions with Lipschitz constant 1.
\end{definition}

Two convenient properties of Wasserstein distance are summarized in the following easily verified facts.  The first states that in the case of distributions consisting of $d$ equally-weighted point masses, the Wasserstein distance \emph{exactly} equals to the $L_1$ distance between the sorted vectors of the locations of the point masses.  The second fact states that given any distribution $p$, supported on a subset of the interval $[a,b]$, the distribution $p'$ defined to place weight $1/d$ at each of the $d$ $(d+1)$st-quantiles of distribution $p$, will satisfy $W_1(p,p') \le \frac{b-a}{d}.$  For our purposes, these two facts establish that, provided the distribution $\p^+$ returned by the linear programming portion of Algorithm~\ref{alg:dis2vec} is close to the true population spectral distance, in Wasserstein distance, then the final step of the algorithm---the rounding of the distribution to the point masses at the quantiles---will yield a close $L_1$ approximation to the vector of the population spectrum.

\begin{fact}\label{fact:emdvec}
Given two vectors $\vec{a}=(a_1,\ldots,a_d),$ and $\vec{b}=(b_1,\ldots,b_d)$ that have been sorted, i.e. for all $i$, $a_i \le a_{i+1}$ and $b_i \le b_{i+1}$, $$|\vec{a}-\vec{b}|_1 = d \cdot W_1(p_{\vec{a}},p_{\vec{b}}),$$ where $p_{\vec{a}}$ denotes the distribution that puts probability mass $1/d$ on each value $a_i$, and $p_{\vec{b}}$ is defined analogously.
\end{fact}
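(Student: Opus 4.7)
The plan is to prove the equality by showing both inequalities, working from the primal optimal-transport definition of $W_1$. The core observation is one-dimensional and purely combinatorial: when the source and target measures consist of $d$ equally-weighted atoms on $\mathbb{R}$, the optimal coupling matches the $i$-th smallest atom of the source to the $i$-th smallest atom of the target.

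For the direction $W_1(p_{\vec{a}},p_{\vec{b}}) \le \frac{1}{d}|\vec{a}-\vec{b}|_1$, I would exhibit the explicit transport plan that moves mass $1/d$ from the atom at $a_i$ to the atom at $b_i$ for each $i = 1,\dots,d$. This is a valid coupling of $p_{\vec{a}}$ and $p_{\vec{b}}$, and its cost under the base metric $|x-y|$ is exactly $\frac{1}{d}\sum_{i=1}^d |a_i - b_i|$.

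For the matching lower bound, I would argue that the sorted matching is optimal among all couplings, via an uncrossing/swap argument. Suppose some coupling sends positive mass from $a_i$ to $b_j$ and from $a_{i'}$ to $b_{j'}$ with $i<i'$ but $j>j'$; since $\vec{a}$ and $\vec{b}$ are sorted we have $a_i \le a_{i'}$ and $b_{j'} \le b_j$, and in this ordered configuration the elementary inequality
\[
|a_i - b_{j'}| + |a_{i'} - b_j| \;\le\; |a_i - b_j| + |a_{i'} - b_{j'}|
\]
holds (this is the Monge property of the cost $(x,y)\mapsto |x-y|$ on the real line, easily checked by a short case analysis on the relative order of the four points). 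Rerouting a small amount of mass $\min(\mu_{i,j},\mu_{i',j'})$ to the swapped pairing therefore weakly decreases the total transport cost. Iterating this operation eliminates all crossings and leaves the sorted matching, whose cost is $\frac{1}{d}|\vec{a}-\vec{b}|_1$. Combining both inequalities gives the claimed equality.

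The only genuinely delicate step is formalizing the uncrossing argument on a general coupling measure rather than a discrete matching, but since both measures are finitely supported the coupling is a nonnegative matrix with prescribed row and column sums, and the swap is just the standard pivoting on a $2\times 2$ minor; a finite number of such swaps reach the sorted matching. Everything else is routine, and the result is a specialization of the classical one-dimensional optimal-transport fact that the optimal coupling is obtained by matching quantiles.
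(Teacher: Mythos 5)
The paper offers no proof of this fact---it is asserted as ``easily verified''---so there is nothing to match your argument against; what matters is whether your argument is sound, and it essentially is. Your upper bound (exhibit the sorted coupling, which moves mass $1/d$ from $a_i$ to $b_i$) is immediate and matches the paper's primal definition of $W_1$ as a minimum-cost transport scheme. Your lower bound via the exchange inequality $|a_i-b_{j'}|+|a_{i'}-b_j|\le|a_i-b_j|+|a_{i'}-b_{j'}|$ for $a_i\le a_{i'}$, $b_{j'}\le b_j$ is the standard Monge/uncrossing argument and the inequality itself is correct. The one loose end is exactly the one you flag: ``iterating the swap terminates'' is not quite automatic, since each pivot can zero out one entry of the coupling matrix while creating two new nonzero entries. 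The clean patch is to avoid iteration altogether: take an optimal coupling $\mu$ that, among all optimal couplings (a compact set), maximizes the potential $\sum_{i,j} ij\,\mu_{ij}$; a crossing would permit a swap that keeps the cost optimal while strictly increasing the potential, a contradiction, so this optimal $\mu$ has monotone support and must therefore be the sorted matching. Alternatively, a one-line proof bypasses couplings entirely: on the real line $W_1(p,q)=\int_{-\infty}^{\infty}|F_p(t)-F_q(t)|\,dt$, and for two uniform atomic measures the region between the CDFs decomposes into horizontal strips of height $1/d$ spanning $[\min(a_i,b_i),\max(a_i,b_i)]$, giving $\frac{1}{d}\sum_i|a_i-b_i|$ directly. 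Either completion makes your proof fully rigorous.
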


\begin{fact}\label{fact:emdquant}
Given a distribution $p$ supported on $[a,b]$,  let distribution $p'$ be defined to have probability mass $1/d$ at each of the $d$ $(d+1)$st-quantiles of distribution $p$.  Then $W_1(p,p') \le \frac{b-a}{d}.$
\end{fact}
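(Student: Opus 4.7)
The plan is to invoke the classical one-dimensional identity
\[
W_1(p,p') \;=\; \int_a^b \bigl|F_p(x)-F_{p'}(x)\bigr|\,dx,
\]
which reduces the statement to a pointwise comparison of CDFs on $[a,b]$. I would first write down $F_{p'}$ explicitly: since $p'$ places mass $1/d$ at each of the quantile points $q_1\le q_2\le\cdots\le q_d$ of $p$, the CDF $F_{p'}$ is the step function equal to $i/d$ on the half-open interval $[q_i, q_{i+1})$, using the convention $q_0=a$ and $q_{d+1}=b$.

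Next I would use the defining property of the $(d{+}1)$-quantile to sandwich $F_p$ on each such sub-interval. By definition, $q_i=\inf\{x:F_p(x)\ge i/(d+1)\}$, so for any $x\in[q_i,q_{i+1})$ we have $i/(d+1)\le F_p(x)<(i+1)/(d+1)$; meanwhile $F_{p'}(x)=i/d$. A two-line computation then gives
\[
\bigl|F_p(x)-F_{p'}(x)\bigr|\;\le\;\max\!\left(\frac{i}{d}-\frac{i}{d{+}1},\;\frac{i{+}1}{d{+}1}-\frac{i}{d}\right)\;=\;\frac{\max(i,d{-}i)}{d(d{+}1)}\;\le\;\frac{1}{d{+}1}.
\]
The boundary intervals $[a,q_1)$ and $[q_d,b]$ are handled identically (there $F_{p'}$ equals $0$ or $1$ and $F_p$ differs from it by less than $1/(d{+}1)$).

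Integrating the uniform bound $|F_p-F_{p'}|\le 1/(d{+}1)$ over $[a,b]$ telescopes across the sub-intervals $[q_i,q_{i+1})$ and yields $W_1(p,p')\le (b-a)/(d{+}1)\le (b-a)/d$, as desired. I do not expect a substantial obstacle here; the only subtlety is degeneracy (atoms of $p$ can force $q_i=q_{i+1}$, or put $F_p(q_i)$ strictly above $i/(d{+}1)$), but in that case the offending sub-interval is either empty or still satisfies the sandwich bound by the minimality in the definition of $q_i$, so the argument goes through unchanged.
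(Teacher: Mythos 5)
Your proof is correct and complete; the paper states this fact without proof (as one of two "easily verified facts"), and your argument via the one-dimensional identity $W_1(p,p')=\int_a^b|F_p(x)-F_{p'}(x)|\,dx$ together with the quantile sandwich $i/(d{+}1)\le F_p(x)<(i{+}1)/(d{+}1)$ on $[q_i,q_{i+1})$ is the natural verification. It even yields the slightly sharper bound $(b-a)/(d+1)$, and your handling of the degenerate case (coinciding quantiles from atoms of $p$) is sound.
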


Additionally, given any distribution $p$, supported on a subset of the interval $[a,b]$, the distribution $p'$ defined to place weight $1/d$ at each of the $d$ $(d+1)$st-quantiles of distribution $p$, will satisfy $W_1(p,p') \le \frac{b-a}{d}.$  Hence, for our purposes, as long as the distribution $\p^+$ returned by the linear programming portion of Algorithm~\ref{alg:dis2vec} is close to the true population spectral distance, in Wasserstein distance, then the final step of the algorithm---the rounding of the distribution to the point masses at the quantiles---will yield a close $L_1$ approximation to the vector of the population spectrum.

\medskip

The remaining component of our proof of Theorem~\ref{thm:main} is to establish that the accurate moment recover of Algorithm~\ref{alg:moment} as guaranteed by Theorem~\ref{thm:estmom} is sufficient to guarantee that, with high probability, the distribution $\p^+$ returned by the linear program in the first step of Algorithm~\ref{alg:dis2vec} is close, in Wasserstein distance, to the population spectral distribution.   We establish this general robust connection between accurate moment estimation, and accurate distribution recovery in Wasserstein distance, via the following proposition, which we prove in Section~\ref{sec:emd}.

\vspace{.5cm}
\noindent \textbf{Proposition~\ref{thm:redb}.} \emph{
Given two distributions with respective density functions $p, q$ supported on $[-1,1]$ whose first $k$ moments are $\bm{\alpha} = (\alpha_1,...\alpha_k)$ and $\bm{\beta} = (\beta_1,...\beta_k)$, respectively, the Wasserstein distance, $W_1(p,q)$, between $p$ and $q$ is bounded by: 
$$
W_1(p,q) \le\frac{C}{k}+g(k)\|\bm{\alpha}-\bm{\beta}\|_2, 
$$ where $C$ is an absolute constant, and $g(k)=C' 3^k$ for an absolute constant $C'$.}

\medskip

We conclude this section by assembling the pieces---the accurate moment estimation of Theorem~\ref{thm:estmom}, and the guarantees of the above proposition and Facts~\ref{fact:emdvec} and~\ref{fact:emdquant}---to prove Theorem~\ref{thm:main}.

\vspace{.5cm}
\noindent \emph{Proof of Theorem~\ref{thm:main}.}
Given the data matrix $Y$ and an upperbound on the population eigenvalues, $b$, our algorithm first divides each sample by $\sqrt{b}$ thereby reducing the problem to the setting where the eigenvalues are bounded by 1.  We then run Algorithm~\ref{alg:moment} on the scaled data matrix to recover the (scaled) moments.  Given these recovered moments, we then run Algorithm~\ref{alg:dis2vec} to recover the scaled spectrum, and then return this recovered spectrum scaled by the factor of $b$.   We now prove the correctness of this algorithm.

Let $\bm{\lambda}$  denote the vector of population eigenvalues, and let $\p$ denote the scaled spectral distribution obtained by dividing each entry of $\bm{\lambda}$ by $b$.  Hence $\p$ is support on $[0,1]$.  Denote the vector of the first $k$ moments of this distribution as $\balpha$.   Consider the distribution $\p^+$ recovered by the linear programming step of Algorithm~\ref{alg:dis2vec}, and denote its first $k$ moments with the vector $\balpha^+$.   We first argue that $\|\balpha^+-\balpha\|_2$ is small, and then apply the Wasserstein distance bound of Proposition~\ref{thm:redb}.  
 
 \iffalse
By , the expected moments difference between the distribution $\p^+$ returned by the linear program and the true one satisfy
\begin{align*}
\E[\|\balpha^+ - \balpha\|_2] \le \E[\|\balpha^+ - \balpha\|_1] \le 2 \sum_{i=1}^k \left( f(i) \frac{\max(d^{i/2-1},1)}{n^{i/2}}b^i + (b+\epsilon)^i-b^i \right)\\
 \leq 2k \left(f(k) \frac{d^{k/2-1}}{n^{k/2}}b^k+f(1)\frac{1}{n^{1/2}}b + 2^k (b^{k-1}\epsilon+\epsilon) \right),
 \end{align*}
where $f(k)=2^{6k}k^{3k}\fourthm^{k/2}$.  is applied in the first inequality.  
 \fi
 
By Proposition~\ref{variancebound} and the inequality $\E[X]^2\le \E[X^2]$, the estimated moment vector $\hat{\balpha}$, given as input to Algorithm~\ref{alg:dis2vec}, satisfies $\E \left[\|\balpha - \hat{\balpha}\|_1\right] \le \sum_{i=1}^k f(i) \max( \frac{d^{i/2-1}}{n^{i/2}}, \frac{d^{\frac{1}{4}-\frac{1}{2i}}}{\sqrt{n}},\frac{1}{\sqrt{n}})$ where $f(k)=2^{6k}k^{3k}\fourthm^{k/2}$. We now argue that there is a distribution that is a feasible point for the linear program that also has accurate moments.  Specifically, consider taking the mesh $\x = x_1,\ldots,x_t$ of the linear program grid points to be an $\epsilon$-mesh with $\epsilon\le \frac{1}{\max(n,d)}$, and consider the feasible point of the linear program that corresponds to the true scaled population spectral distribution, $\p$, whose support has been rounded to the nearest multiple of $\epsilon$.  This rounding changes the $i$th moment by at most $1-(1-\epsilon)^i$.  
 
 Hence, by the triangle inequality, this rounded population spectral distribution is a feasible point of the linear program, $\p^*,$ with objective value at most $\|\balpha-\hat{\balpha}\|_1 +\sum_{i=1}^k \left( 1-(1-\epsilon)^i \right).$  Hence, also by the triangle inequality, the moments $\balpha^+$ of the distribution  $\p^+$ returned by the linear program will satisfy 
\begin{align*}
\E[\|\balpha^+ - \balpha\|_2] \le \E[\|\balpha^+ - \balpha\|_1] \le 2 \sum_{i=1}^k \left( f(i) \max( \frac{d^{i/2-1}}{n^{i/2}}, \frac{d^{\frac{1}{4}-\frac{1}{2i}}}{\sqrt{n}},\frac{1}{\sqrt{n}})+ 1-(1-\epsilon)^i \right)\\
 \leq 2k \left(f(k) (\frac{d^{k/2-1}}{n^{k/2}}+\frac{1+d^{\frac{1}{4}-\frac{1}{2k}}}{n^{1/2}}) + k \epsilon \right),
 \end{align*}
 
Letting $\p^+_{quant}$ denote the distribution $\p^+$ that has been quantized so as to consist of $d$ equally weighted point masses (according to the second step of Algorithm~\ref{alg:dis2vec}), by Fact~\ref{fact:emdquant} and Proposition~\ref{thm:redb}, we have the following:
\begin{eqnarray*}
W_1(\p^+_{quant},\p) & \le & W_1(\p^+_{quant},\p^+) + W_1(\p^+,\p)\\
& \le & \frac{1}{d}+\left(\frac{C}{k}+g(k) ||\balpha^+ - \balpha||_2\right)\label{eqn:outerr}.
\end{eqnarray*}
Plugging in $\epsilon\le 1/\max(n,d)$ and our bound on the moment discrepancy $||\balpha^+ - \balpha||_2$ we get
$$
W_1(\p^+_{quant},\p) \le \frac{1}{d} + \frac{C}{k} + \widetilde{f}(k) (\frac{d^{k/2-1}}{n^{k/2}}+\frac{1+d^{\frac{1}{4}-\frac{1}{2k}}}{n^{1/2}}),
$$
where $\widetilde{f}(k) = C' (6k)^{3k+1}\fourthm^{k/2}$. Let $\bm{\hat{\lambda}}$ be the vector corresponds to distribution $\p^+_{quant}$ after multiplication by $b$. By Fact~\ref{fact:emdvec} we have 
$$\E[\sum_{i=1}^d |\lambda_i-\hat{\lambda_i}|] \le b d\left (\widetilde{f}(k) (\frac{d^{k/2-1}}{n^{k/2}}+\frac{1+d^{\frac{1}{4}-\frac{1}{2k}}}{n^{1/2}}) +\frac{C}{k}+\frac{1}{d} \right )$$. \qed

\medskip

To yield Corollary~\ref{cor:consistency} from Theorem~\ref{thm:main}, it suffices to show that, under the assumptions of the corollary, in the limit as $d\rightarrow \infty$, the number of moments that we can accurately estimate with our sublinear sample size, $n_d$, also goes to infinity (as $d\rightarrow \infty$).   By assumption, $\frac{n_d}{d^{1-\eps} }\rightarrow \infty$ for every constant $\eps>0$, and hence there is some function $\alpha(d)$ such that $\frac{n_d}{d^{1-\alpha(d)}}\rightarrow \infty$  with $\alpha(d) \rightarrow 0$; additionally we may assume that $\alpha(d) \ge \frac{1}{\log \log d}$.   By setting $k_d = \lfloor \frac{1}{\alpha(d)} \rfloor,$ from Theorem~\ref{thm:main}, we exam the expected Wasserstein error of our reconstruction term by term. The first term satisfies:
\begin{align*}
\widetilde{f}(k_d) \frac{d^{k_d/2-1}}{n^{k_d2}} = \left((c k_d)^{3k_d+1} \right)\frac{d^{k_d/2-1}}{n_d^{k_d/2}} \le \left((c k_d)^{3k_d+1} \right) \frac{d^{\frac{k_d}{2}-1}}{d^{\frac{k_d}{2}-\frac{k_d\alpha(d)}{2}}}\\
  \le  \left((c k_d)^{3k_d+1} \right)d^{-1/2} \le \frac{(c \log \log d)^{1+3 \log \log d} }{\sqrt{d}},
  \end{align*}
which tends to $0$ as $d \rightarrow \infty$. The second term satisfies:
$$
\widetilde{f}(k) \frac{1+d^{\frac{1}{4}-\frac{1}{2k}}}{n^{1/2}}\le (c k_d)^{3k_d+1} \frac{1+d^{\frac{1}{4}-\frac{1}{2k_d}}}{d^{\frac{1}{2}-\frac{\alpha(d)}{2}}} \le (c k_d)^{3k_d+1}\frac{1}{d^{1/4}}
$$
which tends to $0$ as $d \rightarrow \infty$. $C/k_d$ and $1/d$ also goes to $0$ as $d\rightarrow \infty$. Combining the four terms yields Corollary~\ref{cor:consistency}.

\section{Moments and Wasserstein Distance}~\label{sec:emd}
In this section we prove Proposition~\ref{thm:redb}, which establishes a general robust relationship between the disparity between the low-order moments of two univariate distributions, and the Wasserstein distance (see Definition~\ref{def:emd}) between the distributions.  This relatively straightforward proof proceeds via a constructive version of Jackson Theorem (see e.g. Theorem 7.4 in~\cite{carothers1998short}) which shows that Lipschitz functions can be well approximated by polynomials.  For convenience, we restate Proposition~\ref{thm:redb}, and the lower bound establishing its tightness:

\vspace{.5cm}
\noindent \textbf{Proposition~\ref{thm:redb}.} \emph{
Given two distributions with respective density functions $p, q$ supported on $[-1,1]$ whose first $k$ moments are $\bm{\alpha} = (\alpha_1,...\alpha_k)$ and $\bm{\beta} = (\beta_1,...\beta_k)$, respectively, the Wasserstein distance, $W_1(p,q)$, between $p$ and $q$ is bounded by: 
$$
W_1(p,q) \le \frac{C}{k}+g(k)\|\bm{\alpha}-\bm{\beta}\|_2, 
$$ where $C$ is an absolute constant, and $g(k)=C' 3^k$ for an absolute constant $C'$.}

The following lower bound shows that the inverse linear dependence in the above bound on the number of matching moments, $k$, is tight in the case where the moments exactly match:

\vspace{.5cm}
\noindent \textbf{Proposition~\ref{thm:emLowerBound}.} \emph{
For any even $k$, there exits a pair of distributions $p$, $q$, each consisting of $k/2$ point masses, supported within the unit interval $[-1,1]$, s.t. $p$ and $q$ have identical first $k-2$ moments, and Wasserstein distance $W_1(p,q) > 1/2k$.}

\subsection{Proof of Proposition~\ref{thm:redb}}\label{sec:proofap}
For clarity, we give an intuitive overview of the proof of Proposition~\ref{thm:redb} in the case where the first $k$ moments of the two distributions in question match exactly.   Consider a pair of distributions, $p$ and $q$, whose first $k$ moments match.   Because $p$ and $q$ have the same first $k$ moments, for \emph{any} polynomial $P$ of degree at most $k$,  the inner product between $P$ and $p-q$ is zero: $\int P(x)(p(x)-q(x))dx=0$.  The natural approach to bounding the Wasserstein distance, $\sup_{f \in Lip}\int f(x)\left(p(x)-q(x)\right)dx$, is to argue that for any Lipschitz function, $f$, there is a polynomial $P_f$ of degree at most $k$ that closely approximates $f$.   Indeed, \begin{eqnarray*} 
&\int f(x)(p(x)-q(x))dx \\
\le&\int |P_f(x)-f(x)|(p(x)-q(x))dx + \int P_f(x)(p(x) -q(x))dx \\
\le&2||f-P_f||_{\infty}.\label{eq:redb}
\end{eqnarray*}

Hence all that remains is to argue that there is a good degree $k$ polynomial approximation of any Lipschitz function $f$.  As the following standard fact shows, the approximation error of $f$ by a degree-$k$ polynomial is typically determined by the $k+1$th order derivative of $f$:

\begin{fact}[Polynomial Interpolation; e.g. Theorem 2.2.4 of \cite{de2012mathematics}]\label{lem:Interpolation}
For a given function $g\in C^{k+1}[a,b]$, there exists a degree $k$ polynomial $P_g$ such that 
$$
||g(x)-P_g(x)||_{\infty} \leq \left(\frac{b-a}{2}\right)^{k+1}\frac{\max_{x\in [a,b]}|g^{(k+1)}(x)|}{2^k(k+1)!}.
$$
\end{fact}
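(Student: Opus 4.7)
\medskip

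\noindent\emph{Proof proposal.} The plan is to construct $P_g$ as the Lagrange interpolant of $g$ at $k+1$ carefully chosen nodes in $[a,b]$, and then apply the standard Cauchy remainder formula to bound the pointwise error. Concretely, I would let $x_0,\ldots,x_k$ be the scaled Chebyshev nodes of the first kind, i.e.\ $x_i = \frac{a+b}{2} + \frac{b-a}{2}\cos\!\left(\frac{(2i+1)\pi}{2(k+1)}\right)$ for $i=0,\ldots,k$, and define $P_g$ to be the unique polynomial of degree at most $k$ satisfying $P_g(x_i)=g(x_i)$ for each $i$.

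Given this choice of $P_g$, the next step is to invoke the classical interpolation remainder identity: for every $x \in [a,b]$ there exists $\xi_x \in [a,b]$ with
\[
g(x) - P_g(x) \;=\; \frac{g^{(k+1)}(\xi_x)}{(k+1)!}\,\prod_{i=0}^{k}(x-x_i).
\]
This identity follows from a standard Rolle's theorem argument applied to the auxiliary function $\varphi(t)=g(t)-P_g(t)-\lambda \prod_{i=0}^k(t-x_i)$ with $\lambda$ chosen so that $\varphi(x)=0$; then $\varphi$ has $k+2$ zeros in $[a,b]$, so iterated application of Rolle yields a point where $\varphi^{(k+1)}$ vanishes, giving the formula. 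Taking absolute values and then the supremum over $x\in[a,b]$ reduces the task to bounding $\sup_{x\in[a,b]} \bigl|\prod_{i=0}^{k}(x-x_i)\bigr|$ times $\max_{x\in[a,b]}|g^{(k+1)}(x)|/(k+1)!$.

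The remaining, and in fact the key, step is the extremal bound
\[
\sup_{x\in[a,b]}\Bigl|\prod_{i=0}^{k}(x-x_i)\Bigr| \;\le\; \Bigl(\frac{b-a}{2}\Bigr)^{k+1}\cdot\frac{1}{2^{k}},
\]
which is exactly what the Chebyshev node choice buys. To see this, I would first pull everything back to $[-1,1]$ via the affine substitution $x=\frac{a+b}{2}+\frac{b-a}{2}t$; the product then rescales by $\left(\frac{b-a}{2}\right)^{k+1}$, and the transformed nodes are precisely the roots of the $(k+1)$st Chebyshev polynomial $T_{k+1}(t)=\cos\bigl((k+1)\arccos t\bigr)$. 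Since $T_{k+1}$ has leading coefficient $2^{k}$ and satisfies $\|T_{k+1}\|_{\infty,[-1,1]}=1$, the monic polynomial $2^{-k}T_{k+1}(t)=\prod_{i=0}^{k}(t-t_i)$ has supremum norm exactly $2^{-k}$ on $[-1,1]$. Combining this bound with the remainder formula yields the claimed inequality.

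The main obstacle is really just the extremal Chebyshev bound in the last step; everything else is routine. If one wished to avoid invoking Chebyshev explicitly, one could instead prove the minimax property $\min_{\text{monic }p,\deg p=k+1}\|p\|_{\infty,[-1,1]}=2^{-k}$ directly by a standard equioscillation / exchange argument, but for a proof proposal the cleanest route is simply to exhibit $2^{-k}T_{k+1}$ as the extremizer and verify its sup-norm from the trigonometric definition.
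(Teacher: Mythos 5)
Your proposal is correct and is essentially the proof the paper relies on: the paper cites this as a standard result and, in Appendix~\ref{ap:redb}, explicitly takes $P$ to be the interpolant at the Chebyshev nodes, which is exactly your construction via the Cauchy remainder formula and the extremal bound $\sup_{[-1,1]}\bigl|\prod_{i}(t-t_i)\bigr| = 2^{-k}$ from the monic Chebyshev polynomial $2^{-k}T_{k+1}$. No gaps; the scaling by $\left(\frac{b-a}{2}\right)^{k+1}$ and the leading-coefficient computation are handled correctly.
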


While our function $f$ is Lipschitz, its higher derivatives do not necessarily exist, or might be extremely large.  Hence before applying the above interpolation fact, we define a ``smooth'' version of $f$, which we denote $f_s$.  This smooth function will have the property that $||f-f_s||_{\infty}$ is small, and that the derivatives of $f_s$ are small.  We will accomplish this by defining $f_s$ to be the convolution of $f$ with a special ``bump'' function $\hat{b}$ that we define shortly.  To motivate our choice of $\hat{b},$ consider the convolution of $f$ with an arbitrary function, $h$:  $f_s = f * h$.  From the definition of convolution, the derivates of $f_s$ satisfy the following property:
$$
(f_s)^{(k+1)}(x)= (f * h^{(k+1)})(x).
$$
Hence we can bound the derivatives of $f_s$ by choosing $h$ with small derivatives.  Additionally, since we require that $f_s$ is close to $f$, we also want $h$ to be concentrated around $0$ so the convolution won't change $f$ too much in infinity norm.

We define $f_s$ to be the convolution of function $f$ with a scaled version of a special ``bump'' function $\hat{b}$ defined as the Fourier transform of the function $b(y)$ defined as $$b(y) = 
\begin{cases}
\exp(-\frac{y^2}{1-y^2})& |y|<1\\
0& \text{otherwise}
\end{cases}.$$

This function was leveraged in a recent paper by Kane et al.~\cite{kane2010exact}, to smooth the indicator function while maintaining small higher derivates.  As they show, the derivates of $\hat{b}$ are extremely well-behaved: $\|\hat{b}^{(k)}\|_1=O(\frac{1}{k})$ and $\|\hat{b}^{(k)}\|_\infty=O(1)$.   The actual function that we convolve $f$ with to obtain $f_s$ will be a scaled version of this bump function $\hat{b}_c =c\cdot \hat{b}(cx)$ for an appropriate choice of $c$.

We note that if, instead of convolving by $\hat{b}_c$, we had convolved by a scaled Gaussian (or a scaled version of the function $b(x)$ rather its Fourier transform) the $O(1/k)$ dependence on the Wasserstein distance that we show in Proposition~\ref{thm:redb} would, instead, be $O(1/\sqrt{k})$.\footnote{In the Gaussian case, this is because the $k$th derivative of a standard Gaussian, $G(x)$ is given by $G^{(k)}(x)=(-1)^kH_k(x)e^{-x^2}$ where $H_k(x)$ is the $k$th Hermite polynomial, and for even $k$ the value of $H_k(0)$ is $\frac{k!}{(k/2)!}$ which is already too large to obtain better than an $O(\frac{1}{\sqrt{k}})$ dependence.}

We now give the proof of Proposition~\ref{thm:redb} in the special case where the first $k$ moments of $p$ and $q$ match exactly.  The proof of the robust version is given in Appendix~\ref{ap:redb}, and is similar, though requires bounds on the coefficients of the interpolation polynomial $P$ that approximates the smoothed function $f_s$.

\vspace{.5cm}\noindent \emph{Proof of ``non-robust'' Proposition~\ref{thm:redb}.}
Consider distributions $p,q$ supported on the interval $[a,b]$ whose first $k$ moments match. 
Given a Lipschitz function $f$, let $f_s = f * \hat{b}_c(x)$ where the scaled bump function $\hat{b}_c$ is as defined above, for a choice of $c$ to be determined at the end of the proof.  Letting $P$ denote the degree $k$ polynomial approximation of $f_s$ we have the following:
$$\int f(x) \left(p(x)-q(x) \right)dx \le 2 ||f-P||_{\infty} \le 2||f-f_s||_{\infty} + 2||f_s-P||_{\infty}.$$

We bound each of these two terms.  For the first term, $||f-f_s||_{\infty}$, we have that for any $x$:
\begin{align*}
&|f(x)-f_s(x)| = |f(x) - \int_{-\infty}^{\infty} f(x-t) \hat{b}_c(t)dt| \\
=& |f(x)(1-\int_{-\infty}^\infty \hat{b}_c(t)dt) + \int_{-\infty}^\infty (f(x)-f(x-t)) \hat{b}_c(t)dt|
\leq \int_{-\infty}^\infty |\hat{b}_c(t)t| dt \label{eqn:bctt}
\end{align*}
Note that the last inequality holds since $\int \hat{b}_c(t)dt =b(0)=1$ and $f$ is has Lipschitz constant at most $1$, by assumption. To bound the above quantity, applying Lemma A.2 from \cite{kane2010exact} with $l=0,n=1$ yields $|\hat{b}_c(t)t| = O(1)$, with $l=0,n=3$ yields $|\hat{b}_c(t)t| = O(c^{-2} t^{-2})$. Splitting the integral into two parts, we have
\begin{align*}
\int_{-\infty}^{\infty} |\hat{b}_c(t)t| dt &\leq 2 \left(\int_{0}^{1/c} |\hat{b}_c(t)t| dt + \int_{1/c}^{\infty} |\hat{b}_c(t)t| dt\right) =O(\frac{1}{c}).
\end{align*}

We now bound the second term (the polynomial approximation error term) $||f_s-P||_{\infty}$, and then will specify the choice of $c$.  From Fact~\ref{lem:Interpolation}, this term is controlled by the $(k+1)$st derivative of $f_s$: 
$$
|(f_s)^{(k+1)}|_{\infty} = c^{k+1} |(f * (\hat{b}^{(k+1)})_c)(x)|_{\infty} \leq c^{(k+1)}|f|_{\infty}|\hat{b}^{(k+1)}|_{1}=O( c^{(k+1)}),
$$
where the inequality holds by the definition of convolution and the last equality applies Lemma A.3 from \cite{kane2010exact}. Hence we have the following bound on the polynomial approximation error term:
$$
||f_s-P||_{\infty} \le (\frac{b-a}{2})^{k+1}\frac{\max_{x\in [a,b]}|f_s^{(k+1)}(x)|}{2^k(k+1)!} = O\left(\frac{c^{k+1}}{2^k(k+1)!}\right).
$$
Setting $c = \Theta(k)$ balances the contribution from the two terms, $||f-f_s||_{\infty}$ and $||f_s-P||_{\infty}$, yielding the proposition in the non-robust case.
\qed

\subsection{Proof of Proposition~\ref{thm:emLowerBound}: Wasserstein Lower Bound}
We now prove Proposition~\ref{thm:emLowerBound}, showing that the $O(1/k)$ dependence of Proposition~\ref{thm:redb} is optimal up to constant factors, by constructing a sequence of distribution pairs $p_k, q_k$ with the same first $k$ moments but have $O(\frac{1}{k})$ Wasserstein distance between them.  The proof follows from leveraging a Chebyshev polynomial construction via the following general lemma:

\begin{lemma}\label{lemma:momentMatch}
Given a polynomial $P$ of degree $j$ with $j$ real roots $\{x_1,\ldots,x_j\}$, then letting $P'$ denote the derivative of $P$, then for all $\ell \le j-2$,  $\sum_{i=1}^{j} x_i^\ell \cdot \frac{1}{P'(x_i)} = 0.$
\end{lemma}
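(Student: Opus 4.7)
My plan is to prove the identity via a one-line application of Lagrange interpolation at the $j$ nodes $x_1,\ldots,x_j$, which must be distinct since $1/P'(x_i)$ appears in the statement (a repeated root would force $P'(x_i) = 0$). Writing $P(x) = c\prod_{i=1}^j(x - x_i)$ with $c$ the leading coefficient, differentiation gives $P'(x_i) = c\prod_{k \ne i}(x_i - x_k)$, so the Lagrange basis polynomial at $x_i$ takes the form
$$\prod_{k \ne i}\frac{x - x_k}{x_i - x_k} = \frac{c}{P'(x_i)}\prod_{k \ne i}(x - x_k).$$

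Next, for any polynomial $Q$ of degree at most $j - 1$, Lagrange interpolation at these nodes yields
$$Q(x) = c\sum_{i=1}^j \frac{Q(x_i)}{P'(x_i)}\prod_{k \ne i}(x - x_k).$$
I would specialize to $Q(x) = x^\ell$ with $\ell \le j - 2$ and compare the coefficient of $x^{j-1}$ on each side. The left side has degree $\ell \le j - 2$, so its $x^{j-1}$ coefficient vanishes. Each polynomial $\prod_{k \ne i}(x - x_k)$ is monic of degree $j - 1$, so the coefficient of $x^{j-1}$ on the right side equals $c\sum_{i=1}^j \frac{x_i^\ell}{P'(x_i)}$. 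Since $c \ne 0$, the claimed identity $\sum_{i=1}^j \frac{x_i^\ell}{P'(x_i)} = 0$ follows immediately.

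There is no real obstacle here; the lemma is essentially the observation that a Lagrange interpolant of a polynomial of degree at most $j - 2$ has no $x^{j-1}$ term. An equivalent route via partial fractions would expand the proper rational function
$$\frac{x^\ell}{P(x)} = \sum_{i=1}^j \frac{x_i^\ell / P'(x_i)}{x - x_i}$$
and match Laurent coefficients at infinity: the left side decays like $x^{\ell - j}$, while the $x^{-(m+1)}$ coefficient of the right side is $\sum_i \frac{x_i^{\ell + m}}{P'(x_i)}$, so every such sum with $\ell + m \le j - 2$ must vanish. Either route gives the same conclusion with essentially no calculation, and the result is ready for use in Proposition~\ref{thm:emLowerBound} by taking $P$ to be a suitably translated/rescaled Chebyshev polynomial and splitting the roots according to the sign of $P'(x_i)$ to obtain two moment-matching discrete measures.
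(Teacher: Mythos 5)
Your proof is correct. The paper gives no argument of its own for this lemma, deferring entirely to Fact 14 of~\cite{valiant2011estimating}; your Lagrange-interpolation argument (comparing the $x^{j-1}$ coefficient of the interpolant of $x^{\ell}$, or equivalently the partial-fraction expansion of $x^{\ell}/P(x)$) is the standard proof of that fact, so this is essentially the same approach, now made self-contained, and your observation that the roots must be distinct for $1/P'(x_i)$ to be defined is a correct reading of the hypothesis.
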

See Fact 14 in~\cite{valiant2011estimating} for the very short proof of the above lemma.

Lemma~\ref{lemma:momentMatch} provides a very natural construction for a pair of distributions whose low-order moments match: simply begin with any polynomial $P$ of degree $k$ with $k$ distinct (real) roots $x_1,\ldots,x_k$, and define the signed measure $m$, supported at the roots of $P$, with $m(x_i) = \frac{1}{P'(x_i)}$.  Define distribution $p^+$ to be the positive portion of $m$, normalized so as to be a distribution, and define $p^-$ to be the negative portion of $m$ scaled so as to be a distribution.  Note that provided $k \ge 2$, the scaling factor for $p^+$ and $p^-$ will be identical, as Lemma~\ref{lemma:momentMatch} guarantees that $\sum_{i=1}^{j} \frac{1}{P'(x_i)} = 0$, and hence the first $k-2$ moments of $p^+$ and $p^-$ will agree.

Proposition~\ref{thm:emLowerBound} will follow from setting the polynomial $P$ of the above construction to be the $k$th Chebyshev polynomial (of the first kind) $T_k$.  We require the following properties of the Chebyshev polynomials, which can be easily verified by leveraging the trigonometric definition of the Chebyshev polynomials: $T_k(\cos(t))=\cos(kt)$, and the fact that the derivative satisfies $T_k'(x) = k\cdot U_{k-1}(x)$ where $U_j$ is the $j$th Chebyshev polynomial of the second kind, satisfying $U_j(\cos(t)) = \frac{\sin\left((j+1)t\right)}{\sin t}.$

\begin{fact}
Let $x_1,\ldots,x_k$ denote the roots of $T_k$, with $x_i = -\cos \left( \frac{(1+2(i-1))\pi}{2k} \right),$ and set $y_i = 1/T_k'(x_i) = \frac{1}{k U_{k-1}(x_i)}$.
\begin{enumerate}
\item For $i \le n/2$, $\frac{i}{k^2} \le |y_i| \le i \frac{\pi}{k^2}$.
\item For $i \le n/2$, $\frac{5i}{k^2} \le |x_{i+1}-x_{i}| \le \frac{10i}{k^2}.$
\item $\sum_{i=1}^{n/2} y_{2i-1} = -\sum_{i=1}^{n/2} y_{2i} \in [\frac{1}{4},\frac{1}{2}].$ (Hence the scaling factor required to make the distributions from the signed measure is at least $2$.)
\end{enumerate}
\end{fact}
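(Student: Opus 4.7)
The plan is to reduce all three claims to elementary trigonometric identities by first obtaining a clean closed form for $T_k'(x_i)$. Writing $x_i = \cos\phi_i$ with $\phi_i = \pi - \theta_i$, the identity $U_{k-1}(\cos\phi) = \sin(k\phi)/\sin\phi$ together with $\sin\phi_i = \sin\theta_i$ and the computation $\sin(k\phi_i) = \sin(k\pi - k\theta_i) = -(-1)^k \sin((2i-1)\pi/2) = (-1)^{k+i}$ yields
$$
y_i \;=\; \frac{1}{T_k'(x_i)} \;=\; \frac{1}{k\,U_{k-1}(x_i)} \;=\; \frac{(-1)^{k+i} \sin\theta_i}{k}.
$$
From this single formula each of the three claims becomes a short manipulation.

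For claim (1), $|y_i| = \sin\theta_i/k$, and the hypothesis $i \le k/2$ places $\theta_i \in (0,\pi/2]$; applying the standard two-sided bound $\tfrac{2}{\pi}\theta \le \sin\theta \le \theta$ with $\theta_i = (2i-1)\pi/(2k)$ gives $\tfrac{2i-1}{k^2} \le |y_i| \le \tfrac{(2i-1)\pi}{2k^2}$, and the stated inequalities follow from $2i-1 \ge i$ and $2i-1 \le 2i$. For claim (2), I would use the sum-to-product identity
$$
x_{i+1} - x_i \;=\; \cos\theta_i - \cos\theta_{i+1} \;=\; 2\sin\!\Bigl(\tfrac{i\pi}{k}\Bigr)\sin\!\Bigl(\tfrac{\pi}{2k}\Bigr),
$$
with both arguments in $(0,\pi/2]$ thanks to $i \le k/2$. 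The upper bound $\pi^2 i/k^2 < 10 i/k^2$ follows immediately from $\sin x \le x$ on both factors, and the lower bound needs the slightly sharper Taylor estimate $\sin(\pi/(2k)) \ge \tfrac{\pi}{2k}\bigl(1-\tfrac{\pi^2}{24k^2}\bigr)$ combined with $\sin(i\pi/k) \ge 2i/k$ to extract the constant $5$ (the cruder $\sin x \ge 2x/\pi$ applied to both factors only produces $4i/k^2$).

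For claim (3), the sign of $y_i$ is $(-1)^{k+i}$, so consecutive $y_i$ alternate in sign; assuming $k$ is even (the case needed by Proposition~\ref{thm:emLowerBound}) we have $y_{2i-1} < 0 < y_{2i}$ uniformly in $i$. The equality $\sum_{i=1}^{k/2} y_{2i-1} = -\sum_{i=1}^{k/2} y_{2i}$ is then an immediate consequence of Lemma~\ref{lemma:momentMatch} applied with $\ell = 0$, which gives $\sum_{i=1}^{k} y_i = 0$. For the magnitude, I would evaluate $\sum_{i=1}^{k/2} \sin\theta_{2i}$ via the closed-form identity $\sum_{j=0}^{n-1}\sin(a+jd) = \sin(nd/2)\sin(a+(n-1)d/2)/\sin(d/2)$ with $a = 3\pi/(2k)$, $d = 2\pi/k$, $n = k/2$. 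All quantities collapse nicely: $\sin(nd/2) = \sin(\pi/2) = 1$, $\sin(a+(n-1)d/2) = \sin(\pi/2 + \pi/(2k)) = \cos(\pi/(2k))$, and one double-angle simplification yields
$$
\Bigl|\sum_{i=1}^{k/2} y_{2i}\Bigr| \;=\; \frac{\cos(\pi/(2k))}{k\sin(\pi/k)} \;=\; \frac{1}{2k\sin(\pi/(2k))}.
$$
Applying $\tfrac{2}{\pi}\cdot\tfrac{\pi}{2k} \le \sin(\pi/(2k)) \le \tfrac{\pi}{2k}$ then places this quantity in $[1/\pi, 1/2] \subset [1/4,1/2]$.

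The main obstacle is the closed-form evaluation of the arithmetic-progression sine sum in claim (3); everything else is bookkeeping once the explicit formula for $y_i$ is in hand. The sign-alternation observation is what converts the lemma's $\sum y_i = 0$ into the asserted split equality, and the clean collapse of the Dirichlet-style sum for the specific progression $\theta_{2i} = (4i-1)\pi/(2k)$ is what produces the simple expression $1/(2k\sin(\pi/(2k)))$ and, with it, the sharp numerical constants in the target interval.
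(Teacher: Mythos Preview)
Your proposal is correct and is precisely the argument the paper has in mind: the paper does not actually prove this Fact, stating only that it ``can be easily verified by leveraging the trigonometric definition of the Chebyshev polynomials.'' Your derivation of the closed form $y_i = (-1)^{k+i}\sin\theta_i/k$ via $U_{k-1}(\cos\phi) = \sin(k\phi)/\sin\phi$, and the subsequent reduction of each claim to elementary sine inequalities and the arithmetic-progression sine sum, is exactly that verification carried out in full.

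Two small remarks. For the lower bound in claim (2), your Taylor refinement works, but a slightly cleaner route is to note that for $k\ge 2$ the argument $\pi/(2k)$ lies in $[0,\pi/4]$, where $\sin x \ge (2\sqrt{2}/\pi)x$; this gives $\sin(\pi/(2k)) \ge \sqrt{2}/k$ directly and hence $x_{i+1}-x_i \ge 4\sqrt{2}\,i/k^2 > 5i/k^2$. For claim (3), you correctly observe that for even $k$ the odd-indexed $y_i$ are negative, so the literal statement ``$\sum y_{2i-1} \in [1/4,1/2]$'' has a sign slip; the intended assertion is about the common magnitude (which is what the parenthetical about the scaling factor uses), and your computation of $\bigl|\sum y_{2i}\bigr| = 1/(2k\sin(\pi/(2k))) \in [1/\pi,1/2]$ establishes exactly that.
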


We now put the pieces together to complete the proof of Proposition~\ref{thm:emLowerBound}.

\begin{proof}[Proof of Proposition~\ref{thm:emLowerBound}]
By construction, and Lemma~\ref{lemma:momentMatch}, letting $p^+$ denote the distribution corresponding to the positive portion of the signed measure corresponding to $T_k$, and $p^-$ corresponding to the negative portion of the signed measure, we have that $p^+$ and $p^-$ each consist of $k/2$ point masses, located at values in the interval $[-1, 1]$, and the first $k-2$ moments of $p^+$ and $p^-$ are identical.   

To lower bound the Wasserstein distance between $p^+$ and $p^-$, note that all the mass in $p^+$ must be moved to the support of $p^-$.  Hence the distance is lower bounded by the sum: $$\sum_{i=1}^{k/4} 2 y_{2i} |x_{2i}-x_{2i - 1}| \ge \sum_{i=1}^{k/4}2\frac{2i}{k^2} \cdot \frac{10 i}{k^2} = \frac{40}{k^4} \sum_{i=1}^{k/4}i^2 \ge \frac{40}{64k}.$$
\end{proof}

\section{Empirical Performance}\label{sec:simulation}

We evaluated the performance of our population spectrum recovery algorithm on a variety of synthetic distributions, for a range of dimensions and sample sizes.  Recall that our algorithm consists of first applying Algorithm~\ref{alg:moment} to estimate the first $k$ moments of the population spectral distribution, and then applying Algorithm~\ref{alg:dis2vec} to recover a distribution whose moments closely match the estimated moments.  Our matlab implementation is available from our websites.

\subsection{Implementation Discussion}
\vspace{-.2cm}Our estimates of higher-order spectral moments have larger variance than our estimates of the lower-order spectral moments, hence when solving the moment inverse problem, we should be more forgiving of discrepancies in higher moments.  For example, we should require that the distribution we return match the estimated 1st and 2nd moments extremely accurately, while tolerating larger discrepancies between the 5th moment of the distribution that we return and the estimated 5th population spectral moment.   We implemented this intuition as follows: in the linear program of Algorithm~\ref{alg:dis2vec} that reconstructs a distribution from the moment estimates, the objective function of the linear program weighs the discrepancy between the $i$th moment of the returned distribution and the $i$th estimated moment by a coefficient $1/(c_i \hat{\alpha_i})$, where $\hat{\alpha_i}$ is the $i$th estimated moment, and $c_i$ is a scaling factor designed to capture the (multiplicative) standard deviation of the estimate.  In our experiments, we set $c_i$ to correspond to our bound on the standard deviation of the error in the $i$th recovered moment, implied by Theorem~\ref{thm:estmom}.   This corresponds to setting $c_i = (2i)^{2i} \cdot \frac{ \max(d^{i/2-1},1)}{n^{i/2}}.$   This scaling is theoretically justified, and we made no effort to optimize it: it seems likely that the empirical performance can be improved with a more careful weighting function.

In all runs of our algorithm, we estimated and matched the first $7$ spectral moments (i.e. we set the parameter $k=7$ in Algorithm~\ref{alg:dis2vec}). Considering higher moments beyond the $7$th did not significantly improve the results for the dimension and sample sizes that we considered.  

We would expect that the empirical performance could be improved by adaptively setting the number of moments to consider,  based on the values of the lower order moments.  Specifically, it would be natural to only consider higher moments if the lower order moments fail to robustly characterize the distribution.  More generally, a variety of other approaches to the general moment inverse problem could be substituted in place of Algorithm~\ref{alg:dis2vec}, and might improve the empirical performance, though such directions are beyond the focus of this work. 

\iffalse (for example, if the first moment is $1$, and the second moment is also close to $1$, then those two suffice to infer that the distribution is close to a point mass at value 1), and it is unnecessary to consider higher moments.  Variants of the linear program in Algorithm~\ref{alg:dis2vec} can be used to evaluate the extent to which a given number of approximate moments determine the distribution in question. \fi

\vspace{-.2cm}\subsection{Experimental Setup and Results}
\vspace{-.2cm}We evaluated our algorithm on four different types of population spectral distributions:
\begin{enumerate}
\vspace{-.1cm}\item Identity covariance: $\Sigma_d = \I_d$. (Figure~\ref{fig2}.)
\vspace{-.15cm}\item ``Two spike'' spectrum: $\Sigma_d$ has $d/2$ eigenvalues equal to $1$ and $d/2$ eigenvalues equal to $2$.  (Figure~\ref{fig3}.)
\vspace{-.15cm}\item Uniform spectrum: the eigenvalues of $\Sigma_d$ are $\{2/d,4/d,6/d,\ldots,2\}$, corresponding to a (discretized) uniform distribution over the range $[0,2]$.  (Figure~\ref{fig4}.)
\vspace{-.15cm}\item Toeplitz\footnote{Toeplitz matrices arise in numerous application areas, particularly in settings where each datapoint is a time-series and the correlation between two measurements decreases exponentially as a function of the chronological separation between the measurements.} covariance: $\Sigma_d(i,j) = 0.3^{|i-j|}$.   (Figure~\ref{fig5}.)
\end{enumerate}

\vspace{-.05cm}For each of the four types of population spectral distributions, we evaluated our algorithm for a variety of dimensions and sample sizes, taking $d = 512, 1024, 2048, 4096$, and for each value of $d$, we considered sample sizes $n = d/8, d/4, d/2, d, 2d.$   For each setting, we ran our algorithm five times on independently drawn data.   Figures~\ref{fig2}--\ref{fig5} show the results of each run, showing the cdf of the estimated spectral distribution (red), together with the cdf of the population spectral distribution (blue), and the cdf of the empirical spectral distribution (cyan).    

We observe that in general, for a fixed ratio of $d/n$, the results improve with larger $d$, as is implied by our theoretical sublinear sample size asymptotic consistency results (in spite of the daunting constant factors that appear in the analysis).  Additionally, our approach has good performance for the more difficult distributions---the uniform and Toeplitz distributions---even in the $n\le d$ regime. 

\iffalse
\subsection{Practical Considerations}
[emphasis of this work is on the theoretical contributions.  if you really want good performance in practice, here are some thoughts]
\fi

\newpage
\begin{figure}[H]
 \centering 
\vspace{-3.7cm}  \includegraphics[width=0.75\textwidth]{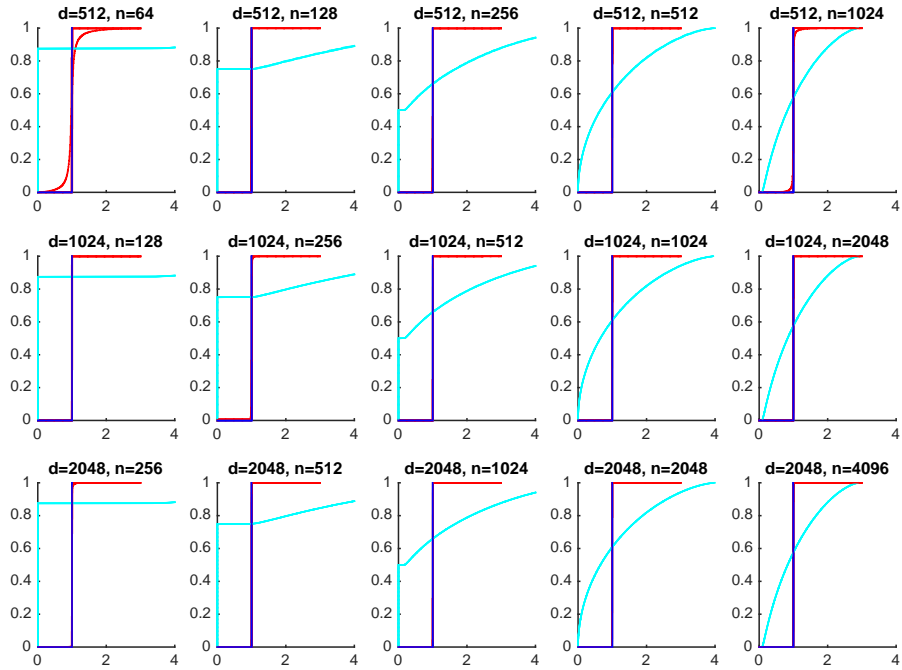}
\vspace{-3.7cm}\caption{\small{Empirical results for reconstructing the population spectrum for covariance $\Sigma = I_d$ for a range of sample sizes and dimensions.  Red lines depict the cdf of the distribution recovered by our algorithm over five independent trials, the blue line depicts the cdf of the true population spectral distribution, and the cyan line depicts the cdf of the empirical spectral distribution (in one of the trials).}\label{fig2}}
\end{figure}

\begin{figure}[H]
 \centering 
\vspace{-3.6cm} \includegraphics[width=0.75\textwidth]{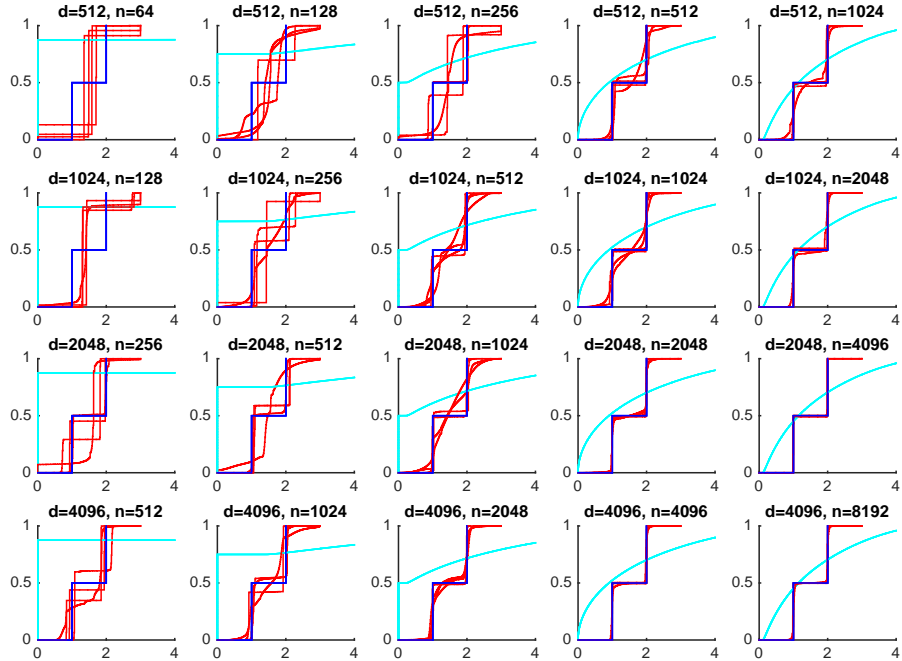}
\vspace{-3.7cm}\caption{\small{Empirical results for reconstructing the population spectrum for covariance matrices that have $d/2$ eigenvalues equal to $1$, and $d/2$ eigenvalues equal to  $2$.  Red lines depict the cdf of the distribution recovered by our algorithm over five independent trials, the blue line depicts the cdf of the true population spectral distribution, and the cyan line depicts the cdf of the empirical spectral distribution (in one of the trials).}\label{fig3}}
\end{figure}
\begin{figure}[H]
 \centering 
\vspace{-3.7cm}  \includegraphics[width=0.75\textwidth]{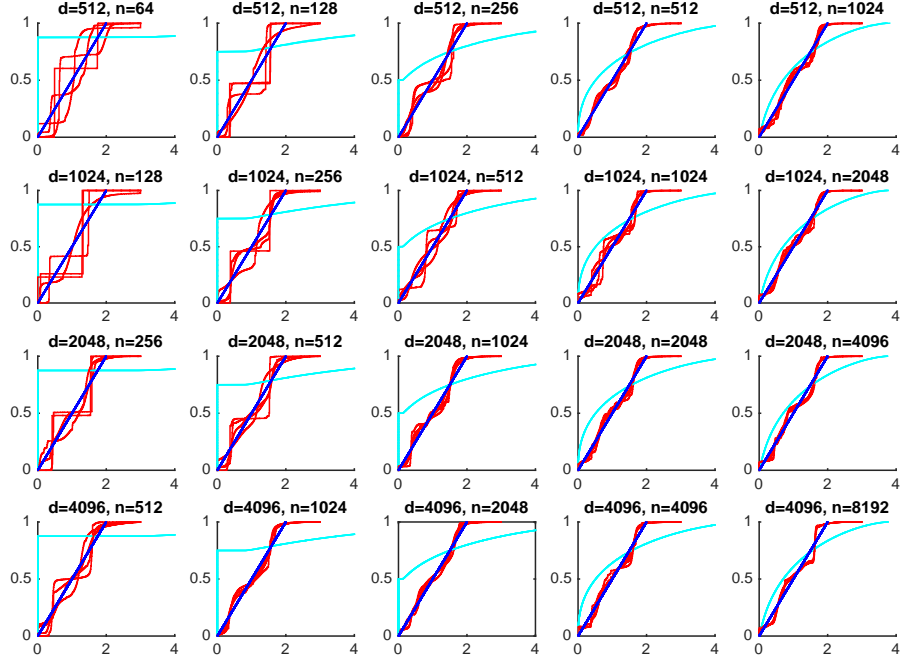}
\vspace{-3.7cm}\caption{\small{Empirical results for reconstructing the population spectrum for a covariance matrices whose eigenvalues correspond to the discretized uniform distribution on the interval $[0,2].$ Red lines depict the cdf of the distribution recovered by our algorithm over five independent trials, the blue line depicts the cdf of the true population spectral distribution, and the cyan line depicts the cdf of the empirical spectral distribution (in one of the trials).}\label{fig4}}
\end{figure}

\begin{figure}[H]
 \centering 
\vspace{-3.6cm}  \includegraphics[width=0.75\textwidth]{}
\vspace{-3.7cm}\caption{\small{Empirical results for reconstructing the population spectrum for covariance $\Sigma = T$ where $T_{i,j}=0.3^{|i-j|}$ is a $d \times d$ Toeplitz matrix.  Red lines depict the cdf of the distribution recovered by our algorithm over five independent trials, the blue line depicts the cdf of the true population spectral distribution, and the cyan line depicts the cdf of the empirical spectral distribution (in one of the trials).}\label{fig5}}
\end{figure}

\iffalse

test our algorithm, moment estimation and distribution recovery using linear programing, on synthetic data with different covariance structures.  

The first case is $\Sigma = \I$, where there is no particular informative subspace. However as we showed in Figure~\ref{xx},  histogram of empirical covariance eigenvalues shows some large eigenvalues. As we showed in Figure~\ref{xx}, the eigenvalues our algorithm recovered are all very close to the population eigenvalues ``$1$''.

The second case is $D_{\Sigma} = \frac{1}{2}\delta_1 + \frac{1}{2}\delta_2$

The interval discretization is on $[0,3]$ with resolution $0.01$
\begin{table*}[htbp]
\begin{center} 
\caption{average $|\hat{\lambda}-\lambda|$ over 1000 trials, $p=100$, $n=500$}
\begin{tabular}{|c|c|c|c|c|c|}
\hline
Method & Identity& 50\% 1 and 50\% 2&  Toeplitz with entries $.3^{|i-j|}$& Uniform on $[0,2]$\\
\hline
Moment-LP Estimation &  2.78& 8.03&   8.87&  8.82\\
\hline
Empirical Estimation   &  37.93& 39.36 &  13.86& 14.83\\	
\hline
\end{tabular}
\end{center}
\end{table*}
\begin{table*}[htbp]
\begin{center} 
$k=5,4,4,4$
\caption{average $|\hat{\lambda}-\lambda|$ over $1000$ trials, $p=1000$, $n=500$}
\begin{tabular}{|c|c|c|c|c|c|}
\hline
Method & Identity& 50\% 1 and 50\% 2&  Toeplitz with entries $.3^{|i-j|}$& Uniform on $[0,2]$\\
\hline
Moment-LP Estimation & 12.6 & 195.0&  99.4& 93.8 \\
\hline
Empirical Estimation   & 1136.9 &  1417.4&  789.3& 773.6\\	
\hline
\end{tabular}
\end{center}
\end{table*}
\fi

\bibliographystyle{plain}
\bibliography{soda}
\newpage
\noindent \begin{Huge}\textbf{Supplementary Material}\end{Huge}
\appendix

\section{Proof of Proposition~\ref{thm:redb}}\label{ap:redb}

Here we prove Proposition~\ref{thm:redb}, restated below for convenience.  The proof follows the general approach outlined in Section~\ref{sec:proofap}, though requires bounds on the coefficients of the interpolating polynomial to establish the robust proposition.

\vspace{.5cm}
\noindent \textbf{Proposition~\ref{thm:redb}.} \emph{
Given two distributions with respective density functions $p, q$ supported on $[-1,1]$ whose first $k$ moments are $\bm{\alpha} = (\alpha_1,...\alpha_k)$ and $\bm{\beta} = (\beta_1,...\beta_k)$, respectively, the Wasserstein distance, $W_1(p,q)$, between $p$ and $q$ is bounded by: 
$$
W_1(p,q) \le \frac{C}{k}+g(k)\|\bm{\alpha}-\bm{\beta}\|_2, 
$$ where $C$ is an absolute constant, and $g(k)=C' 3^k$ for an absolute constant $C'$.}

\vspace{.5cm}
\begin{proof}
As in the proof of the nonrobust version of the proposition given in Section~\ref{sec:proofap}, for any Lipschitz function $f$, we will consider a smoothed version $f_s = f * \hat{b}_c$, and a degree $k$ polynomial approximation, $P$.  In analogy with Equation~\ref{eq:redb} in the proof of the nonrobust version of the proposition given in Section~\ref{sec:proofap}, we have the following bound on the integral of $f$ with $p-q$:
$$W_1(p,q) \leq 2 \|f-P_k\|_{\infty} + (\bm{\alpha}-\bm{\beta})\c^T,$$ where $\c=(c_1,c_2,...,c_k)$ is the vector of coefficients of polynomial $P$.

We use the same polynomial $P$ as constructed in the proof of the nonrobust case, which applies Fact~\ref{lem:Interpolation} to yield the polynomial approximation of the function $f_s = f * \hat{b}_c$ constructed via interpolation on the Chebyshev nodes.   Recall from the proof of the nonrobust case in Section~\ref{sec:proofap}, that the first term in the distance bound is bounded as $2\|f-P\|_{\infty} \le O\left(1/k\right).$ 

To bound the remaining term, it suffices to bound the 2-norm of the vector of coefficients, $\c$. Note that $\c = \V_{k+1}^{-1} \y$ where $\V_{k+1}$ is the $k+1$ by $k+1$ Vandermonde matrix with Chebyshev nodes and $\y$ is the value of $f_s$ on these locations. Since $f_s$ is bounded by $O(1)$ (as, without loss of generality, $f$ is assumed to take values in the range $[0,2]$ and $\|f_s-f\|_\infty\leq O(\frac{1}{k})$), we can assume $\|\bm{y}\|_2\leq O(\sqrt{k})$ without loss of generality. It is known that the largest singular value of $\V_{k+1}^{-1}$ is  bounded by 
\begin{align}
\sqrt{\frac{2 \sum_{i=0}^k \sum_{j=0}^i a_{ij}^2}{k+1}},
\end{align}
where $a_{ij}$ is the degree $j$ coefficient of the $i$'th Chebyshev polynomial (see~\cite{li2008vandermonde}, for example). In order to bound the sum of squares of these coefficients of the Chebyshev polynomial, note the recurrence relation for Chebyshev polynomials given by  $
T_{n+1}(x) = 2xT_n(x)-T_{n-1}(x),$ with $T_0(x) =1,$ and $T_1(x) =x.$    Hence for the $i$th polynomial, we can loosely bound the sum of the squares of the coefficients as $\sqrt{\sum_{j=0}^i a_{ij}^2} \leq 3^i$. Plugging this in to bound $\|V_{k+1}^{-1}\|$ yields:
$$
\sqrt{\frac{2 \sum_{i=0}^k \sum_{j=0}^i a_{ij}^2}{k+1}} \leq \sqrt{\frac{9^{k+1}-1}{4(k+1)}}.
$$
Thus $\|\bm{c}\|_2 \le  \|\V_{k+1}^{-1}\| \|\y\| \leq  O(\sqrt{k}\sqrt{\frac{9^{k+1}-1}{4(k+1)}}) \leq O( \frac{3^{k+1}}{2})$.  Setting $g(k)=O(\frac{3^{k+1}}{2})$ yields the claimed proposition.
\end{proof}

\section{Proof of Proposition~\ref{variancebound}}~\label{ap:varb}
In this section we prove Proposition~\ref{variancebound} to establish the bound on the variance of our moment estimator.  See the discussion at the end of Section~\ref{sec:estmomproof} for the high-level proof approach.    The following definitions will be useful for defining a partition of the set of increasing cycles.

\begin{definition}
Let $P=\{P_1,...,P_m\}$ denote a partition of variables $\{\sigma,\pi\}$ where each set $P_i$ has size at most $2$, and let $Q=\{Q_1,\ldots,Q_w\}$ denote a partition of variables $\{\delta,\gamma,\delta',\gamma'\}$.   We use the notational shorthand $P(x)$ to indicate the set in $P$ which contains variable $x$, and analogously for $Q(x).$ 
We say that $(\sigma,\pi)$ are \emph{subject} to $P$, if it holds that two variables in ${\sigma,\pi}$ have the same value if and only if they fall in the same class of $P$. 
\end{definition}

To illustrate, if $(\sigma,\pi)$ is subject to $P_1$ and $P_1 = \{\sigma_1,\pi_2\}$, then it must be the case that $\sigma_1=\pi_2$. 

\begin{definition}
We say that $(\delta,\gamma,\delta',\gamma')$ is subject to the partition $Q=(Q_1,\ldots,Q_w)$, if, for every pair of variables in the set ${\delta,\gamma,\delta',\gamma'}$ that are in the same partition, $Q_i$, they have the same value.  In contrast to the definition for $(\sigma,\pi)$ being subject to $P$, this is not an if, and only if: two variables in different partitions, $Q_i,Q_j$ can take the same value.
\end{definition}

\begin{definition}
We say that a partition $Q = \{Q_1,...,Q_w\}$ of the variables $\{\delta,\gamma,\delta',\gamma'\}$ \emph{respects} the partition $P=(P_1,\ldots,P_m)$ of the variables $\{\sigma,\pi\}$ if each partition $Q_i$ is contained in some set $P_j^* = \{ \delta_\ell,\gamma_\ell |\sigma_\ell \in P_j\}\cup\{ \delta'_\ell,\gamma'_\ell |\pi_\ell \in P_j\}$. 
\end{definition}

\begin{definition}
Given partitions $Q=(Q_1,\ldots,Q_w)$ and $P$, such that $Q$ respects $P$, we say that a list of variables $(\delta,\gamma,\delta',\gamma')$ is \emph{subject to} $(P,Q)$ if 
\begin{enumerate}
	\item $(\delta,\gamma,\delta',\gamma')$ is subject to $Q$,
	\item for every pair of variables $(x,y) \in P_j^* = \{ \delta_\ell,\gamma_\ell |\sigma_\ell \in P_j\}\cup\{ \delta'_\ell,\gamma'_\ell |\pi_\ell \in P_j\}$, then $x=y$ if and only if $x$ and $y$ are in the same partition of $Q$.
\end{enumerate}
\end{definition}
Throughout we will only refer to pairs $(P,Q)$ for which $Q$ respects $P$.

\begin{fact}
Given values of $\sigma,\pi,\delta,\gamma,\delta',\gamma'$, there is a unique pair $(P,Q)$ where $(\sigma,\pi)$ is subject to $P$, $(\delta,\gamma,\delta',\gamma')$ is subject to $(P,Q)$. We say that $\sigma,\pi,\delta,\gamma,\delta',\gamma'$ is subject to $(P,Q)$.
\end{fact}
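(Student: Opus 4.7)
The Fact asserts both existence and uniqueness of a pair $(P,Q)$ compatible with a given tuple of values. My plan is to prove each separately, with $P$ determined first (purely by the values on $\{\sigma,\pi\}$) and then $Q$ determined from $P$ together with the values on $\{\delta,\gamma,\delta',\gamma'\}$. The key conceptual point is that being ``subject to $P$'' is an ``iff'' condition, so $P$ must be the equality partition of $\{\sigma,\pi\}$; and the clause ``if and only if $x$ and $y$ are in the same partition of $Q$'' in the definition of subject to $(P,Q)$ forces $Q$ to be the equality partition of each $P_j^*$ block.

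For $P$: define $P$ to be the partition of $\{\sigma_1,\dots,\sigma_k,\pi_1,\dots,\pi_k\}$ whose blocks are the equivalence classes under the relation ``has the same value''. Because $\sigma$ and $\pi$ are each increasing $k$-cycles, the entries within $\sigma$ are distinct and the entries within $\pi$ are distinct, so any equality class contains at most one entry of $\sigma$ and at most one entry of $\pi$; hence every block of $P$ has size at most $2$, and $P$ is a valid partition in the sense of the first definition. Any other partition $P'$ with $(\sigma,\pi)$ subject to $P'$ would satisfy the same ``same block iff same value'' relation, and therefore would coincide with $P$ block by block. This gives existence and uniqueness of $P$.

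For $Q$: for each block $P_j$ of $P$, form $P_j^* = \{\delta_\ell,\gamma_\ell : \sigma_\ell\in P_j\}\cup\{\delta'_\ell,\gamma'_\ell : \pi_\ell\in P_j\}$, and define $Q$ restricted to $P_j^*$ to be the equality partition of the variables in $P_j^*$ under the assigned values. Taking the union of these partitions over $j$ yields a partition $Q$ of $\{\delta,\gamma,\delta',\gamma'\}$ whose every block is contained in some $P_j^*$, so $Q$ respects $P$. By construction, $(\delta,\gamma,\delta',\gamma')$ is subject to $Q$ (members of the same $Q$-block have the same value), and condition (2) of ``subject to $(P,Q)$'' is met because within each $P_j^*$ we declared $Q$-blocks to be exactly the equality classes.

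Uniqueness of $Q$ follows from the forcing structure of the definitions: any $Q'$ meeting the requirements must (i) have each of its blocks contained inside some $P_j^*$ (from ``respects $P$'') and (ii) satisfy, within every $P_j^*$, the ``same value iff same $Q'$-block'' condition (clause (2) of subject to $(P,Q')$). Condition (ii) uniquely determines $Q'$ restricted to each $P_j^*$ as the equality partition of that block, which agrees with $Q$; condition (i) ensures no block of $Q'$ crosses between different $P_j^*$'s, so $Q'=Q$ globally. The only step that requires a non-trivial input is the size-$\le 2$ constraint on $P$, which, as noted, is a free consequence of $\sigma$ and $\pi$ being increasing cycles; the remainder of the argument is bookkeeping driven by the ``iff'' clauses in the definitions.
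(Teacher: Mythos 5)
Your proof is correct and follows the same route as the paper's (one-line) argument: $P$ is forced to be the equality partition of $\{\sigma,\pi\}$, and then $Q$ is forced block-by-block to be the equality partition within each $P_j^*$. You simply spell out the bookkeeping (including the size-$\le 2$ observation for $P$, which the paper leaves implicit) in more detail.
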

\begin{proof}
$P$ can be uniquely determined by the values of $\sigma,\pi$, after which $Q$ can be determined by $\delta,\gamma,\delta',\gamma'$.
\end{proof}
\begin{fact}\label{fact:xPQ}
Given any $(\sigma,\pi,\delta,\gamma,\delta',\gamma')$ and $(\sigma^1,\pi^1,\delta^1,\gamma^1,\delta'^1,\gamma'^1)$ subject to $(P,Q)$, we have 
$$
\E[\prod_{i=1}^k \X_{\sigma_i \delta_{i}} \X_{\sigma_i \gamma_{i}} \X_{\pi_i \delta'_{i}} \X_{\pi_i \gamma'_{i}}] = \E[\prod_{i=1}^k \X_{\sigma^1_i 
\delta^1_{i}} \X_{\sigma^1_i \gamma^1_{i}} \X_{\pi^1_i \delta'^1_{i}} \X_{\pi^1_i \gamma'^1_{i}}]
$$.
\end{fact}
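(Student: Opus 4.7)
}
The plan is to argue that the expectation $\E\bigl[\prod_{i=1}^k \X_{\sigma_i \delta_i} \X_{\sigma_i \gamma_i} \X_{\pi_i \delta'_i} \X_{\pi_i \gamma'_i}\bigr]$ depends only on the equivalence relation induced on the $4k$ factors by the rule ``two factors are identical variables iff their row and column indices agree,'' and then to show that this equivalence relation is a function of $(P,Q)$ alone.

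First I would recall that the entries of $\X$ are i.i.d., so for any sequence of $4k$ pairs $(a_\ell,b_\ell)$ the expectation $\E\bigl[\prod_\ell \X_{a_\ell,b_\ell}\bigr]$ depends only on the partition of $\{1,\dots,4k\}$ defined by $\ell \sim \ell'$ iff $(a_\ell,b_\ell)=(a_{\ell'},b_{\ell'})$: each block of size $s$ contributes the $s$-th moment of the distribution of a single entry, and the full expectation is the product of those moments. Thus it suffices to show that if $(\sigma,\pi,\delta,\gamma,\delta',\gamma')$ and $(\sigma^1,\pi^1,\delta^1,\gamma^1,\delta'^1,\gamma'^1)$ are both subject to $(P,Q)$, then the induced partitions of the $4k$ row-column pairs coincide.

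The main step is a case analysis on when two such pairs coincide. For the row coordinates, the definition of ``subject to $P$'' is an iff: two entries of $\{\sigma,\pi\}$ agree iff they sit in the same block of $P$, so the row-equality pattern is fixed by $P$. For the column coordinates, consider two pairs whose row coordinates lie in row-blocks $P_j$ and $P_{j'}$. If $j\neq j'$ the rows disagree, so the pairs are unequal regardless of the column values, and whether the columns happen to collide is irrelevant. If $j=j'$, the two column variables in question both belong to $P_j^*$, and the definition of ``subject to $(P,Q)$'' states that within $P_j^*$ two variables agree iff they lie in the same block of $Q$; so this too is pinned down by $(P,Q)$. Combining, the pattern of equalities among the $4k$ row-column pairs is determined entirely by $(P,Q)$.

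Hence any two tuples subject to $(P,Q)$ produce the same multiset of equivalence-class sizes among the $4k$ factors, and by the i.i.d.\ property of the entries of $\X$ the two expectations are equal. I expect no serious obstacle: the work is purely bookkeeping to make the ``within a row-block / across row-blocks'' case split precise, and the ``$Q$ respects $P$'' hypothesis is exactly what makes the split clean.
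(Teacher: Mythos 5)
Your proposal is correct and follows exactly the route the paper intends: the paper itself only asserts in one sentence that the expectation ``is determined only by the pair $(P,Q)$,'' and your case analysis (rows pinned down by the iff in the definition of ``subject to $P$''; columns pinned down within a row-block by condition 2 of ``subject to $(P,Q)$''; column collisions across distinct row-blocks irrelevant since the entries already differ in their row index) is precisely the bookkeeping needed to justify that assertion. No gaps.
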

The above fact follows from the observation that the value of $\E[\prod_{i=1}^k \X_{\sigma_i \delta_{i}} \X_{\sigma_i \gamma_{i}} \X_{\pi_i \delta'_{i}} \X_{\pi_i \gamma'_{i}}]$ is determined only by the pair $(P,Q)$ that $(\sigma,\pi,\delta,\gamma,\delta',\gamma')$ subjects to. With the help of Fact~\ref{fact:xPQ}, we are able to consider all the terms in a single $(P,Q)$ class together because they all have the same $\X$ ``part'' in expectation.

%\begin{definition}
%A partition $Q=(Q_1,\ldots,Q_w)$ is \textbf{even} if for all $i \in \{1,\ldots,w\},$ $|Q_i|$ is even.
%\end{definition}
\begin{fact}
Given $\sigma,\pi,\delta,\gamma,\delta',\gamma'$ subject to $(P,Q)$, a necessary condition for 
$$
\E[\prod_{i=1}^k \X_{\sigma_i \delta_{i}} \X_{\sigma_i \gamma_{i}} \X_{\pi_i \delta'_{i}} \X_{\pi_i \gamma'_{i}}] - \E[\prod_{i=1}^k \X_{\sigma_i \delta_{i}} \X_{\sigma_i \gamma_{i}}] \E[\prod_{i=1}^k \X_{\pi_i \delta'_{i}} \X_{\pi_i \gamma'_{i}}]
$$
to be non-zero is that $|Q_i|$ is even for all $i\in \{1,\ldots,w\}$ and there exists $\pi_j\in P(\sigma_i)$ and either $\delta_j$ or $\gamma_j \in Q(\delta_i')$. We call these $(P,Q)$ \textbf{non-zero}.
\end{fact}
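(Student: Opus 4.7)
I would prove this fact by analyzing how both expectations decompose over distinct entries of $\X$. By i.i.d.-ness, any product expectation factors as $\E\bigl[\prod_\ell X_{a_\ell, b_\ell}\bigr] = \prod_{(a,b)} \E[X^{n(a,b)}]$, where the product is over distinct index pairs and $n(a,b)$ is the multiplicity. Because $(\sigma,\pi,\delta,\gamma,\delta',\gamma')$ is subject to $(P,Q)$ with $Q$ respecting $P$, the distinct $\X$-entries appearing in the full product are in bijection with the classes $Q_i$ (each $Q_i$ sits in a unique $P_j^*$ and so inherits a well-defined row class), and the entry corresponding to $Q_i$ has multiplicity exactly $|Q_i|$ in the full product, $|Q_i \cap \{\delta,\gamma\}|$ in the $\sigma$-part, and $|Q_i \cap \{\delta',\gamma'\}|$ in the $\pi$-part.

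For the mixing condition, I would argue as follows: suppose no $Q_i$ contains both a $\sigma$-side variable (from $\{\delta,\gamma\}$) and a $\pi$-side variable (from $\{\delta',\gamma'\}$). Then no $\X$-entry is shared between the $\sigma$-part and the $\pi$-part, so by independence across two disjoint collections of entries $\E[\sigma\text{-part}\cdot\pi\text{-part}] = \E[\sigma\text{-part}]\E[\pi\text{-part}]$, and the difference vanishes. Conversely, the existence of a shared entry requires the rows to collide (some $\pi_j\in P(\sigma_i)$) together with a $\sigma$-side column and $\pi$-side column within that row lying in the same $Q$-class---exactly the second condition of the fact, modulo a relabeling of the indices $i,j$.

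For the parity condition, I would compare, class by class, the factor $\E[X^{|Q_i|}]$ contributed to the full expectation against the factor $\E[X^{|Q_i\cap\sigma|}]\E[X^{|Q_i\cap\pi|}]$ contributed to the product of expectations. The crucial structural observation is that since $|P_j|\le 2$ by definition, each row block satisfies $|P_j^*| = 2|P_j| \in \{2,4\}$. The $Q$-classes inside a block partition a set of this even size, so any odd $|Q_i|$ inside a block must be accompanied by another odd class. Enumerating the only possibilities---$(2)$ and $(1,1)$ for blocks of size $2$, and $(4),(3,1),(2,2),(2,1,1),(1,1,1,1)$ for blocks of size $4$---shows that every odd-containing configuration forces a class of size $1$. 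A size-$1$ $Q$-class is necessarily pure, and the corresponding entry has multiplicity $1$ in both the full product and in the one-sided product to which its variable belongs; the mean-zero hypothesis $\E[X]=0$ then kills both $\E[\text{full}]$ and $\E[\sigma\text{-part}]\E[\pi\text{-part}]$, driving the difference to zero.

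The main obstacle---and the reason the argument is not completely immediate---is that one might worry a general mean-zero distribution has nonzero higher odd moments, so an odd $|Q_i|\ge 3$ does not on its own kill a factor. The $|P_j|\le 2$ constraint rescues the argument by forcing $|P_j^*|\le 4$, so any odd class must cohabit a row block with a multiplicity-$1$ class, after which only the hypothesis $\E[X]=0$ (and not any assumption about higher odd moments) is needed to conclude.
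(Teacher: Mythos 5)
Your proof is correct and follows essentially the same route as the paper's (very terse) argument: factor the expectations over the distinct entries of $\X$, use independence of the two halves when no $Q$-class mixes $\sigma$-side and $\pi$-side column variables, and use $\E[X]=0$ on a multiplicity-one entry when some $|Q_i|$ is odd. In fact you make explicit a step the paper only asserts---an odd class need not itself have size one, but the constraint $|P_j^*|\le 4$ forces a singleton class to coexist with it in the same row block, which then kills both the joint expectation and the product of expectations---so your write-up is, if anything, more complete than the paper's.
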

If for some $i$, $|Q_i|$ is not even, there exists some entry $X_{ij}$ whose number of occurrence in the product is $1$ and thus the expectation term will be zero. Otherwise if there does not exist $\pi_j\in P(\sigma_i)$ and $\delta_j$ or $\gamma_j \in Q(\delta_i)$, the product $\prod_{i=1}^k \X_{\sigma_i \delta_{i}} \X_{\sigma_i \gamma_{i}}$ will be independent of $\prod_{i=1}^k \X_{\pi_i \delta'_{i}} \X_{\pi_i \gamma'_{i}}$ which also yields zero for the above formula.

\iffalse
\begin{definition}
Let's define $A_{\sigma,\pi,\delta,\delta',\gamma,\gamma'} = \prod_{i=1}^k \T_{\delta_i,\gamma_{i+1}} \T_{\delta'_i, \gamma'_{i+1}} \E[\prod_{i=1}^k \X_{\sigma_i \delta_{i}} \X_{\sigma_i \gamma_{i}} \X_{\pi_i \delta'_{i}} \X_{\pi_i \gamma'_{i}}]$. 
\end{definition}

\begin{lemma}
Given $P=\{P_1,...,P_m\}$,$Q = \{Q_1,...,Q_w\}$ respects $P$ and $(\sigma,\pi)$ subject to $P$
$$
\sum_{(\delta,\gamma,\delta',\gamma')|(P,Q)} \prod_{i=1}^k \T_{\delta_i,\gamma_{i+1}} \T_{\delta'_i, \gamma'_{i+1}} \E[\prod_{i=1}^k \X_{\sigma_i \delta_{i}} \X_{\sigma_i \gamma_{i}} \X_{\pi_i \delta'_{i}} \X_{\pi_i \gamma'_{i}}] = C \sum_{(\delta,\gamma,\delta',\gamma')|(P,Q)}\prod_{i=1}^k \T_{\delta_i, \gamma_{i+1}} \T_{\delta'_{i},\gamma'_{i+1}} 
$$
\end{lemma}
\fi

Given $(P,Q)$ and $(\sigma,\pi,\delta,\gamma,\delta',\gamma')$ subject to $(P,Q)$, we have the following two definitions:
\begin{definition}
The product $T_{x_1,y_1}T_{x_2,y_2}...T_{x_p,y_p}$ is called a \textbf{free cycle} of length $p$ if and only if for any $i$, variables $y_i\in P(x_{i+1})$ and $|P(x_{i+1})|=2$. Similarly, the product $T_{x_1,y_1}T_{x_2,y_2}...T_{x_p,y_p}$ is called an \textbf{arc} of length $p$ if and only if for $i$ from $1$ to $p-1$, $y_i\in P(x_{i+1})$ and $|P(x_{i+1})|=2$, $|P(x_1)|>2$ and $|P(y_p)|>2$. 
\end{definition}

\begin{lemma}\label{sumofQ}
Given $Q = \{Q_1,...,Q_w\}$ as a partition of variables $\delta,\delta',\gamma,\gamma'$ where each $Q_i$ has size either $2$ or $4$, 
$$
\sum_{(\delta,\gamma,\delta',\gamma')|Q}\prod_{i=1}^k \T_{\delta_i, \gamma_{i+1}} \T_{\delta'_{i},\gamma'_{i+1}} = \sum_{x_1,...,x_{\xi},y_1,...,y_{\xi}} \prod_{i=1}^{\xi} (T^{l_i})_{x_i,y_i} \prod_{j}^\eta tr(T^{p_j}) \leq \prod_{i=1}^{\xi}tr(T^{2l_i})^{1/2} \prod_{j}^\eta tr(T^{p_j})
$$
where $x_1,...,x_{\xi},y_1,...,y_{\xi}$ are aliases of variables in $(\delta,\gamma,\delta',\gamma')$ in a set of size $4$ of $Q$. $\prod_{i=1}^k \T_{\delta_i, \gamma_{i+1}} \T_{\delta'_{i},\gamma'_{i+1}}$ contains $\xi$ arcs of lengths $l_1,...,l_\xi$, $\eta$ free cycles of lengths $p_1,...,p_\eta$
\end{lemma}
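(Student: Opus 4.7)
The lemma makes two claims that I would tackle in sequence: an equality that rewrites the constrained sum as a product over connected components, and an inequality that bounds it by trace quantities.

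For the equality, I would set up an explicit graph-theoretic interpretation. View each of the $2k$ factors $\T_{u,v}$ in the product as a directed edge from $u$ to $v$, and view variables identified by $Q$ as the same vertex. Because every one of the $4k$ variables in $(\delta,\gamma,\delta',\gamma')$ appears in exactly one of these factors, each class of size $2$ in $Q$ contributes a vertex of degree $2$ and each class of size $4$ contributes a vertex of degree $4$. A connected component of the resulting multigraph must therefore be either a closed loop passing only through degree-$2$ vertices (matching the definition of a \emph{free cycle}) or a path whose interior vertices are degree-$2$ and whose two endpoints are degree-$4$ (an \emph{arc}). Summing out the size-$2$ variables along an arc of length $l$ with endpoints $x,y$ collapses them via the matrix-product identity $\sum_{a_1,\ldots,a_{l-1}}T_{x,a_1}T_{a_1,a_2}\cdots T_{a_{l-1},y}=(T^l)_{x,y}$, and summing all variables of a free cycle of length $p$ yields $\tr(T^p)$. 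Multiplying over connected components gives the claimed equality.

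For the inequality I must bound
\[
\sum_{v_1,\ldots,v_m}\prod_{i=1}^\xi(T^{l_i})_{v_{a_i},v_{b_i}}\;\le\;\prod_{i=1}^\xi \tr(T^{2l_i})^{1/2},
\]
where $v_1,\ldots,v_m$ index the distinct size-$4$ classes and each $v_j$ occurs as an endpoint in exactly $4$ of the arcs (since every size-$4$ class holds $4$ variables and each variable is one arc-endpoint). The plan is to apply Finner's inequality to this arc-multigraph, in which every vertex has degree $4$, to obtain
\[
\sum_{v_1,\ldots,v_m}\prod_{i=1}^\xi(T^{l_i})_{v_{a_i},v_{b_i}}\;\le\;\prod_{i=1}^\xi\Bigl(\sum_{x,y}(T^{l_i})_{x,y}^{4}\Bigr)^{1/4},
\]
and then to use the vector inequality $\|z\|_4\le\|z\|_2$ together with the symmetry of $T$ to conclude $\bigl(\sum_{x,y}(T^{l_i})_{x,y}^{4}\bigr)^{1/4}\le\bigl(\sum_{x,y}(T^{l_i})_{x,y}^{2}\bigr)^{1/2}=\tr(T^{2l_i})^{1/2}$. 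The free-cycle factors $\prod_j \tr(T^{p_j})$ play no role in this step and simply pass through.

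The main obstacle will be the careful application of Finner's inequality --- or, equivalently, of iterated Cauchy--Schwarz with correctly balanced exponents --- in the presence of shared or self-coincident endpoints, in particular the ``self-loop'' configuration in which both endpoints of one arc lie in the same size-$4$ class. I would handle this uniformly by induction on $\xi$: peel off one arc at a time, apply a single Cauchy--Schwarz step that doubles the ambient sum and decouples the chosen arc from the remaining ones, collect a factor of $\tr(T^{2l_i})^{1/2}$, and invoke the inductive hypothesis on the residual multigraph whose vertex degrees remain even. Verifying that the induction step preserves the hypothesis that every remaining size-$4$ vertex has total degree $4$ (counted with multiplicity across the duplicated copies) is the bookkeeping heart of the argument.
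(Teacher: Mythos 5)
Your proof is correct, and the first half (the equality) is exactly the paper's argument, just spelled out: the paper dismisses it with ``re-grouping the terms,'' whereas you make explicit the multigraph whose degree-$2$ vertices are the size-$2$ classes and whose degree-$4$ vertices are the size-$4$ classes, so that components are precisely free cycles and arcs, and summing out the interior variables produces $(\T^{l_i})_{x_i,y_i}$ and $\tr(\T^{p_j})$. For the inequality the routes diverge slightly. The paper proves a standalone combinatorial lemma (its Lemma~\ref{cauchyschwarz}): for $R=\sum_{a_1,\ldots,a_r}\prod_s g_s(a_{h_s},a_{t_s})$ with every vertex variable occurring at least four times, $R^2\le\prod_s\sum_{x,y}g_s^2(x,y)$, proved by induction on the number of \emph{vertices} --- peel one variable $a_{r^*}$, apply the hypothesis to the inner sum with $a_{r^*}$ frozen, and use one Cauchy--Schwarz plus positivity of the squared factors; with $g_s=(\T^{l_s})_{\cdot,\cdot}$ and $\sum_{x,y}(\T^{l_s})_{x,y}^2=\tr(\T^{2l_s})$ this is the stated bound. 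Your primary tool, Finner's inequality with weights $1/4$ on each arc followed by $\|z\|_4\le\|z\|_2$, is a legitimate and arguably cleaner off-the-shelf substitute, and it even yields the intermediate $\bigl(\sum_{x,y}(T^{l_i})_{x,y}^4\bigr)^{1/4}$ bound, which is slightly stronger; but, as you anticipate, the fractional-cover condition genuinely fails in the self-loop configuration (a vertex carrying two self-loop arcs is covered with total weight only $1/2$), so Finner cannot be applied uniformly with weight $1/4$ --- you would need weight $1/2$ on self-loop arcs, and then $\|f_s\|_2$ restricted to the diagonal still sits below $\tr(T^{2l_s})^{1/2}$, so the bound survives. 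Your fallback induction is morally the paper's Lemma~\ref{cauchyschwarz}, except that you induct on arcs rather than vertices; be careful there, since a single global Cauchy--Schwarz over the full ambient sum does not decouple one arc cleanly (it introduces spurious factors of $d$ from the variables the arc does not touch), which is precisely why the paper peels one \emph{variable} at a time and exploits that it occurs in at least two of the surviving square-rooted factors. Either repaired route closes the argument; the paper's vertex-peeling induction is the one that handles all degenerate incidence patterns without case analysis.
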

\begin{proof}
The first equality follows from re-grouping the terms in the product.  The inequality is obtained by applying the following lemma (Lemma~\ref{cauchyschwarz}).
\end{proof}
\begin{lemma}\label{cauchyschwarz}
Let $ R = \sum_{a_1,...a_r} \prod_{s=1}^{\xi} g_s(a_{h_s},a_{t_s}) $ be a function of $a_1,\dots,a_r$, where each $a_i$ occurs at least four times. Then,
$R^2 \leq \prod_{s=1}^{\xi} \sum_{a_{h_s},a_{t_s}} g_s^2 (a_{h_s},a_{t_s})$.
\end{lemma}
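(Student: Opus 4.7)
The plan is to view Lemma~\ref{cauchyschwarz} as a multilinear generalization of the Cauchy-Schwarz inequality, and prove it by iterated application of ordinary Cauchy-Schwarz, using the ``at least four occurrences'' hypothesis to maintain an invariant throughout the recursion. First I would reduce to the case $g_s\ge 0$ by replacing each $g_s$ by $|g_s|$, which only increases $R$ and leaves the right-hand side unchanged. Then I would induct on the number of variables $r$; the base case $r=0$ is immediate since both sides equal $\prod_s g_s^2$.

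For the inductive step, pick any variable $a_i$ and let $S_i=\{s : h_s=i \text{ or } t_s=i\}$. By hypothesis $a_i$ has multiplicity at least four among the factors in $S_i$ (counting self-loops twice), so I may partition $S_i = A\sqcup B$ with $a_i$ occurring at least twice in each half. Write $F_A=\prod_{s\in A}g_s$, $F_B=\prod_{s\in B}g_s$, and $C=\prod_{s\notin S_i}g_s$, so that $R=\sum_{a_{-i}} C(a_{-i})\sum_{a_i} F_A F_B$. Applying Cauchy-Schwarz first to the inner sum over $a_i$,
$\sum_{a_i} F_A F_B \le \sqrt{\sum_{a_i}F_A^2}\sqrt{\sum_{a_i}F_B^2}$,
and then to the outer sum over $a_{-i}$ (grouping the $C$ factor with one of the two resulting square roots), yields
$$R^2 \le \Bigl(\sum_{\vec a}\prod_{s\in T_1} g_s^2\Bigr)\Bigl(\sum_{\vec a}\prod_{s\in T_2} g_s^2\Bigr)$$
for some partition $T_1 \sqcup T_2 = \{1,\ldots,\xi\}$. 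The key observation is that within each sub-problem the factors have been squared, so the multiplicity of every surviving variable living inside the squared block is doubled; with a careful choice of $A,B$ this preserves the ``$\ge 4$ times'' invariant in each sub-problem, allowing the induction to continue. Telescoping the bounds across the recursion eventually reduces $R^2$ to a product of scalar base cases, which collapses to $\prod_{s=1}^{\xi}\sum_{a_{h_s},a_{t_s}} g_s^2$, as claimed.

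The main obstacle is the combinatorial bookkeeping required to preserve the invariant at each step: a variable whose occurrences split too unevenly between $A$, $B$, and the ``outside'' set $\{s\notin S_i\}$ may not retain enough occurrences in both sub-problems for the induction to apply. The hypothesis that every variable appears at least four times provides exactly the slack needed to balance these partitions, and the details can be handled either by a judicious (e.g.\ minimum-degree) choice of the variable to eliminate together with a greedy split, or by strengthening the induction hypothesis to track the multiplicity of each surviving variable. A cleaner non-recursive alternative would be to invoke Finner's multilinear Cauchy-Schwarz inequality: assigning weight $1/2$ to each edge of the underlying hypergraph (vertices = variables, edges = $(h_s,t_s)$) yields total weight at least two at every vertex, which more than satisfies the Brascamp-Lieb covering condition and gives the bound $R\le \prod_s \|g_s\|_2$ in one shot.
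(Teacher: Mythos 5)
Your splitting strategy has a genuine gap at the outer Cauchy--Schwarz step. After the inner application over $a_i$ you arrive at $R \le \sum_{a_{-i}} C\, u_A u_B$ with $u_B(a_{-i}) = \bigl(\sum_{a_i}\prod_{s\in B} g_s^2\bigr)^{1/2}$, and the outer application produces the factor $\sum_{a_{-i}} u_B^2 = \sum_{\vec a}\prod_{s\in B} g_s^2$. Any variable $a_j$ that appears in no factor of $B$ is summed freely there, contributing a multiplicative factor equal to the size of its range ($n$ or $d$ in the application), which destroys the bound; the symmetric problem afflicts the $T_1$ block for variables appearing only in $B$. So your decomposition is only valid if \emph{every} variable appears in both $T_1$ and $T_2$ --- a hypergraph two-coloring condition on the factors whose feasibility you never establish, and which is exactly the combinatorial content you defer to ``careful bookkeeping.'' Moreover, the telescoping you describe cannot produce the stated bound: once a block consists of the squared functions $g_s^2$, re-applying the lemma to it yields a bound in terms of $\sum g_s^4$, not $\sum g_s^2$. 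The correct way to finish a block $\sum_{\vec a}\prod_{s\in T_j} g_s^2$ is the elementary ``sum of products is at most product of sums'' bound for nonnegative terms, which again requires that no variable be summed freely in that block.

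The paper's induction is arranged precisely to avoid this coverage problem. It freezes the single variable $a_{r^*}$, applies the inductive hypothesis to the inner sum over $a_1,\dots,a_{r^*-1}$ (legitimate because those variables still occur at least four times), and is left with a one-dimensional sum $\sum_{a_{r^*}}\prod_{s\ni r^*} v_s(a_{r^*})$, where $v_s(a_{r^*})=\bigl(\sum g_s^2(\cdot,a_{r^*})\bigr)^{1/2}$; since $a_{r^*}$ lies in at least two factors (four occurrences, at most two per factor), a single Cauchy--Schwarz followed by product-of-sums closes the induction with no free summations ever appearing. Your fallback via Finner's inequality is essentially correct --- each variable lies in at least two factors, so weight $1/2$ per factor satisfies the covering condition for counting measure and gives $R\le\prod_s\|g_s\|_2$ in one line --- but that is a citation rather than the self-contained elementary argument the paper supplies. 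To salvage your own route you would need to prove that a factor partition covering every variable on both sides always exists under the four-occurrence hypothesis, or else restructure the induction as the paper does.
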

\begin{proof}
We proceed by induction on $r$. Assume the inequality holds for $r = r^*-1$. Then for $r = r^*$ $R = \sum_{a_{r^*}}(\sum_{a_1,...a_{r^*-1}}\prod_{s=1}^{\xi} g_s(a_{h_s},a_{t_s}))$. We can use the inductive assumption on the sum inside the the parenthesis, where we treat $a_{r^*}$ as a constant,
$$
R\leq \bigg( \sum_{a_{r^*}} \prod_{s:h_s=r^*\vee r_s=r^*} \sqrt{\sum_{a_{h_s}:h_s\neq r^*} \sum_{a_{t_s}:t_s\neq r^*} g_s^2 (a_{h_s},a_{t_s})}\bigg) \prod_{s:h_s\neq r^*\land t_s\neq r^*} \sqrt{\sum_{a_{h_s},a_{t_s}} g_s^2 (a_{h_s},a_{t_s})}   
$$
Since $r^*$ occurs more than $2$ times, we can use the Cauchy-Schwarz inequality to bound the summation in the parenthesis. Assume $g_{i_1},g_{i_2},\ldots,g_{i_\xi}$ are the functions in the first product. Also for simplicity we will assume $h_{i_1}=r^*$ and $t_{i_1}\neq r^*$. Applying Cauchy-Schwarz yields:
\begin{align*}
R^2 & \leq \bigg(\sum_{a_{r^*},a_{t_{i_1}}} g_{i_1}^2 (a_{r^*},a_{t_{i_1}})\bigg) \bigg(\sum_{a_{r^*}}\Big( \prod_{j=2}^\xi \sum_{a_{h_{i_j}}:h_{i_j}\neq r^*} \sum_{a_{t_{i_j}}:t_{i_j}\neq r^*} g_{i_j}^2 (a_{h_{i_j}},a_{t_{i_j}})\Big) \bigg)\prod_{s:h_s\neq r^*\land t_s\neq r^*} \sum_{a_{h_s},a_{t_s}} g_s^2 (a_{h_s},a_{t_s})    \\
    & \leq \bigg(\sum_{a_{r^*},a_{t_{i_1}}} g_{i_1}^2 (a_{r^*},a_{t_{i_1}})\bigg)\Big( \prod_{j=2}^\xi\sum_{a_{h_{i_j}},a_{t_{i_j}}} g_{i_j}^2 (a_{h_{i_j}},a_{t_{i_j}})\Big) \bigg)\prod_{s:h_s\neq r^*\land t_s\neq r^*} \sum_{a_{h_s},a_{t_s}} g_s^2 (a_{h_s},a_{t_s})\\
    & \leq \prod_{s=1}^{\xi} \sum_{a_{h_s},a_{t_s}} g_s^2 (a_{h_s},a_{t_s})
\end{align*}
Where in the second inequality we have utilized the fact that $g_{i_j}^2$ is always non-negative and hence the sum of the product is less than the product of the sum.  
\end{proof}

%\begin{lemma}

%\end{lemma}
\begin{lemma}\label{partitionsize}
Given $(P=\{P_1,...,P_m\}$,$Q = \{Q_1,...,Q_w\})$ which is non-zero, let $(\sigma,\pi)$ be subject to $P$, and $(\delta,\gamma,\delta',\gamma')$ subject to $Q$. Suppose $\prod_{i=1}^k \T_{\delta_i, \gamma_{i+1}} \T_{\delta'_{i},\gamma'_{i+1}}$ contains $\xi$ arcs of lengths $l_1,...,l_\xi$, $\eta$ free cycles of lengths $p_1,...,p_\eta$, the following bound of $\xi/2+\eta$ holds
$$
\xi/2+\eta\leq 2k-m
$$
\end{lemma}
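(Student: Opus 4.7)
The plan is to prove $\xi/2 + \eta \leq 2k-m$ by identifying $\xi/2$ with the number of size-4 $Q$-classes and then using the non-zero condition on $(P,Q)$ to bound $\eta$.

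Let $m_1, m_2$ count the size-1 and size-2 $P$-parts respectively, so $m_2 = 2k - m$. Each size-2 $P$-part $\{\sigma_\ell,\pi_{m'}\}$ has $|P^*_j|=4$, and $Q$ restricted to it is either a single size-4 class (``case (a)'', count $a$) or two size-2 classes (``case (b)'', count $b$), with $a+b = m_2$. Further decompose $b = b_1 + b_2$, where $b_1$ counts the ``pure'' splitting $\{\delta_\ell,\gamma_\ell\}, \{\delta'_{m'},\gamma'_{m'}\}$ and $b_2$ counts the two ``mixed'' splittings (each contributing two size-2 $Q$-classes that mix one $\sigma$-port with one $\pi$-port). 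A direct half-edge count gives $\xi = 2a$: each size-4 $Q$-class contains $4$ variable instances acting as half-edges in the multigraph, and each arc uses exactly two of them at its (possibly coincident) size-4 endpoints. It therefore suffices to show $\eta \leq b$.

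To bound $\eta$, regard the multigraph with $Q$-classes as nodes and $T$-factors as edges; each free cycle is a connected component all of whose nodes have degree $2$. Call a size-2 $Q$-class \emph{pure} if it contains only $\sigma$-ports or only $\pi$-ports (from size-1 $P$-parts and case-(b1) splittings) and \emph{mixed} otherwise (from case-(b2) and case-(b3) splittings, giving $2b_2$ mixed classes in total). At a pure node the two incident edges have the same type---both $\sigma$-edges $T_{\delta_i,\gamma_{i+1}}$ or both $\pi$-edges---whereas at a mixed node the two incident edges are of opposite types. The plan is to show each free cycle contains at least two mixed nodes, immediately giving $\eta \leq 2b_2/2 = b_2 \leq b$ and therefore $\xi/2 + \eta \leq a + b = m_2 = 2k - m$.

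The main obstacle is ruling out pure free cycles. A pure-$\sigma$ free cycle would live in the $\sigma$-edge subgraph, where consecutive $\sigma$-edges $T_{\delta_i,\gamma_{i+1}}$ and $T_{\delta_{i+1},\gamma_{i+2}}$ meet only when $\delta_{i+1}$ and $\gamma_{i+1}$ lie in a common $Q$-class (failing exactly at case-(b2)/(b3) positions); moreover no shortcut between non-consecutive $\sigma$-edges is possible, because $\sigma_i \neq \sigma_{i+1}$ forces $\delta_i$ and $\gamma_{i+1}$ into different $P^*$-sets. Hence any pure-$\sigma$ free cycle must be the entire $\sigma$-cycle of length $k$, requiring every $\sigma$-position to lie in a size-1 or case-(b1) $P$-part; the symmetric statement on the $\pi$-side then forces every size-2 $P$-part to be case (b1), contradicting the non-zero condition on $(P,Q)$. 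Consequently every free cycle contains both $\sigma$- and $\pi$-edges, and since the edge type switches only at mixed nodes, a parity argument on edge-type transitions around the cycle yields at least two mixed nodes per free cycle, as required.
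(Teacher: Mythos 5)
Your proof is correct, and it takes a genuinely different route from the paper's. The paper proves Lemma~\ref{partitionsize} by induction on $k$: it repeatedly deletes a singleton $P$-part from each of the $\sigma$- and $\pi$-sides (when $m>k$), or deletes a size-$4$ $Q$-class or a mixed size-$2$ $Q$-class (when $m=k$), renames the surviving indices, and tracks how $\xi$, $\eta$, and $m$ change at each step. You instead give a direct double count: writing $m_2=2k-m$ for the number of size-$2$ $P$-parts and splitting them into $a$ parts carrying a size-$4$ $Q$-class and $b=b_1+b_2$ parts split into two size-$2$ classes, you get $\xi=2a$ exactly from the half-edge count at size-$4$ classes, and $\eta\le b_2$ from the observation that every free cycle must contain at least two mixed nodes. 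The one load-bearing step is ruling out monochromatic free cycles, and your argument there is sound: since $Q$ respects $P$ and the entries of an increasing cycle are all distinct, the only size-$2$ $Q$-classes lying entirely on the $\sigma$-side are the same-index pairs $\{\delta_\ell,\gamma_\ell\}$ (your stated reason mentions only consecutive indices, but the same distinctness argument covers arbitrary $i\neq j$, so this is a phrasing issue, not a gap), whence a pure free cycle must be the entire $\sigma$- or $\pi$-cycle, which forces every size-$2$ $P$-part into case (b1) and kills the non-zero condition. Your approach buys a cleaner and slightly sharper conclusion, $\xi/2+\eta\le a+b_2$, and avoids the bookkeeping the paper leaves implicit (e.g.\ its unproved claim that deletions never decrease $\xi$ or $\eta$); the paper's induction, in exchange, requires no global structural analysis of the multigraph. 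Both arguments ultimately rest on the same two facts: each size-$4$ class supplies exactly two arc endpoints per arc, and the non-zero condition excludes the degenerate all-pure configuration.
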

\begin{proof}
We proceed by induction on $k$.  Suppose the claim holds for $k \le \ell-1$ and consider $k=\ell$:
\begin{itemize}
\item If $m>\ell$, there will be at least one label $\sigma_a$ satisfying $P(\sigma_a)=\{\sigma_a\}, Q(\delta_a) = \{\delta_a,\gamma_a\}$ and one label $\pi_b$ satisfying $P(\pi_b) = \{\pi_b\}, Q(\delta'_a) = \{\delta'_a,\gamma'_a\}$, we can delete $\sigma_a, \delta_a,\gamma_a, \pi_b,\delta'_b,\gamma'_b$ and for $a<i\le \ell$ rename $\sigma_i,\delta_i,\gamma_i$ as $\sigma_{i-1},\delta_{i-1},\gamma_{i-1}$, for $b<j\le \ell$ rename $\pi_j,\delta'_j,\gamma'_j$ as $\sigma_{i-1},\delta_{i-1},\gamma_{i-1}$. It is easy to check that neither the number of  free cycles nor the number of arcs decreases.  This corresponds to the $\ell-1$ case and hence by the inductive assumption, $\xi/2+\eta\leq 2(\ell-1)-(m-2)=2\ell-m$. \item If $m=\ell$, each $P_i$ will be size $2$.
	\begin{itemize}
	  \item If there exists $Q_i=\{\delta_a,\gamma_a,\delta'_b,\gamma'_b\}$, each elements of $Q_i$ will be a head of some arc and since each arc has $2$ heads, deleting all elements of $Q_i$ can incur a decrease of $\xi$ of $2$, by the induction assumption we have $\xi/2+\eta -1 \leq 2(\ell-1)-(m-1)=2\ell-m-1$. 
	  \item Finally if each $Q_i$ has size $2$, in this case there are no arcs but only free cycles, pick $i,j$ such that $P(\sigma_i)=\{\sigma_i,\pi_j\}$ and $Q(\delta_i) = \{\delta_i,\delta'_j\}$ or $Q(\delta_i) = \{\delta_i,\gamma'_j\}$. For simplicity we assume $Q(\delta_i) = \{\delta_i,\delta'_j\}$, such $i,j$ must exist because $(P,Q)$ is non-zero. We do the same procedure as before to delete $\{\delta_i,\delta'_j, \gamma_i,\gamma'_j\}$ and rename the other terms. After doing this, the free cycle containing the original $\delta_i, \delta'_j$ will be connected with the free cycle containing $\gamma_i,\gamma'_j$; thus by the inductive assumption, $\xi/2+(\eta-1)\leq 2(\ell-1)-(m-1) = 2\ell-m-1$.
	\end{itemize}
\item Base case. If $k=1$, $P=(\sigma_1,\pi_1)$ is fixed, there are two possible cases for $Q$:
	\begin{itemize}
		\item $Q_1={\delta_1,\gamma_1,\delta'_1,\gamma'_1}$, in this case $\xi=2$ and $\xi/2+\eta\leq 2k-m$.
		\item $Q_1={\delta_1,\delta'_1},Q_2={\gamma_1,\gamma'_1}$, in this case $\eta=1$ and $\xi/2+\eta\leq 2k-m$.
	\end{itemize}
\end{itemize}
\end{proof}
\begin{lemma} \label{Qbound}
Given $P=\{P_1,\ldots,P_m\}$, for any $Q = \{Q_1,\ldots,Q_w\}$ such that $(P,Q)$ is non-zero, the sum of the $\T$ ``part'' over all $(\delta,\gamma,\delta',\gamma')$ respecting $Q$ satisfies the following bound:
$$
\sum_{(\delta,\gamma,\delta',\gamma')|Q}\prod_{i=1}^k \T_{\delta_i, \gamma_{i+1}} \T_{\delta'_{i},\gamma'_{i+1}} \leq \|T\|_{4m-4k+2}^{2m-2k+1} \|T\|_{2}^{4k-2m-1}
$$
\end{lemma}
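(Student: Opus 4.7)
The plan is to use Lemma~\ref{sumofQ} to reduce the sum to a product of Schatten norms, and then combine Lyapunov's inequality with the combinatorial constraint of Lemma~\ref{partitionsize} to collapse that product into the two-norm bound claimed.

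First I would apply Lemma~\ref{sumofQ} to obtain
$$\sum_{(\delta,\gamma,\delta',\gamma')\mid Q}\prod_{i=1}^k \T_{\delta_i,\gamma_{i+1}}\,\T_{\delta'_i,\gamma'_{i+1}} \;\le\; \prod_{i=1}^{\xi} \|\T\|_{2l_i}^{\,l_i}\prod_{j=1}^{\eta} \|\T\|_{p_j}^{\,p_j},$$
where $\sum_i l_i + \sum_j p_j = 2k$ (the total number of $\T$-factors). Lemma~\ref{partitionsize} contributes the combinatorial inequality $\xi + 2\eta \le 4k - 2m$. Setting $r = 2m - 2k + 1$, the goal becomes $\prod_i \|\T\|_{2l_i}^{l_i}\prod_j \|\T\|_{p_j}^{p_j} \le \|\T\|_{2r}^{\,r}\,\|\T\|_2^{\,2k-r}$, and the constraint rewrites cleanly as $\xi + 2\eta \le 2k - r + 1$.

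Next I would apply Lyapunov's inequality (log-convexity of $p \mapsto \log\|\T\|_p$ in $1/p$) to interpolate each factor between the endpoint norms $\|\T\|_2$ and $\|\T\|_{2r}$. For an arc of length $l \in [1,r]$ this yields
$$\|\T\|_{2l}^{\,l} \le \|\T\|_2^{(r-l)/(r-1)}\,\|\T\|_{2r}^{r(l-1)/(r-1)},$$
and for a free cycle of length $p \in [2,2r]$,
$$\|\T\|_p^{\,p} \le \|\T\|_2^{(2r-p)/(r-1)}\,\|\T\|_{2r}^{r(p-2)/(r-1)}.$$
In the out-of-range regimes $l_i > r$ or $p_j > 2r$, monotonicity of Schatten norms ($\|\T\|_{2l}\le\|\T\|_{2r}$ for $l\ge r$) already gives the cruder estimates $\|\T\|_{2l_i}^{\,l_i} \le \|\T\|_{2r}^{\,l_i}$ and $\|\T\|_{p_j}^{\,p_j} \le \|\T\|_{2r}^{\,p_j}$. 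Multiplying all per-factor bounds together produces a total estimate of the shape $\|\T\|_2^{\,a}\,\|\T\|_{2r}^{\,b}$ with $a + b = 2k$.

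The final step is to verify $b \ge r$, after which the monotonicity $\|\T\|_{2r} \le \|\T\|_2$ lets me absorb the excess: $\|\T\|_2^{a}\|\T\|_{2r}^{b} \le \|\T\|_{2r}^{r}\|\T\|_2^{a+b-r} = \|\T\|_{2r}^{r}\|\T\|_2^{2k-r}$. If any $l_i > r$ or $p_j > 2r$, the corresponding term alone contributes at least $r$ to $b$ and we are immediately done. Otherwise, summing the interpolated exponents yields $b = r(2k - \xi - 2\eta)/(r-1)$, and $b \ge r$ is equivalent to $\xi + 2\eta \le 2k - r + 1$, which is exactly the combinatorial constraint obtained above from Lemma~\ref{partitionsize}. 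The edge case $r = 1$ (i.e.\ $m = k$) is handled separately and trivially: then $\|\T\|_{2r} = \|\T\|_2$, and simply using $\|\T\|_{2l}\le\|\T\|_2$ and $\|\T\|_p\le\|\T\|_2$ termwise gives $\|\T\|_2^{2k}$, matching the target.

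I expect the exponent bookkeeping to be the main obstacle: specifically, recognizing that the definition $r = 2m - 2k + 1$ is chosen precisely so that the combinatorial bound $\xi/2 + \eta \le 2k - m$ of Lemma~\ref{partitionsize} aligns with the Lyapunov-sum inequality $r(2k-\xi-2\eta)/(r-1) \ge r$ needed to conclude $b \ge r$. Once this alignment is spotted the remaining manipulations are mechanical. A secondary subtlety is the case split on $l_i > r$ or $p_j > 2r$, but in those regimes the crude monotonicity bound already over-supplies $\|\T\|_{2r}$-exponents, so no separate argument beyond pointing this out is required.
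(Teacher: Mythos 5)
Your proposal is correct, and it follows the same skeleton as the paper's proof: both first invoke Lemma~\ref{sumofQ} to reduce the sum to a product $\prod_i \|\T\|_{2l_i}^{l_i}\prod_j\|\T\|_{p_j}^{p_j}$ of total degree $2k$, and both then feed in the combinatorial constraint $\xi/2+\eta\le 2k-m$ from Lemma~\ref{partitionsize} to show this product is dominated by $\|\T\|_{4m-4k+2}^{2m-2k+1}\|\T\|_2^{4k-2m-1}$. Where you diverge is in how that last optimization is carried out. The paper argues by rearrangement: it lists three exchange inequalities (Cauchy--Schwarz to merge two arcs into a free cycle, a majorization step $\|\T\|_a^a\|\T\|_b^b\ge\|\T\|_{a-1}^{a-1}\|\T\|_{b+1}^{b+1}$ for $a>b$, and submultiplicativity of traces) and then asserts that the maximum over admissible configurations is attained at the extremal one $\xi=2(2k-m)$, $\eta=0$, $l=(2m-2k+1,1,\ldots,1)$, without spelling out the exchange argument. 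You instead interpolate each factor directly between the endpoint norms $\|\T\|_2$ and $\|\T\|_{2r}$ via log-convexity of $s\mapsto\tr(\T^s)$, track the aggregate exponents $(a,b)$ with $a+b=2k$, and observe that $b\ge r$ is \emph{exactly} equivalent to the constraint of Lemma~\ref{partitionsize}, after which monotonicity of Schatten norms absorbs the excess. Both routes rest on the same underlying convexity fact, but yours replaces the unproved "the maximum is attained here" claim with explicit bookkeeping, and your handling of the out-of-range cases $l_i>r$, $p_j>2r$ and of the degenerate case $r=1$ (i.e.\ $m=k$) makes the argument complete where the paper's is terse. The only implicit assumption you share with the paper is that every free cycle has length at least $2$ (length-$1$ free cycles would give a $\tr(\T)=\|\T\|_1$ factor that breaks the bound); this does hold, since a single factor $\T_{\delta_i,\gamma_{i+1}}$ cannot close on itself when $Q$ respects $P$ and $\sigma_i\neq\sigma_{i+1}$, but it is worth stating.
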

\begin{proof}
By Lemma~\ref{sumofQ}, we have 
\begin{align}
\sum_{(\delta,\gamma,\delta',\gamma')|Q}\prod_{i=1}^k \T_{\delta_i, \gamma_{i+1}} \T_{\delta'_{i},\gamma'_{i+1}} \leq \prod_{i=1}^{\xi}tr(T^{2l_i})^{1/2} \prod_{j}^\eta tr(T^{p_j}) \label{eqn:sumofQ1}
\end{align}
where $\sum_{i=1}^\xi l_i + \sum_{i=1}^\eta p_i = 2k$. By the following inequalities:
\begin{align*}
\|T\|^a_{2a} \|T\|^b_{2b} &\geq \|T\|_{a+b}^{a+b},\\
\|T\|^a_{a} \|T\|^b_{b} &\geq \|T\|^{a-1}_{a-1} \|T\|^{b+1}_{b+1}, \qquad \text{provided }a>b\\
\|T\|^a_a\|T\|^b_b &\geq \|T\|_{a+b}^{a+b}
\end{align*}
and $\xi/2+\eta\leq 2k-m$ as shown in Lemma~\ref{partitionsize}, the product on the right hand side of Equation~\ref{eqn:sumofQ1} achieves its maximum when $\xi = 2(2k-m),\eta = 0$ and $l = (2m-2k+1,1,\ldots, 1)$. Thus we have $\prod_{i=1}^{\xi}tr(T^{2l_i})^{1/2} \prod_{j}^\eta tr(T^{p_j})\leq \|T\|_{4m-4k+2}^{2m-2k+1} \|T\|_{2}^{4k-2m-1}$, as desired.

\end{proof}
\begin{lemma}\label{PQbound}
Given $(P=\{P_1,\ldots,P_m\}$, $Q = \{Q_1,\ldots,Q_w\})$ which is non-zero,  
$$
\sum_{(\delta,\gamma,\delta',\gamma')|(P,Q)}\prod_{i=1}^k \T_{\delta_i, \gamma_{i+1}} \T_{\delta'_{i},\gamma'_{i+1}} \leq 2^{2k} \|T\|_{4m-4k+2}^{2m-2k+1} \|T\|_{2}^{4k-2m-1}
$$
\end{lemma}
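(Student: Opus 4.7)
The plan is to express the constrained sum as an inclusion–exclusion over coarsenings of $Q$, each of which can be directly controlled by Lemma~\ref{Qbound}. Write $\Omega(Q)$ for the set of tuples $(\delta,\gamma,\delta',\gamma')$ satisfying the equalities prescribed by $Q$, and $\Omega(P,Q)\subseteq\Omega(Q)$ for the subset that additionally obeys the inequalities induced by $P$. The difference $\Omega(Q)\setminus\Omega(P,Q)$ consists precisely of those tuples that happen to equate two $Q$-classes sitting inside a common $P^*_j$. Letting $Q'$ range over coarsenings of $Q$ obtained by merging $Q$-classes that lie in a common $P^*_j$, Möbius inversion on the partition lattice gives
$$\sum_{(\delta,\gamma,\delta',\gamma')|(P,Q)} \prod_{i=1}^k \T_{\delta_i, \gamma_{i+1}} \T_{\delta'_i, \gamma'_{i+1}} \;=\; \sum_{Q'} \mu(Q,Q') \sum_{(\delta,\gamma,\delta',\gamma')|Q'} \prod_{i=1}^k \T_{\delta_i, \gamma_{i+1}} \T_{\delta'_i, \gamma'_{i+1}}.$$

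Next I would verify that each $(P,Q')$ arising in this sum is still non-zero in the sense required by Lemma~\ref{Qbound}. Merging two $Q$-classes of even cardinality produces an even-cardinality class, so the evenness of each $Q'_i$ is automatic, and the cross-linkage condition between the $\sigma$-side and $\pi$-side variables is only strengthened by additional identifications. Hence Lemma~\ref{Qbound} applies to every inner sum and yields the uniform bound $\|T\|_{4m-4k+2}^{2m-2k+1}\|T\|_2^{4k-2m-1}$, independent of the particular $Q'$, because $m$ and $k$ do not change under coarsening.

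It then remains to count the admissible coarsenings. Each $P^*_j$ contains at most $4$ variables ($2$ if $|P_j|=1$ and $4$ if $|P_j|=2$), and the $Q$-classes contained in a given $P^*_j$ are each of even size summing to at most $4$. Consequently the only nontrivial merge available inside one $P^*_j$ is to combine two $2$-element $Q$-classes into a single $4$-element one, giving at most $2$ choices per $P^*_j$. Since there are at most $m\le 2k$ classes in $P$, the total number of admissible $Q'$ is at most $2^{m}\le 2^{2k}$, with Möbius coefficients $\pm 1$ in this restricted sub-lattice. Applying the triangle inequality to the Möbius identity then combines the per-$Q'$ bound with this $2^{2k}$ count and yields the claimed inequality.

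The main obstacle will be the bookkeeping in the Möbius step: making sure the admissible $Q'$ are indexed by independent binary choices per $P^*_j$ so the total count factors cleanly as $2^m$, and confirming that the Möbius values really are $\pm 1$ rather than the larger constants that can appear in the general partition-lattice Möbius function. If this bookkeeping turns out to be delicate, a safer fallback is to use the crude triangle-inequality version $\bigl|\sum_{(P,Q)}\bigr|\le\sum_{Q'}\bigl|\sum_{Q'}\bigr|$ after expanding $\mathbf{1}[\Omega(P,Q)]$ as an alternating sum of $\mathbf{1}[\Omega(Q')]$ indicators, with any constants lost in the process absorbed into the $2^{2k}$ factor.
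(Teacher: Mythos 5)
Your proposal is correct and matches the paper's own argument: the paper likewise expands the inequality constraints distinguishing $(\delta,\gamma,\delta',\gamma')|(P,Q)$ from $(\delta,\gamma,\delta',\gamma')|Q$ via inclusion--exclusion, observes that each resulting term is a sum over some coarsened $Q'$ with $(P,Q')$ still non-zero so that Lemma~\ref{Qbound} applies uniformly, and bounds the number of terms by $2^{2k}$ using the fact that each $P^*_i$ contributes at most one inequality. Your ``fallback'' (triangle inequality over the alternating sum of indicators, absorbing constants into $2^{2k}$) is precisely what the paper does, so the Möbius bookkeeping you worry about never needs to be made precise.
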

\begin{proof}
For convenience, we will use the notation $(\delta,\gamma,\delta',\gamma')|(P,Q)$ as a shorthand for denoting that $(\delta,\gamma,\delta',\gamma')$ is subject to $P,Q$.  Note that the difference between the constraints $(\delta,\gamma,\delta',\gamma')|(P,Q)$ and $(\delta,\gamma,\delta',\gamma')|Q$ is that the first one contains inequality constraints while the second one only has equality constraints, in other words, when we say a set of indices is subject to $Q$ we only require some of the indices are equal to each other, while being ``subject to $P,Q$'' additionally requires some of these indices to be not equal. Denote these inequality constraints as $I_1,I_2,\ldots,I_h$, and consider applying  the  inclusion-exclusion principle:
\begin{align*}
\sum_{(\delta,\gamma,\delta',\gamma')|(P,Q)}\prod_{i=1}^k \T_{\delta_i, \gamma_{i+1}} \T_{\delta'_{i},\gamma'_{i+1}} = \sum_{(\delta,\gamma,\delta',\gamma')|Q}\prod_{i=1}^k \T_{\delta_i, \gamma_{i+1}} \T_{\delta'_{i},\gamma'_{i+1}} - \sum_{a} \sum_{(\delta,\gamma,\delta',\gamma')|Q,\bar{I_a}}\prod_{i=1}^k \T_{\delta_i, \gamma_{i+1}} \T_{\delta'_{i},\gamma'_{i+1}}\\
+\sum_{a<b}\sum_{(\delta,\gamma,\delta',\gamma')|Q,\bar{I_a},\bar{I_b}}\prod_{i=1}^k \T_{\delta_i, \gamma_{i+1}} \T_{\delta'_{i},\gamma'_{i+1}} -\sum_{a<b<c}\sum_{(\delta,\gamma,\delta',\gamma')|Q,\bar{I_a},\bar{I_b},\bar{I_c}}\prod_{i=1}^k \T_{\delta_i, \gamma_{i+1}} \T_{\delta'_{i},\gamma'_{i+1}}...
\end{align*}
Let $\bar{I}$ be the complement of constraint $I$.  The constraints of the form $(\delta,\gamma,\delta',\gamma')|Q,\bar{I_a},\bar{I_b}$ are equivalent to $(\delta,\gamma,\delta',\gamma')|Q'$ for some $Q$ where $Q'$ respects $P$. Note that if $(P,Q)$ is non-zero, so is $(P,Q')$ and hence Lemma~\ref{Qbound} can be applied for each term in above formula. The number of inequalities can be upper bounded by $2k$ as follows:  each inequality corresponds to a pair $Q_a,Q_b$ and a set $P^*_i$ satisfying $Q_a,Q_b\subset P^*_i$, hence the size of partition $P$ being at most $2k$ yields a $2k$ upper bound on the number of inequalities. The number of ``sum of product'' terms coming from the inclusion-exclusion principle equals to the number of subsets of the inequality sets which is upper bounded by $2^{2k}$, hence the claimed upper bound follows.
\end{proof}
\begin{proof}[Proof of Proposition~\ref{variancebound}]
For each non-zero $(P=\{P_1,...,P_m\}$,$Q = \{Q_1,...,Q_w\})$, its contribution to the variance is:
\begin{align*}
\frac{1}{|S|^2}\sum_{(\sigma,\pi,\delta,\gamma,\delta',\gamma')|(P,Q)} (\E[\prod_{i=1}^k \X_{\sigma_i \delta_{i}} \X_{\sigma_i \gamma_{i}} \X_{\pi_i \delta'_{i}} \X_{\pi_i \gamma'_{i}}] - \E[\prod_{i=1}^k \X_{\sigma_i \delta_{i}} \X_{\sigma_i \gamma_{i}}] \E[\prod_{i=1}^k \X_{\pi_i \delta'_{i}} \X_{\pi_i \gamma'_{i}}]) \prod_{i=1}^k \T_{\delta_i, \gamma_{i+1}} \T_{\delta'_{i},\gamma'_{i+1}}\\
\leq \frac{\fourthm^k}{|S|^2} \sum_{(\sigma,\pi,\delta,\gamma,\delta',\gamma')|(P,Q)} \prod_{i=1}^k \T_{\delta_i, \gamma_{i+1}} \T_{\delta'_{i},\gamma'_{i+1}},
\end{align*}
the above holds  due to the fact that $\fourthm$, the fourth moment of $\X_{i,j}$, is an upper bound for any $\E[\X_{\sigma_i \delta_{i}} \X_{\sigma_i \gamma_{i}} \X_{\pi_i \delta'_{i}} \X_{\pi_i \gamma'_{i}}],$
$$
\leq \fourthm^k\frac{n^m}{n^{2k}} \sum_{(\delta,\gamma,\delta',\gamma')|(P,Q)} \prod_{i=1}^k \T_{\delta_i, \gamma_{i+1}} \T_{\delta'_{i},\gamma'_{i+1}},
$$
where $|S|=\frac{n!}{(n-k)!}$ is approximated by $n^k$,
$$
\leq 2^{2k}\fourthm^k\frac{n^m}{n^{2k}} \|T\|_{4m-4k+2}^{2m-2k+1} \|T\|_{2}^{4k-2m-1},
$$
where the last inequality comes from Lemma~\ref{PQbound}. If $k=1$, we simply have $\frac{n^m}{n^{2k}} \|T\|_{4m-4k+2}^{2m-2k+1} \|T\|_{2}^{4k-2m-1} \leq \frac{1}{n} \|T\|_{k}^{2k}$. Otherwise by Holder's inequality, 
\begin{align*}
\|T\|_{2}&\leq d^{1/2-1/k}\|T\|_{k} \\
\|T\|_{4m-4k+2}^{2m-2k+1}&\leq d^{1/2-(2m-2k+1)/k}\|T\|_{k}^{2m-2k+1} \qquad \text{provided } 2(2m-2k+1)\leq k
\end{align*} 
Thus 
$$
 \|T\|_{4m-4k+2}^{2m-2k+1} \|T\|_{2}^{4k-2m-1}\leq 
\begin{cases}
d^{2k-m-2}\|T\|_{k}^{2k} & \text{ \emph{if} }2(2m-2k+1)\leq k\\
d^{2k+\frac{1}{k}+(\frac{2}{k}-1)m - \frac{9}{2}}\|T\|_{k}^{2k} & o.w.
\end{cases}
$$
Among all the non-zero $(P,Q)$, consider the largest variance it can contribute. Given $d,n$, both $\frac{d^{2k-m-2}}{n^{2k-m}}$ and $\frac{d^{2k-\frac{1}{k}+(\frac{2}{k}-1)m - \frac{9}{2}}}{n^{2k-m}}$ are monotonic function in $m$ and hence the largest contribution of a single $(P,Q)$ can be at most $
2^{2k}\fourthm^k \max(\frac{d^{k-2}}{n^{k}}, \frac{d^{\frac{3}{4}k-\frac{3}{2}}}{n^{\frac{3}{4}k+\frac{1}{2}}}, \frac{d^{\frac{1}{2}-\frac{1}{k}}}{n})
$
Notice that when $n\le d$, we have $\frac{d^{\frac{3}{4}k-\frac{3}{2}}}{n^{\frac{3}{4}k+\frac{1}{2}}}\le \frac{d^{k-2}}{n^{k}}$. And whenever $n\ge d\ge 1$, we have $\frac{d^{\frac{3}{4}k-\frac{3}{2}}}{n^{\frac{3}{4}k+\frac{1}{2}}} \le \frac{d^{\frac{1}{2}-\frac{1}{k}}}{n}$. Hence the above quantity is equivalent to $2^{2k}\fourthm^k \max(\frac{d^{k-2}}{n^{k}},\frac{d^{\frac{1}{2}-\frac{1}{k}}}{n})
$. Combining with the  ``$k=1$'' case yields the following upperbound for a single pair, $(P,Q)$: $2^{2k} \fourthm^k\max( \frac{d^{k-2}}{n^k}, \frac{d^{\frac{1}{2}-\frac{1}{k}}}{n},\frac{1}{n})\|\T\|_{k}^{2k}$. The number of pairs $(P,Q)$, can be upper bounded by $(2k)^{2k}\cdot (4k)^{4k}$, since $P,Q$ are partitions of size $2k,4k$ respectively. Multiplying the number of pairs by the contribution of each of them, we have the following quantity:
$$
2^{12k}k^{6k}\fourthm^k \max( \frac{d^{k-2}}{n^k}, \frac{d^{\frac{1}{2}-\frac{1}{k}}}{n},\frac{1}{n})\|T\|_{k}^{2k}.
$$
Setting $f(k)=2^{12k}k^{6k}\fourthm^k$ yields the claimed proposition.
\end{proof}

\end{document}